\renewcommand{\topfraction}{.99}
\renewcommand{\floatpagefraction}{.99}%
\newtheorem{thm}{Theorem}
\newtheorem{lem}{Lemma}
\newtheorem{cor}{Corollary}
\newtheorem{definition}{Definition}
\providecommand{\tabularnewline}{\\}
\providecommand{\citep}{\cite}
\providecommand{\tabularnewline}{\\}
\title{Zen-NAS: A Zero-Shot NAS for High-Performance Image Recognition}
\author{Ming Lin \thanks{Accepted by ICCV 2021. Author home page \url{https://minglin-home.github.io}}\\
Alibaba Group\\
Bellevue, Washington, USA\\
{\tt\small ming.l@alibaba-inc.com}
\and
Pichao Wang\\
Alibaba Group\\
Bellevue, Washington, USA\\
{\tt\small pichao.wang@alibaba-inc.com}
\and
Zhenhong Sun\\
Alibaba Group\\
Hangzhou, Zhejiang, China\\
{\tt\small zhenhong.szh@alibaba-inc.com}
\and
Hesen Chen\\
Alibaba Group\\
Hangzhou, Zhejiang, China\\
{\tt\small hesen.chs@alibaba-inc.com}
\and
Xiuyu Sun\\
Alibaba Group\\
Hangzhou, Zhejiang, China\\
{\tt\small xiuyu.sxy@alibaba-inc.com}
\and
Qi Qian\\
Alibaba Group\\
Bellevue, Washington, USA\\
{\tt\small qi.qian@alibaba-inc.com}
\and
Hao Li\\
Alibaba Group\\
Hangzhou, Zhejiang, China\\
{\tt\small lihao.lh@alibaba-inc.com}
\and
Rong Jin\\
Alibaba Group\\
Hangzhou, Zhejiang, China\\
{\tt\small jinrong.jr@alibaba-inc.com}
}
\begin{document}

\maketitle

\begin{abstract}
  Accuracy predictor is a key component in Neural Architecture Search (NAS) for ranking architectures.  Building a high-quality accuracy predictor usually costs enormous computation. To address this issue, instead of using an  accuracy predictor, we propose a novel zero-shot index dubbed Zen-Score to rank the architectures. The Zen-Score represents the network expressivity and positively correlates with the model accuracy. The calculation of Zen-Score only takes a few forward inferences through a randomly initialized network, without training network parameters.   Built upon the Zen-Score, we further propose a new NAS algorithm, termed as Zen-NAS, by maximizing the Zen-Score of the target network under given inference budgets. Within less than half GPU day, Zen-NAS is able to directly search high performance architectures in a data-free style. Comparing with previous NAS methods, the proposed Zen-NAS is magnitude times faster on multiple server-side and mobile-side GPU platforms with state-of-the-art accuracy on ImageNet. Searching and training code as well as pre-trained models are available from \url{https://github.com/idstcv/ZenNAS}.
\end{abstract}


\section{Introduction}

\begin{figure}[!]
  \centering
  \includegraphics[width=\linewidth]{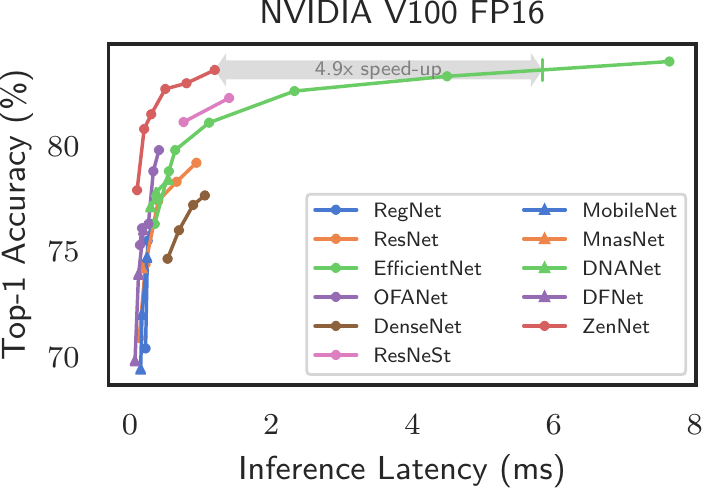}
  \caption{ZenNets top-1 accuracy v.s. inference latency (milliseconds per image) on ImageNet. Benchmarked on NVIDIA V100 GPU, half precision (FP16), batch size 64, searching cost 0.5 GPU day.} \label{fig:fig_latency_on_V100_pytorch_FP16}
\end{figure}

The design of high-performance deep neural networks is a challenging task. Neural Architecture Search (NAS) methods facilitate this progress. There are mainly two key components, architecture generator and accuracy predictor, in existing NAS algorithms. The generator proposes potential high-performance networks and the predictor predicts their accuracies. Popular generators include uniform sampling \citep{guoSinglePathOneShot2020}, evolutionary algorithm \citep{realRegularizedEvolutionImage2019} and reinforcement learning \citep{luoNeuralArchitectureOptimization2018}. The accuracy predictors include brute-force methods~\citep{realLargeScaleEvolutionImage2017,xieGeneticCNN2017,bakerDesigningNeuralNetwork2017,realRegularizedEvolutionImage2019}, predictor-based methods \cite{luoNeuralArchitectureOptimization2018,wenNeuralPredictorNeural2020,luoSemiSupervisedNeuralArchitecture2020} and one-shot methods~\citep{liuDARTSDifferentiableArchitecture2019,xuPCDARTSPartialChannel2019,zhouEcoNASFindingProxies2020, yangCARSContinuousEvolution2020,xieGeneticCNN2017,xieSNASStochasticNeural2018,cai_proxylessnas:_2019,zhangOvercomingMultiModelForgetting2020,wanFBNetV2DifferentiableNeural2020,caiOnceforAllTrainOne2020}.

A major challenge of building a high-quality accuracy predictor is the enormous computational cost. Both brute-forced methods and predictor-based methods require to train considerable number of networks. The one-shot methods reduce the training cost via parameter sharing. Albeit being more efficient than brute-forced methods, the one-shot methods still need to train a huge supernet which is still computationally expensive. Recent studies also find that nearly all supernet-based methods suffer from model interfering  \citep{caiOnceforAllTrainOne2020,yingNASBench101ReproducibleNeural2019} which degrades the quality of accuracy predictor \citep{sciutoEvaluatingSearchPhase2019}. In addition, since the supernet must be much larger than the target network, it is difficult to search large target networks under limited resources. These issues make the one-shot methods struggling in designing high-performance networks.

To solve these problems, instead of using an expensive accuracy predictor, we propose an almost zero-cost proxy, dubbed Zen-Score, for efficient NAS. The Zen-Score measures the expressivity \cite{pooleExponentialExpressivityDeep2016,maithraraghuExpressivePowerDeep2017} of a deep neural network and positively correlates with the model accuracy. The computation of Zen-Score only takes a few forward inferences on randomly initialized network using random Gaussian inputs, making it extremely fast, lightweight and data-free. Moreover, Zen-Score deals with the scale-sensitive problem caused by Batch Normalization (BN)\cite{bartlettSpectrallynormalizedMarginBounds2017,neyshaburRoleOverparametrizationGeneralization2018}, making it widely applicable to real-world problems. 


Based on Zen-Score, we design a novel Zen-NAS algorithm. It maximizes the Zen-Score of the target network within inference budgets. Zen-NAS is a \textbf{Zero-Shot} method since it does not optimize network parameters during search \footnote{Obviously, the final searched architecture must be trained on the target dataset before deployment.}. We apply Zen-NAS to search optimal networks under various inference budgets, including inference latency, FLOPs (Floating Point Operations) and model size, and achieve the state-of-the-art (SOTA) performance on CIFAR-10/CIFAR-100/ImageNet, outperforming previous human-designed and NAS-designed models by a large margin. Zen-NAS is the first zero-shot method that achieves SOTA results on large-scale full-resolution ImageNet-1k dataset \cite{dengImageNetLargescaleHierarchical2009} by the time of writing this work \cite{mellorNeuralArchitectureSearch2021,abdelfattahZeroCostProxiesLightweight2021,chen2021neural}.

Our approach is inspired by recent advances in deep learning studies \citep{montufarNumberLinearRegions2014,danielyDeeperUnderstandingNeural2016,liangWhyDeepNeural2016a,pooleExponentialExpressivityDeep2016,cohenInductiveBiasDeep2017,luExpressivePowerNeural2017,maithraraghuExpressivePowerDeep2017,rolnickPowerDeeperNetworks2018,serraBoundingCountingLinear2018,haninComplexityLinearRegions2019,xiongNumberLinearRegions2020} which show that deep models are superior than shallow ones since deep models are more expressive under the same number of neurons. According to the bias-variance trade-off in statistical learning theory \cite{koltchinskii2011oracle}, increasing the expressivity of a deep network implies smaller bias error. When the size  $n$ of training dataset is large enough, the variance error will diminish as $\mathcal{O}(1/\sqrt{n}) \rightarrow 0$. This means that the generalization error is dominated by the bias error which could be reduced by more expressive networks. These theoretical results are well-aligned with large-scale deep learning practices\citep{nguyenWideDeepNetworks2021,touvronTrainingDataefficientImage2021,phamMetaPseudoLabels2021}. 

We summarize our main contributions as follows:
\begin{compactitem}
 \item We propose a novel zero-shot proxy Zen-Score for NAS. The proposed Zen-Score is computationally efficient and is proved to be scale-insensitive in the present of BN. A novel NAS algorithm termed Zen-NAS is proposed to search for networks with maximal Zen-Score  in the design space.
 \item Within half GPU day, the ZenNets designed by Zen-NAS achieve up to $83.6\%$ top-1 accuracy on ImageNet that is as accurate as EfficientNet-B5 with inference speed magnitude times faster on multiple hardware platforms. To our best knowledge, \textbf{Zen-NAS is the first zero-shot method that outperforms training-based methods on ImageNet.}
\end{compactitem}

\section{Related Work}
\label{sec:related-work}

We briefly review the related works. For comprehensive review of NAS, the monograph \citep{ren_comprehensive_2020} is referred to. 

In the early days of NAS, brute-force methods are adopted to search architectures by directly training a network to obtain its accuracy. For example, the AmoebaNet \citep{realRegularizedEvolutionImage2019} conducts structure search on CIFAR-10 using Evolutionary Algorithm (EA) \citep{krizhevskyLearningMultipleLayers2009} and then transfers the structure to ImageNet. It takes about 3150 GPU days of searching and achieves $74.5\%$ top-1 accuracy on ImageNet. Inspired by the success of AmoebaNet, many EA-based NAS algorithms are proposed to improve the searching efficiency, such as EcoNAS \citep{zhouEcoNASFindingProxies2020}, CARS \citep{yangCARSContinuousEvolution2020}, GeNet \citep{xieGeneticCNN2017} and  PNAS \citep{liuProgressiveNeuralArchitecture2018}. These methods search on down-sampled images or reduce the number of queries. Reinforced Learning is another popular generator (sampler) in NAS, including NASNet \citep{zophLearningTransferableArchitectures2018}, Mnasnet \citep{tan_mnasnet:_2019} and MetaQNN \citep{bakerDesigningNeuralNetwork2017}. 


Both EA and RL based methods require lots of network training. To address this problem, the predictor-based methods encode architectures into high dimensional vectors. A number of architectures are trained to obtain their accuracies \citep{luoNeuralArchitectureOptimization2018,luoSemiSupervisedNeuralArchitecture2020} and then are used as training data for learning accuracy predictor. The one-shot methods further reduce the training cost by training a big supernet. This framework is widely applied in many efficient NAS methods, including DARTS \citep{liuDARTSDifferentiableArchitecture2019}, SNAS \citep{xieSNASStochasticNeural2018}, PC-DARTS \citep{xuPCDARTSPartialChannel2019}, ProxylessNAS \citep{cai_proxylessnas:_2019}, GDAS \citep{zhangOvercomingMultiModelForgetting2020}, FBNetV2 \citep{wanFBNetV2DifferentiableNeural2020},  DNANet \citep{liBlockwiselySupervisedNeural2020}, Single-Path One-Shot NAS \citep{guoSinglePathOneShot2020}.

Although the above efforts have greatly reduced the searching cost, their top-1 accuracies on ImageNet are below $80.0\%$. The authors of OFANet \citep{caiOnceforAllTrainOne2020} noted that weight-sharing suffers from model interfering. They propose a progressive-shrinking strategy to address this issue. The resultant OFANet achieves $80.1\%$ accuracy after searching for 51.6 GPU days. EfficientNet \citep{tanEfficientNetRethinkingModel2019} is another high precision network designed by NAS. It takes about 3800 GPU days to search EfficientNet-B7 whose accuracy is $84.4\%$. In comparison, Zen-NAS achieves $83.6\%$ accuracy while using magnitude times fewer resources.



A few on-going works are actively exploring zero-shot proxies for efficient NAS. However, these efforts have not delivered the SOTA results. In a recent empirical study, \cite{abdelfattahZeroCostProxiesLightweight2021} evaluates the performance of six zero-shot pruning proxies on NAS benchmark datasets. The synflow \cite{tanakaPruningNeuralNetworks2020} achieves best results in their experiments. We compare synflow with Zen-Score under fair settings and show that Zen-Score achieves $+1.1\%$ better accuracy on CIFAR-10 and $+8.2\%$ better accuracy on CIFAR-100. The concurrent work TE-NAS~\cite{chen2021neural} uses  a combination of NTK-score and network expressivity as NAS proxy. Specifically, the TE-NAS estimates the expressivity by directly counting the number of active regions $R_N$ on randomly sampled images. In comparison, Zen-Score not only considers the distribution of linear regions but also considers the Gaussian complexity of linear classifier in each linear region, giving a more accurate estimation of network expressivity. 
The computation of Zen-Score is 20 to 28 times faster than TE-NAS score. In terms of performance, TE-NAS achieves $74.1\%$ top-1 accuracy on ImageNet, lagging behind SOTA baselines. Zen-NAS achieves $+9.5\%$ better accuracy within similar searching cost. Another concurrent work NASWOT \cite{mellorNeuralArchitectureSearch2021a} computes the architecture score according to the kernel matrix of binary activation patterns between mini-batch samples. It achieves similar top-1 accuracies on CIFAR-10/CIFAR-100 as TE-NAS.

It is important to distinguish Zen-NAS from unsupervised NAS (UnNAS) \cite{liuAreLabelsNecessary2020a}. In UnNAS, the network is trained to predict the pre-text tasks therefore it still requires parameter training. In Zen-NAS, no parameter training is required during the search.

In this work, we mostly focus on the vanilla network space described in the next section. Several previous works design networks in a more general irregular design space, such as DARTS \cite{liuDARTSDifferentiableArchitecture2019} and RandWire \cite{xieExploringRandomlyWired2019}. Zen-NAS cannot be applied to these irregular design spaces since Zen-Score is not mathematically well-defined in irregular design spaces. In practice, the vanilla network space is a large enough space which covers most SOTA networks, including but not limited to ResNet, MobileNet and EfficientNet. Particularly, Zen-NAS outperforms DARTS-based methods by a significant margin on ImageNet.

\section{Expressivity of Vanilla Network}
\label{sec:Expressivity-of-Vanilla-Network}

In this section, we discuss how to measure the expressivity of \emph{vanilla convolutional neural network}  (VCNN) family, an ideal prototype for theoretical studies. We show that the expressivity of a network can be efficiently measured by its expected Gaussian complexity, or $\Phi$-score for short. In the next section, we further show that for very deep networks, directly computing $\Phi$-score incurs numerical overflow. This overflow can be addressed by adding BN layers and then re-scaling the $\Phi$-score by a constant. This new score is named as Zen-Score in Section \ref{sec:zen-score-and-zen-nas}.

\subsection{Notations}

An $L$-layer neural network is formulated as a function $f: \mathbb{R}^{m_0} \rightarrow \mathbb{R}^{m_{L}}$ where $m_0$ is the input dimension and $m_L$ is the output dimension. $\boldsymbol{x}_0 \in \mathbb{R}^{m_0}$ denotes the input image. Correspondingly, the output feature map of the $t$-th layer is denoted by $\boldsymbol{x}_t$.  The $t$-th layer has $m_{t-1}$ input
channels and $m_{t}$ output channels. The convolutional kernel is $\boldsymbol{\theta}_{t}\in\mathbb{R}^{m_{t}\times m_{t-1}\times k\times k}$. The image resolution is $H\times W$. The mini-batch size is $B$. The Gaussian distribution of mean $\mu$ and variance $\sigma^2$ is denoted by $\mathcal{N}({\mu}, {\sigma})$. 

\subsection{Vanilla Convolutional Neural Network}

The \textbf{vanilla convolutional neural network} (VCNN) is a widely used prototype in theoretical  studies \cite{pooleExponentialExpressivityDeep2016,serraBoundingCountingLinear2018,haninComplexityLinearRegions2019}. The main body of a vanilla network is stacked by multiple convolutional layers. Each layer consists of one convolutional operator followed by RELU activation. All other components are removed from the backbone, including residual link and Batch Normalization. After the main body, global average pool layer (GAP) reduces the feature map resolution to 1x1, followed by a fully-connected layer. At the end a soft-max operation converts the network output to label prediction. Given the input $\boldsymbol{x}$ and network parameters $\boldsymbol{\theta}$,  $f(\boldsymbol{x} | \boldsymbol{\theta})$ refers to the output of the main body of the network, that is the feature map before the GAP layer (pre-GAP layer). We measure the network expressivity with respect to pre-GAP because it contains most of the information we need.

Modern networks use auxiliary structures such as residual link , Batch Normalization and self-attention block \cite{huSqueezeandExcitationNetworks2018}. These structures will not significantly affect the representation power of networks. For example, BN layer can be merged into convolutional kernel via kernel fusion. Self-attention linearly combines existing feature maps hence spans the same subspace. Therefore, these structures are temporarily removed  when measuring network expressivity and then added back in training and testing stages. For non-RELU activation functions, they are replaced by RELU in a similar way. These simple modifications make our method applicable to a majority of non-VCNN models widely used in practice. In fact, nearly all single-branch feed-forward networks can be converted to vanilla network by the aforementioned modifications.

\subsection{$\Phi$-Score as Proxy of Expressivity}

Given a VCNN $f(\boldsymbol{x} | \boldsymbol{\theta})$,  we propose a novel numerical index $\Phi$-score as a proxy of its expressivity.  The definition of $\Phi$-score is inspired by recent theoretical studies on deep network expressivity \cite{serraBoundingCountingLinear2018,xiongNumberLinearRegions2020}. A key observation in these studies is that a vanilla network can be decomposed into piece-wise linear functions conditioned on  activation patterns \cite{montufarNumberLinearRegions2014}:

\begin{lem}[\cite{montufarNumberLinearRegions2014,maithraraghuExpressivePowerDeep2017}]
  \label{prop:vcnn-is-linear}
  Denote the activation pattern of the $t$-th layer as $\mathcal{A}_t(\boldsymbol{x})$. Then for any vanilla network $f(\cdot)$,
  \begin{align}
     \label{eq:all-vcnn-are-linear-functions}
     f(\boldsymbol{x}|\boldsymbol{\theta}) = \sum_{\mathcal{S}_i \in \mathcal{S}} \mathcal{I}_{\boldsymbol{x}}(\mathcal{S}_i) \boldsymbol{W}_{\mathcal{S}_i} \boldsymbol{x}
  \end{align}
  where $\mathcal{S}_i$ is a convex polytope depending on $\{\mathcal{A}_1(\boldsymbol{x}), \mathcal{A}_2(\boldsymbol{x}), \cdots, \mathcal{A}_L(\boldsymbol{x})\}$; $\mathcal{S}$ is a finite set of convex polytopes in $\mathbb{R}^{m_0}$;  $\mathcal{I}_{\boldsymbol{x}}(\mathcal{S}_i) = 1$ if $\boldsymbol{x} \in \mathcal{S}_i$ otherwise zero; $\boldsymbol{W}_{\mathcal{S}_i}$ is a coefficient matrix of size $\mathbb{R}^{m_L \times m_0}$.
\end{lem}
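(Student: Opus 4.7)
The plan is to prove the decomposition by induction on the layer index $t$, exploiting the fact that convolution is a linear operator and that RELU acts as multiplication by a diagonal $0/1$ mask once its activation pattern is fixed. First I would write $\mathrm{RELU}(\boldsymbol{z}) = \mathrm{diag}(\mathcal{A}_t(\boldsymbol{x})) \boldsymbol{z}$, where $\mathcal{A}_t(\boldsymbol{x}) \in \{0,1\}^{m_t H W}$ records the signs of the pre-activations at layer $t$, and express the convolution with kernel $\boldsymbol{\theta}_t$ as left multiplication by a (sparse) matrix $\boldsymbol{C}_t$ acting on the vectorised feature map.

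For the inductive step, the base case $t=0$ is $\boldsymbol{x}_0 = \boldsymbol{I}\boldsymbol{x}$. Assuming $\boldsymbol{x}_{t-1} = \boldsymbol{M}_{t-1} \boldsymbol{x}$ with $\boldsymbol{M}_{t-1}$ determined only by the tuple $(\mathcal{A}_1,\ldots,\mathcal{A}_{t-1})$, one obtains $\boldsymbol{x}_t = \mathrm{diag}(\mathcal{A}_t(\boldsymbol{x})) \boldsymbol{C}_t \boldsymbol{M}_{t-1} \boldsymbol{x}$. Iterating up to $t=L$ produces a single matrix $\boldsymbol{W}(\mathcal{A}_1,\ldots,\mathcal{A}_L) := \mathrm{diag}(\mathcal{A}_L) \boldsymbol{C}_L \cdots \mathrm{diag}(\mathcal{A}_1) \boldsymbol{C}_1 \in \mathbb{R}^{m_L \times m_0}$ such that $f(\boldsymbol{x}|\boldsymbol{\theta}) = \boldsymbol{W}(\mathcal{A}_1,\ldots,\mathcal{A}_L)\,\boldsymbol{x}$ on every input whose activation patterns take the prescribed values.

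Next I would identify the polytopes $\mathcal{S}_i$ and verify convexity. For a fixed tuple $(\mathcal{A}_1,\ldots,\mathcal{A}_L)$, the set of inputs producing it is an intersection of halfspaces: by the inductive representation each coordinate of the pre-activation at layer $t$, namely $(\boldsymbol{C}_t \boldsymbol{M}_{t-1} \boldsymbol{x})_j$, is a linear function of $\boldsymbol{x}$, and requiring its sign to match the prescribed entry of $\mathcal{A}_t$ is a single halfspace constraint on $\boldsymbol{x}$. The intersection over all coordinates and all $L$ layers is a convex polytope $\mathcal{S}_i$. Only finitely many binary patterns exist, so $\mathcal{S}$ is finite; its members tile $\mathbb{R}^{m_0}$ with pairwise disjoint interiors. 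Setting $\boldsymbol{W}_{\mathcal{S}_i}$ to be the matrix $\boldsymbol{W}(\mathcal{A}_1,\ldots,\mathcal{A}_L)$ corresponding to the label of $\mathcal{S}_i$ and summing with the indicators $\mathcal{I}_{\boldsymbol{x}}(\mathcal{S}_i)$ delivers the claimed identity.

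The main obstacle is notational rather than conceptual: one must be careful that the halfspaces cutting out $\mathcal{S}_i$ really are expressible as linear inequalities in $\boldsymbol{x}$ (not in intermediate features), which is what the inductive linear representation guarantees, and that boundary points with some zero pre-activation are assigned to a unique polytope by a fixed tie-breaking convention so that $\{\mathcal{I}_{\boldsymbol{x}}(\mathcal{S}_i)\}$ genuinely partitions $\mathbb{R}^{m_0}$. Since the result is already established in Montufar et al.\ and Raghu et al.\ for generic feed-forward ReLU networks, the VCNN case amounts to a translation of the argument to this specific architecture class.
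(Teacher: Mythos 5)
Your argument is correct and is the standard induction-on-layers proof: fix the activation pattern, replace each ReLU by a $0/1$ diagonal mask, compose the resulting linear maps into a single matrix $\boldsymbol{W}_{\mathcal{S}_i}$, and observe that the set of inputs realizing a given pattern is an intersection of halfspaces cut by linear functionals of $\boldsymbol{x}$ (hence a convex polytope), with a tie-breaking convention on boundary points so that the indicators $\mathcal{I}_{\boldsymbol{x}}(\mathcal{S}_i)$ genuinely partition $\mathbb{R}^{m_0}$. The paper supplies no proof of this lemma, importing it from Mont\'ufar et al.\ and Raghu et al., and your derivation is essentially the argument those references use, specialized to the vanilla convolutional architecture.
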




According to Lemma \ref{prop:vcnn-is-linear}, any vanilla network is an ensemble of piece-wise linear functions segmented by convex polytopes $\mathcal{S}=\{\mathcal{S}_1, \mathcal{S}_2, \cdots, \mathcal{S}_{|\mathcal{S}|}\}$ where $|\mathcal{S}|$ is the number of linear-regions (see Figure 2 in \cite{haninComplexityLinearRegions2019}. The number of linear regions $|\mathcal{S}|$ has been used as expressivity proxy in several theoretical studies \cite{montufarNumberLinearRegions2014,serraBoundingCountingLinear2018,haninComplexityLinearRegions2019,zhangEmpiricalStudiesProperties2019,xiongNumberLinearRegions2020}. However, directly using $|\mathcal{S}|$ incurs two limitations: a) Counting $|\mathcal{S}|$ for large network is computationally infeasible; b) The representation power of each $W_{\mathcal{S}_i}$ is not considered in the proxy. The first limitation is due to fact that the  number of linear regions grow exponentially for large networks \cite{montufarNumberLinearRegions2014,xiongNumberLinearRegions2020}. To understand the second limitation, we recall the Gaussian complexity \cite{kakadeComplexityLinearPrediction2008} of linear classifiers:

\begin{lem} [\cite{kakadeComplexityLinearPrediction2008}]
  \label{prop:gaussian-complexity-of-linear-classifier}
  For linear function class $\{f: f(X) = W X \ \mathrm{s.t.} \ \|W\|_F \leq G \} $, its Gaussian complexity is upper bounded by $\mathcal{O}(G)$.
\end{lem}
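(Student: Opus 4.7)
The plan is to reduce the Gaussian complexity to a second-moment computation via Frobenius-norm duality. Recall that, for a vector-valued function class $\mathcal{F}$ evaluated at sample points $x_1,\dots,x_n \in \mathbb{R}^{m_0}$, the empirical Gaussian complexity is
\begin{equation*}
\mathcal{G}(\mathcal{F}) \;=\; \mathbb{E}_g \sup_{f \in \mathcal{F}} \frac{1}{n} \sum_{i=1}^{n} \sum_{j=1}^{m_L} g_{ij}\, f_j(x_i),
\end{equation*}
where the $g_{ij}$ are i.i.d.\ standard normals indexed by sample $i$ and output coordinate $j$. Writing $f(x) = Wx$ and introducing the matrix-valued Gaussian $M := \sum_{i} g_i x_i^{\top}$ with $g_i = (g_{ij})_{j}$, the inner double sum collapses into a single Frobenius inner product $\langle W, M \rangle_F$. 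This is the key algebraic rewriting that makes the bound a one-liner.

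First I would invoke Cauchy-Schwarz in the Frobenius inner product to obtain
\begin{equation*}
\sup_{\|W\|_F \le G} \langle W, M \rangle_F \;=\; G\,\|M\|_F,
\end{equation*}
which cleanly decouples the supremum over $W$ from the Gaussian randomness. Next, I would apply Jensen's inequality to replace $\mathbb{E}\|M\|_F$ by the more tractable $\sqrt{\mathbb{E}\|M\|_F^2}$. A direct calculation, using independence of the entries of $g$, gives $\mathbb{E} M_{jl}^2 = \sum_i (x_i)_l^2$, and summing over $(j,l)$ yields $\mathbb{E}\|M\|_F^2 = m_L \sum_i \|x_i\|^2$. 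Chaining the three steps produces
\begin{equation*}
\mathcal{G}(\mathcal{F}) \;\le\; \frac{G}{n}\,\sqrt{m_L \sum_{i=1}^{n} \|x_i\|^2},
\end{equation*}
which under a standard bounded-data assumption $\|x_i\| \le X_{\max}$ simplifies to $G\,X_{\max}\sqrt{m_L/n}$, i.e.\ the claimed $\mathcal{O}(G)$ bound with the constant absorbing $X_{\max}$, $m_L$, and $1/\sqrt{n}$.

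The only point that needs care — rather than a true obstacle — is fixing the right convention for Gaussian complexity of vector-valued function classes: if one indexes the Gaussians only by sample and not by output coordinate, the clean self-duality of $\|\cdot\|_F$ is lost, forcing a detour through operator or nuclear norms with strictly weaker constants. Once the pairing is chosen so that the sup takes the form $\sup_{\|W\|_F \le G} \langle W, M \rangle_F$, the rest is essentially two lines of Cauchy-Schwarz plus Jensen, with no deeper probabilistic input required.
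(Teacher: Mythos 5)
Your argument is correct, and the paper itself offers no proof of this lemma --- it is quoted directly from Kakade et al.\ (2008), where the $\ell_2$/Frobenius case is handled by exactly the computation you give (Cauchy--Schwarz in the Frobenius inner product followed by Jensen and a second-moment calculation), albeit packaged there inside a more general strong-convexity/Fenchel-duality framework. Your explicit note that the $\mathcal{O}(G)$ absorbs the dependence on $n$, $m_L$, and the data norm is the right reading of the informal statement, so nothing further is needed.
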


In other words, Lemma \ref{prop:gaussian-complexity-of-linear-classifier} says that the expressivity of linear function class measured by Gaussian complexity is controlled by the Frobenius norm of its  parameter matrix $W$. Inspired by Lemma \ref{prop:vcnn-is-linear} and Lemma \ref{prop:gaussian-complexity-of-linear-classifier}, we define the following index for measuring  network expressivity :

\begin{definition}[$\Phi$-score for VCNN]
  \label{def:Expected-Gaussian-Complexity-for-VCNN}
  The expected Gaussian complexity for a vanilla network $f(\cdot)$ is defined by
  \begin{align}
     \Phi(f) =& \log \mathbb{E}_{\boldsymbol{x},\boldsymbol{\theta}} \left \{ 
        \sum_{\mathcal{S}_i \in \mathcal{S}} \mathcal{I}_{\boldsymbol{x}}(\mathcal{S}_i) \| \boldsymbol{W}_{\mathcal{S}_i} \|_F \right \} \\
        =& \log \mathbb{E}_{\boldsymbol{x},\boldsymbol{\theta}} \| \nabla_{\boldsymbol{x}} f(\boldsymbol{x} | \boldsymbol{\theta}) \|_F.
  \end{align}
\end{definition}

In Definition \ref{def:Expected-Gaussian-Complexity-for-VCNN}, we measure the network expressivity by its expected Gaussian complexity, or $\Phi$-score for short. Since any VCNN is ensemble of linear functions, it is nature to measure its expressivity by averaging the Gaussian complexity of linear function in each linear region. To this end, we randomly sample $\boldsymbol{x}$ and $\boldsymbol{\theta}$ from some prior distributions and then average $\|W_{\mathcal{S}_i}\|_F$. This is equivalent to compute the expected gradient norm of $f$ with respect to input $\boldsymbol{x}$. In our implementation, $\boldsymbol{x}$ and $\boldsymbol{\theta}$ are sampled from standard Gaussian distribution which works well in practice. It is important to note that in $\Phi$-score, only the gradient of $\boldsymbol{x}$ rather than $\boldsymbol{\theta}$ is involved. This is different to zero-cost proxies in \cite{abdelfattahZeroCostProxiesLightweight2021} which compute gradient of $\boldsymbol{\theta}$ in their formulations. These proxies  measure the trainability  \cite{wangPickingWinningTickets2019,chen2021neural} instead of the expressivity of networks.

\section{Zen-Score and Zen-NAS}
\label{sec:zen-score-and-zen-nas}

\begin{figure*}[!]
  \begin{minipage}{0.32\linewidth}
    \begin{center}
      \includegraphics[width=\linewidth]{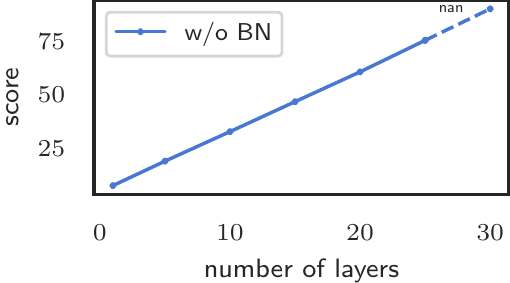}\\
      (a) $\Phi$-score of $\mathrm{P}_{\mathrm{w/oBN}}$ networks
    \end{center}
  \end{minipage}
  \hfil
  \begin{minipage}{0.32\linewidth}
    \centering
    \begin{center}
      \includegraphics[width=\linewidth]{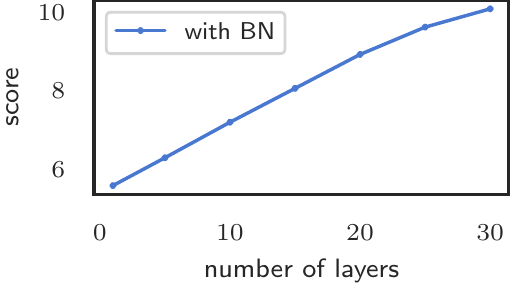}\\
      (b) $\Phi$-score of $\mathrm{P}_{\mathrm{BN}}$ networks
    \end{center}    
  \end{minipage}
  \hfil
  \begin{minipage}{0.32\linewidth}
    \centering
    \begin{center}
      \includegraphics[width=\linewidth]{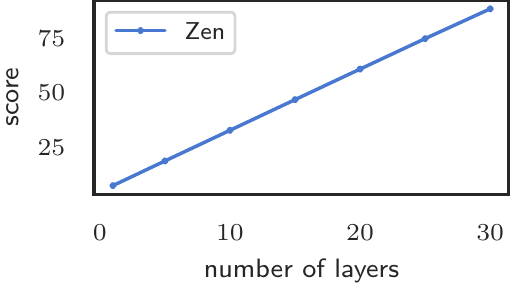}\\
      (c) Zen-Score of $\mathrm{P}_{\mathrm{BN}}$ networks
    \end{center}    
  \end{minipage} \\
  \begin{minipage}{0.32\linewidth}    
    \begin{center}
      \includegraphics[width=\linewidth]{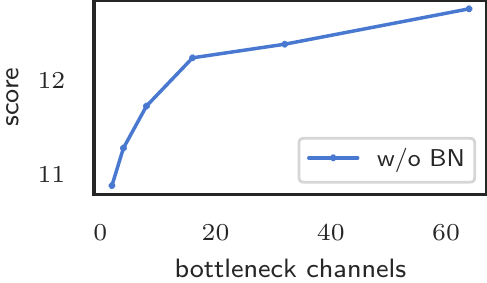}\\
      (d) $\Phi$-score of $\mathrm{Q}_{\mathrm{w/oBN}}$ networks
    \end{center}    
  \end{minipage}
  \hfil 
  \begin{minipage}{0.32\linewidth}
    \centering
    \begin{center}
      \includegraphics[width=\linewidth]{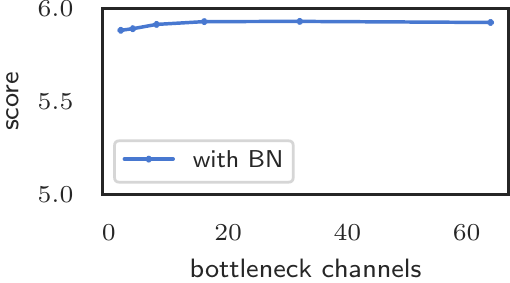}\\
      (e) $\Phi$-score of $\mathrm{Q}_{\mathrm{BN}}$ networks
    \end{center}    
  \end{minipage}
  \hfil 
  \begin{minipage}{0.32\linewidth}
    \centering
    \begin{center}
      \includegraphics[width=\linewidth]{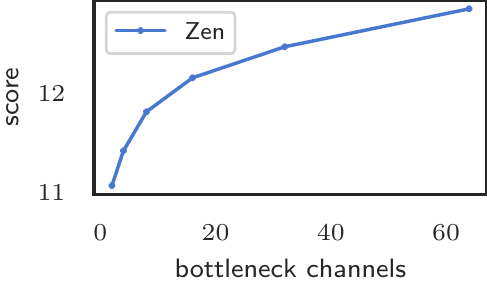}\\
      (f) Zen-Score of $\mathrm{Q}_{\mathrm{BN}}$ networks
    \end{center}    
  \end{minipage}      
  \caption{$\Phi$-scores and Zen-Scores of networks, with different depths and bottleneck channels.}
  \label{fig:delta-zen-score-vs-channel-layer}
\end{figure*}

In this section, we show that directly computing $\Phi$-score for very deep networks incurs numerical overflow due to the gradient explosion without BN layers. The gradient explosion could be resolved by adding BN layers back but the $\Phi$-score will be adaptively re-scaled, making it difficult to compare $\Phi$-score between different networks. The same phenomenon has been known as `scale-sensitive' problem in deep learning complexity analysis \cite{bartlettSpectrallynormalizedMarginBounds2017,neyshaburRoleOverparametrizationGeneralization2018}. To address this open question, we propose to re-scale the $\Phi$-score one more time by the product of BN layers' variance statistics. This new score is denoted as Zen-Score in order to distinguish from the original $\Phi$-score. The Zen-Score is proven to be scale-insensitive. Finally, we present Zen-NAS algorithm built on Zen-Score and demonstrate its effectiveness in the next section.

\subsection{Overflow and BN-rescaling}

When computing $\Phi$-score for very deep vanilla networks, numerical overflow incurs almost surely. This is because  BN layers are removed from the network and the magnitude of network output grows exponentially along depth. To see this, we construct a set of vanilla networks $P_\mathrm{w/oBN}$ without BN layers. All networks have the same widths but different depths. Figure~\ref{fig:delta-zen-score-vs-channel-layer}(a) plots the $\Phi$-scores for $P_\mathrm{w/oBN}$. After 30 layers, $\Phi$-score overflows. To address the overflow, we add BN layers back and compute the $\Phi$-scores in  Figure~\ref{fig:delta-zen-score-vs-channel-layer}(b). This time the overflow  dismisses but the $\Phi$-scores are scaled-down by a large factor. This phenomenon is termed as BN-rescaling.

To demonstrate that BN-rescaling disturbs architecture ranking, we construct another two set of networks, $Q_\mathrm{w/oBN}$ and $Q_\mathrm{BN}$, with and without BN respectively. All networks have two layers and have the same number of input and final output channels. The number of bottleneck channels, that is the width of the hidden layer, varies from 2 to 60. The corresponding $\Phi$-score curves are plotted in Figure~\ref{fig:delta-zen-score-vs-channel-layer}(d) and (e) respectively. When BN layer is presented, the $\Phi$-score becomes nearly constant for all networks. This will confuse the architecture generator and drive the search to a wrong direction.

\subsection{From $\Phi$-Score to Zen-Score}

\begin{algorithm}[!]
  \caption{Zen-Score}
  \label{alg:zen-score}
\begin{algorithmic}[1]

\REQUIRE Network $\mathcal{F}(\cdot)$ with pre-GAP feature map $f(\cdot)$; $\alpha=0.01$.

\ENSURE Zen-Score $\mathrm{Zen}(\mathcal{F})$.

\STATE Remove all residual links in $\mathcal{F}$.

\STATE Initialize all neurons in $\mathcal{F}$ by $\mathcal{N}(0,1)$.

\STATE Sample $\boldsymbol{x}, \boldsymbol{\epsilon} \sim \mathcal{N}(0,1)$.

\STATE Compute $\Delta \triangleq \mathbb{E}_{\boldsymbol{x}, \boldsymbol{\epsilon}} \| f(\boldsymbol{x}) - f(\boldsymbol{x} + \alpha \boldsymbol{\epsilon})\|_F $ \, .

\STATE For the $i$-th BN layer with $m$ output channels, compute $\bar{\sigma}_i=\sqrt{\sum_j \boldsymbol{\sigma}_{i,j}^2/m}$ where $\boldsymbol{\sigma}_{i,j}$ is the mini-batch standard deviation statistic of the $j$-th channel in BN.

\STATE $\mathrm{Zen}(F) \triangleq \log(\Delta) + \sum_i \log(\bar{\sigma}_i)$.

\end{algorithmic}
\end{algorithm}

\begin{figure*}[!]
  \begin{center}
    \includegraphics[width=\linewidth]{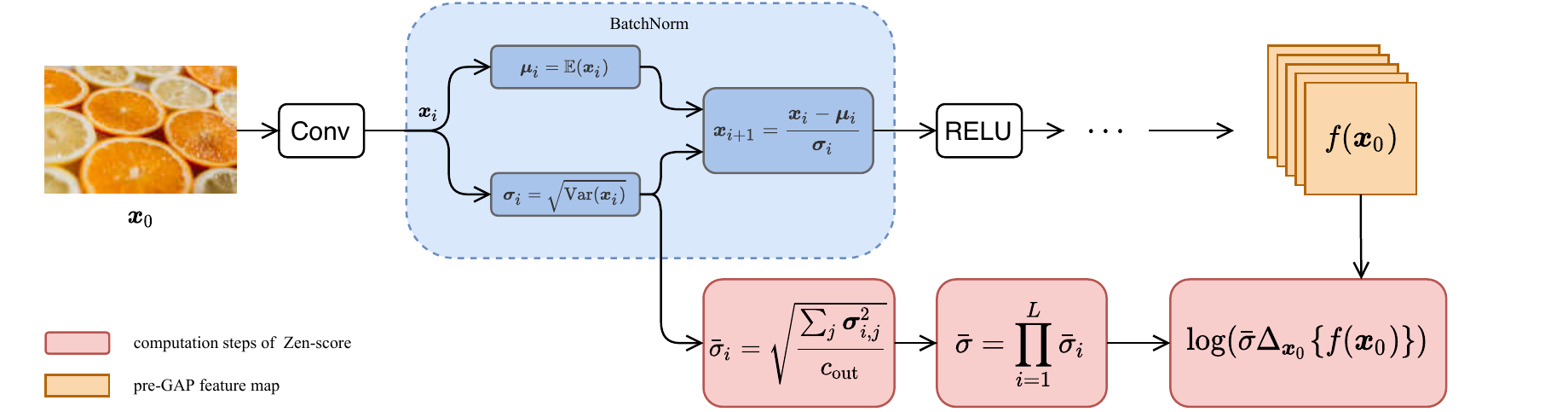}  
  \end{center}
\caption{Zen-Score computational graph. $\boldsymbol{x}_0$ is one mini-batch of input images. For each BN layer, we extract its mini-batch deviation parameter $\boldsymbol{\sigma}_i$. $\Delta_{\boldsymbol{x}_0}\{f(\boldsymbol{x}_0)\}$ is the differential of pre-GAP feature map $f(\boldsymbol{x}_0)$ with respect to $\boldsymbol{x}_0$.}
\label{fig:fig_compute_zen_score}
\end{figure*}

In the above subsection, we showed that BN layer is necessary to prevent numerical overflow in computing $\Phi$-score but comes with the side-effect of re-scaling. In this subsection, we design a new Zen-Score which is able to calibrate re-scaling when BN layer is present. The computation  of Zen-Score is described in Algorithm \ref{alg:zen-score}. Figure \ref{fig:fig_compute_zen_score} visualizes the computational graph of Algorithm \ref{alg:zen-score}.

In Algorithm~\ref{alg:zen-score}, all residual links are removed from the network as pre-processing. Then we randomly sample input vectors and perturb them with Gaussian noise. The perturbation of the pre-GAP feature map is denoted as $\Delta$ in Line 4. This step replaces the gradient of $\boldsymbol{x}$ with finite differential $\Delta$ to avoid backward-propagation. 
To get Zen-Score, the scaling factor $\bar{\sigma}_i^2$ is averaged from the variance of each channel in BN layer. Finally, the Zen-Score is computed by the log-sum of $\Delta$ and $\bar{\sigma}_i$. The following theorem guarantees that the Zen-Score of network with BN layers approximates the $\Phi$-score of the same network without BN layers. The proof is postponed to Supplementary~\ref{sec:Proof-of-Theorem-1}.
\begin{thm}
 \label{thm:zen-score-equal-phi-score}
 Let $\bar{f}(\boldsymbol{x}_0)=\bar{\boldsymbol{x}}_{L}$ be an $L$-layer vanilla network without BN layers. $f(\boldsymbol{x}_0)=\boldsymbol{x}_{L}$ is its sister network with BN layers. For some constants $0<\delta<1$, $K_0\leq \mathcal{O}[\sqrt{\log(1/\delta)}]$, when $BHW \geq \mathcal{O}[(L K_0)^2]$ is large enough, with probability at least $1-\delta$, we have
 \begin{align}
   \label{eq:zen-vs-phi-ratio-lower-upper-bound}
   (1-L\epsilon)^{2} \leq \frac{(\prod_{t=1}^{L}\bar{\sigma}_{t}^{2})\mathbb{E}_{\theta}\{\|\boldsymbol{x}_{L}\|^{2}\}}{\mathbb{E}_{\theta}\|\bar{\boldsymbol{x}}_{L}\|^{2}} \leq (1+L\epsilon)^2 
 \end{align}
where $\epsilon \triangleq \mathcal{O}(2K_{0}/\sqrt{BHW})$.
\end{thm}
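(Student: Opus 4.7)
The plan is to prove the bound by induction on depth, exploiting that the two sister networks share the same random weights and differ only in the insertion of BN layers. The underlying identity is scalar: since BN divides by the sample standard deviation of each pre-activation and $\mathrm{ReLU}$ commutes with multiplication by positive scalars, propagating the scaling through all $L$ layers would give $\bar{\boldsymbol{x}}_L = \bigl(\prod_{t=1}^L \bar{\sigma}_t\bigr)\,\boldsymbol{x}_L$ exactly if BN were a scalar rescaling by $\bar{\sigma}_t$. Squaring and taking $\mathbb{E}_\theta$ then produces the ratio in the theorem, so the whole job is to control how far actual per-channel BN with finite-sample statistics departs from this idealized scalar rescaling.

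I would set up the inductive hypothesis
$$r_t \;:=\; \frac{\bigl(\prod_{s\leq t}\bar{\sigma}_s^{2}\bigr)\,\mathbb{E}_\theta\|\boldsymbol{x}_t\|^{2}}{\mathbb{E}_\theta\|\bar{\boldsymbol{x}}_t\|^{2}} \;\in\; \bigl[(1-\epsilon)^t,\,(1+\epsilon)^t\bigr],$$
with base case $r_0=1$, and compare the two one-layer updates $\bar{\boldsymbol{x}}_t=\mathrm{ReLU}(\boldsymbol{\theta}_t*\bar{\boldsymbol{x}}_{t-1})$ and $\boldsymbol{x}_t=\mathrm{ReLU}(\mathrm{BN}_t(\boldsymbol{\theta}_t*\boldsymbol{x}_{t-1}))$. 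Substituting the hypothesis, the two pre-activations differ only by the scalar $\prod_{s<t}\bar{\sigma}_s$, which BN cancels up to the gap between the per-channel BN standard deviations and their root-mean-square $\bar{\sigma}_t$ from Algorithm~\ref{alg:zen-score}. Under $\mathcal{N}(0,1)$ weight initialization, each channel's pre-activation is sub-Gaussian with parameter at most $K_0\lesssim\sqrt{\log(L/\delta)}$, so its sample variance over $BHW$ observations concentrates around the population value at rate $\mathcal{O}(K_0^2/\sqrt{BHW})$ by a Bernstein or Hanson--Wright inequality. Taking a square root yields the per-layer multiplicative error $\epsilon=\mathcal{O}(K_0/\sqrt{BHW})$; a union bound over the $L$ layers at per-layer confidence $1-\delta/L$ keeps the total failure probability below $\delta$, and the sample-size hypothesis $BHW\geq\mathcal{O}((LK_0)^2)$ ensures $L\epsilon<1$.

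Compounding the per-layer factors gives $r_L\in[(1-\epsilon)^L,(1+\epsilon)^L]$ on the squared-norm ratio, and the claimed envelope $(1-L\epsilon)^{2}\leq r_L\leq (1+L\epsilon)^{2}$ then follows from the elementary estimates $(1\pm\epsilon)^L \leq 1\pm L\epsilon + \mathcal{O}((L\epsilon)^2)$ together with $L\epsilon<1$. The main obstacle will be the BN/ReLU interaction: $\mathrm{ReLU}$ only commutes with multiplication by a positive scalar, whereas real BN rescales each channel by its own sample standard deviation, so the scalar identity above holds only in aggregate. Making this precise requires showing that the $m$ per-channel sample standard deviations of layer $t$ concentrate around their root-mean-square $\bar{\sigma}_t$ at the same $\epsilon$ rate, which is exactly why the sub-Gaussian parameter $K_0$ enters the sample-size condition quadratically; one must also verify that the per-channel sample means subtracted by BN are themselves $\mathcal{O}(1/\sqrt{BHW})$ so that they are absorbed into the same error budget. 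Once these two technicalities are handled, the remainder is bookkeeping.
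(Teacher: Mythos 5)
There is a genuine gap in the induction step. Your inductive hypothesis controls only the \emph{ratio of squared norms} $r_{t-1}$, yet the step ``substituting the hypothesis, the two pre-activations differ only by the scalar $\prod_{s<t}\bar{\sigma}_s$'' treats the two feature maps as (approximately) proportional \emph{vectors}. They are not: already after one BN layer, $\bar{\boldsymbol{x}}_1^{(i)}=\boldsymbol{\sigma}_1^{(i)}\boldsymbol{x}_1^{(i)}$ is a \emph{per-channel} rescaling with a different factor for each channel, and after the next convolution mixes channels, $\boldsymbol{\theta}_2*\bar{\boldsymbol{x}}_1$ and $\boldsymbol{\theta}_2*\boldsymbol{x}_1$ are no longer related by any scaling at all. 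Your proposed remedy --- showing the per-channel standard deviations concentrate around their root-mean-square --- does not repair this: even with that concentration in hand, the entrywise discrepancy interacts with the sign pattern of the ReLU (pixels near zero can flip activation between the two networks) and with cancellation inside the convolution, so a per-layer multiplicative $(1\pm\epsilon)$ bookkeeping on the feature maps themselves cannot be closed. The missing idea is the paper's Lemma~\ref{lem:expect-theta-norm-exchange}: for fresh Gaussian weights $\boldsymbol{\theta}_t$, the quantity $\mathbb{E}_{\theta_t}\|[\boldsymbol{\theta}_t*\boldsymbol{v}]_+\|^2$ depends on $\boldsymbol{v}$ only through $\|\boldsymbol{v}\|$. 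This is exactly what lets the paper propagate a norm-ratio hypothesis without ever comparing the two networks' feature maps directionally: conditioned on layer $t-1$, equal (rescaled) norms imply equal expected output norms, so the direction mismatch is irrelevant. Your proof never states or uses this rotation-invariance device, and without it the induction does not go through.

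The remaining ingredients of your plan do match the paper's: concentration of $\boldsymbol{\sigma}_t^{(i)}$ and $\bar{\sigma}_t$ via Bernstein-type bounds at rate $K_{t-1}^2/\sqrt{BHW}$ (the paper's Lemmas~\ref{lem:sigma-i-concentration} and~\ref{lem:sigma-xtp1-equal-gt-norm}, used to relate $\|\boldsymbol{x}_t\|^2$ to $\|[\boldsymbol{g}_t]_+\|^2$ \emph{within} the BN network, not across the two networks), an inductive bound on $\|\boldsymbol{x}_t\|_\infty$, a union bound over layers, and the first-order expansion $(1\pm\epsilon)^{L}\approx 1\pm L\epsilon$ at the end. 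One further small divergence: the paper deliberately uses a modified BN with no mean subtraction precisely to avoid the sample-mean bookkeeping you flag as a technicality; if you keep standard BN you do need that extra $\mathcal{O}(1/\sqrt{BHW})$ argument.
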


Informally speaking, Theorem \ref{thm:zen-score-equal-phi-score} says that to compute $\|\bar{f}(\cdot)\|$, we only need to compute $\|f(\cdot)\|$ then re-scale with $\prod_{t=1}^{L}\bar{\sigma}_{t}$. The approximation error is bounded by $L\epsilon$. By taking gradient of $\boldsymbol{x}$ on both $\bar{f}(\cdot)$ and ${f}(\cdot)$, we obtain the desired relationship between Zen-Score and $\Phi$-score.

\subsection{Zen-NAS For Maximizing Expressivity}

\begin{algorithm}
 \caption{Zen-NAS}
 \label{alg:zen-nas}
\begin{algorithmic}[1]

\REQUIRE Search space $\mathcal{S}$, inference budget $B$, maximal depth $L$, total number of iterations $T$, evolutionary population size $N$, initial structure $F_0$.

\ENSURE NAS-designed ZenNet $F^*$.

\STATE Initialize population $\mathcal{P}=\{F_0\}$.

\FOR{$t=1,2,\cdots,T$}
\STATE Randomly select $F_t \in \mathcal{P}$.
\STATE Mutate $\hat{F}_t=\textrm{MUTATE}(F_t, \mathcal{S})$
\IF{$\hat{F}_t$ exceeds inference budget or has more than $L$ layers} 
 \STATE Do nothing.
\ELSE
 \STATE Get Zen-Score $z=\textrm{Zen}(\hat{F}_t)$.
 \STATE Append $\hat{F}_t$ to $\mathcal{P}$.
\ENDIF

\STATE Remove network of the smallest Zen-Score if the size of $\mathcal{P}$ exceeds $B$.
\ENDFOR

\STATE Return $F^*$, the network of the highest Zen-Score in $\mathcal{P}$.

\end{algorithmic}
\end{algorithm}

\begin{algorithm}
 \caption{MUTATE}
 \label{alg:mutate}
\begin{algorithmic}[1]

\REQUIRE Structure $F_t$, search space $\mathcal{S}$. 

\ENSURE Randomly mutated structure $\hat{F}_t$.
\STATE Uniformly select a block $h$ in $F_t$.
\STATE Uniformly alternate the block type, kernel size, width and depth of $h$ within some range.
\STATE Return the mutated structure $\hat{F}_t$.
\end{algorithmic}
\end{algorithm}

We design Zen-NAS algorithm to maximize the Zen-Score of the target network. The step-by-step description of Zen-NAS is given in Algorithm~\ref{alg:zen-nas}. The Zen-NAS uses Evolutionary Algorithm (EA) as architecture generator. It is possible to choose other generators such as Reinforced Learning or even greedy selection. The choice of EA is due to its simplicity.

In Algorithm~\ref{alg:zen-nas}, we randomly generate $N$ structures. At each iteration step $t$, we randomly select a structure in the population $\mathcal{P}$ and mutate it. The mutation algorithm is presented in Algorithm~\ref{alg:mutate}. The width and depth of the selected layer is mutated in a given range. We choose $[0.5, 2.0]$ as the mutation range in this work, that is, within half or double of the current value. The new structure $\hat{F}_t$ is appended to the population if its inference cost does not exceed the budget. The maximal depth of networks is controlled by $L$, which prevents the algorithm generate over-deep structures. Finally, we maintain the population size by removing  networks with the smallest Zen-Scores. After $T$ iterations, the network with the largest Zen-Score is returned as the output of Zen-NAS. We name the found architectures as ZenNets.

\section{Experiments}
\label{sec:experiment}

In this section, experiments on CIFAR-10/CIFAR-100 \citep{krizhevskyLearningMultipleLayers2009} and ImageNet-1k \citep{dengImageNetLargescaleHierarchical2009} are conducted to validate the superiority of Zen-NAS. We first compare Zen-Score to several zero-shot proxies on CIFAR-10 and CIFAR-100, using the same search space,  search policy and training settings. Then we compare Zen-NAS to the state-of-the-art methods on ImageNet. Zen-NAS on CIFAR-10/CIFAR-100 can be found in Supplementary \ref{sec:Zen-NAS-on-CIFAR}. Finally, we compare the searching cost of Zen-NAS with SOTA methods in subsection \ref{sec:Searching-Cost-of-Zen-NAS}. 

Due to space limitation, the inference speed on NVIDIA T4 and Google Pixel2 is reported in Supplementary \ref{sec:additional-figures}. The Zen-Scores of ResNets and accuracies under fair training settings are reported in Supplementary \ref{sec:zen-score-accuracy-of-resnets-under-fair-training}. We enclose one big performance table of networks on ImageNet in Supplementary \ref{sec:one-big-table}.

To align with previous works, we consider the following two search spaces:
\begin{compactitem}
 \item \textbf{Search Space I} Following \citep{heDeepResidualLearning2016,radosavovicDesigningNetworkDesign2020}, this search space consists of residual blocks and bottleneck blocks defined in ResNet.
 \item \textbf{Search Space II} Following \citep{sandlerMobileNetV2InvertedResiduals2018,phamEfficientNeuralArchitecture2018a}, this search space consists of MobileNet blocks. The depth-wise expansion ratio is searched in set $\{1,2,4,6\}$.
\end{compactitem}
Please see Supplementary \ref{sec:Datasets-and-Experiment-Settings} for datasets description and detail experiment settings. 


\begin{table}[!]
 \begin{center}
 \begin{tabular}{lcc}
     \toprule 
     proxy & CIFAR-10 & CIFAR-100\tabularnewline
     \midrule
     \midrule 
     Zen-Score & \textbf{96.2\%} & \textbf{80.1\%} \tabularnewline
     \midrule 
     FLOPs & 93.1\% & 64.7\%\tabularnewline
     \midrule 
     grad & 92.8\% & 65.4\%\tabularnewline
     \midrule 
     synflow & 95.1\% & 75.9\%\tabularnewline
     \midrule 
     TE-Score & 96.1\% & 77.2\%\tabularnewline
     \midrule
     NASWOT & 96.0\% & 77.5\%\tabularnewline
     \midrule 
     Random  & 93.5$\pm$0.7\% & 71.1$\pm$3.1\%\tabularnewline
     \bottomrule
     \end{tabular}
 \end{center}
 \caption{Top-1 accuracies on CIFAR-10/CIFAR-100 for five zero-shot proxies. Budget: model size $N\leq 1\,\mathrm{M}$. `Random': average accuracy $\pm$ std for random search.}
 \label{tab:top1-acc-for-five-proxy}
\end{table}

\begin{table}[!]
 \begin{center}
 \begin{tabular}{llccc}
 \toprule 
 proxy & model & N & time & speed-up\tabularnewline
 \midrule
 \midrule 
 TE-Score & ResNet-18 & 16 & 0.34 & 1/28x \tabularnewline
 \midrule 
  & ResNet-50 & 16 & 0.77 & 1/20x \tabularnewline
 \midrule 
 NASWOT$\dagger$ & ResNet-18 & 16 & 0.040 & 1/3.3x \tabularnewline
 \midrule 
  & ResNet-50 & 16 & 0.059 & 1/1.6x \tabularnewline
 \midrule 
 Zen-Score & ResNet-18 & 16 & 0.012 & 1.0 \tabularnewline
 \midrule 
  & ResNet-50 & 16 & 0.037 & 1.0 \tabularnewline
 \bottomrule
 \end{tabular}
 \end{center}
 \caption{Time cost (in seconds) of computing Zen/TE-Score for ResNet-18/50 at resolution 224x224. The statistical error is within $5\%$. `time': time for computing Zen/TE-score for $N$ images, measured in seconds, averaged over 100 trials. `speed-up': speed-up rate of TE-Score v.s. Zen-Score.\\
 $\dagger$: The official implementation outputs Inf score for ResNet-18/50.}
 \label{tab:time-cost-zen-vs-TE}
\end{table}

\begin{figure}[!]
 \begin{center}
   \includegraphics[width=\linewidth]{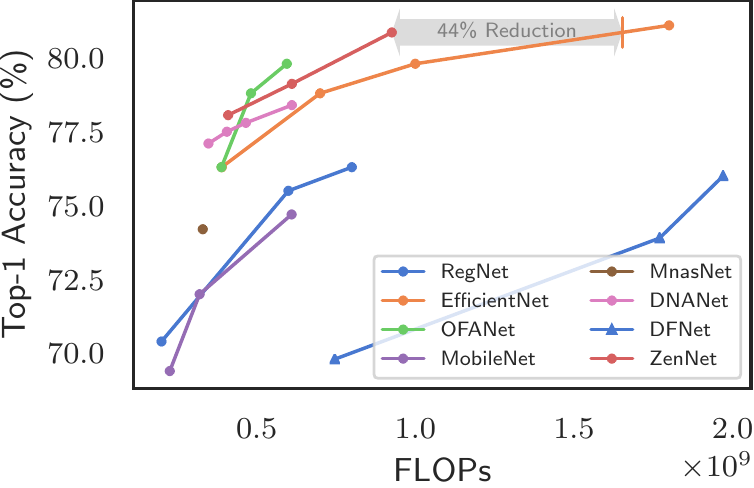}
   \caption{ZenNets optimized for FLOPs.}
   \label{fig:zennet-flops-vs-top1-acc}
 \end{center}
\end{figure}

\begin{table}[!]
 \begin{center}
   \begin{tabular}{@{}llll@{}}
     \toprule
     NAS    & Method  & Top-1 (\%) & GPU Day \\ \midrule
     AmoebaNet-A \citep{realRegularizedEvolutionImage2019}    & EA            & 74.5       & 3150$\dagger$     \\
     EcoNAS \citep{zhouEcoNASFindingProxies2020}         & EA            & 74.8       & 8       \\
     CARS-I \citep{yangCARSContinuousEvolution2020}         & EA            & 75.2       & 0.4     \\
     GeNet \citep{xieGeneticCNN2017}          & EA            & 72.1       & 17      \\
     DARTS \citep{liuDARTSDifferentiableArchitecture2019}          & GD            & 73.1       & 4       \\
     SNAS \citep{xieSNASStochasticNeural2018}           & GD            & 72.7       & 1.5     \\
     PC-DARTS \citep{xuPCDARTSPartialChannel2019}       & GD            & 75.8       & 3.8     \\
     ProxylessNAS \citep{cai_proxylessnas:_2019}   & GD            & 75.1       & 8.3     \\
     GDAS \citep{zhangOvercomingMultiModelForgetting2020}           & GD            & 74         & 0.8     \\
     FBNetV2-L1 \citep{wanFBNetV2DifferentiableNeural2020}     & GD            & 77.2       & 25      \\
     NASNet-A \citep{zophLearningTransferableArchitectures2018}       & RL            & 74         & 1800    \\
     Mnasnet-A \citep{tan_mnasnet:_2019}      & RL            & 75.2       & -       \\
     MetaQNN \citep{bakerDesigningNeuralNetwork2017}            & RL            & 77.4       & 96      \\
     PNAS \citep{liuProgressiveNeuralArchitecture2018}           & SMBO          & 74.2       & 224     \\
     SemiNAS \citep{luoSemiSupervisedNeuralArchitecture2020} & SSL & 76.5 & 4 \\
     TE-NAS \cite{chen2021neural} & ZS & 74.1 & 0.2 \\
     OFANet \citep{caiOnceforAllTrainOne2020}         & PS            & 80.1       & 51.6    \\
     EfficientNet-B7 \citep{tanEfficientNetRethinkingModel2019} & Scaling    & 84.4       & 3800$\ddagger$    \\ \midrule
     Zen-NAS    & ZS    & 83.6       & 0.5    \\ \bottomrule
     \end{tabular}
 \end{center}  
 \caption{NAS searching cost comparison. 
 'Top-1': top-1 accuracy on ImageNet-1k. 
 'Method': 'EA' is short for Evolutionary Algorithm; 'GD' is short for Gradient Descent; 'RL' is short for reinforcement Learning; 'ZS' is short for Zero-shot; 'SMBO', 'SSL', 'PS' and 'Scaling' are special searching methods/frameworks. 
  $\dagger$: Running on TPU; $\ddagger$: The cost is estimated by \citep{wanFBNetV2DifferentiableNeural2020};}
 \label{tab:nas-cost}
\end{table}


In each trial, the initial structure is a randomly selected small network which is guaranteed to satisfy the inference budget. The kernel size is searched in set $\{3,5,7\}$. Following conventional designs, the number of stages is three for CIFAR-10/CIFAR-100 and five for ImageNet.  The evolutionary population size is 256, number of evolutionary iterations $T=96,000$. The resolution  is 32x32 for CIFAR-10/CIFAR-100 and 224x224 for ImageNet.

\subsection{Zen-Score v.s. Other Zero-Shot Proxies}
\label{sec:Zen-Score-vs-SOTA-Zero-Shot-Proxies}

Following \cite{abdelfattahZeroCostProxiesLightweight2021,chen2021neural},  we compare Zen-Score to  five zero-shot proxies: FLOPs, gradient-norm (grad) of network parameters, synflow \cite{tanakaPruningNeuralNetworks2020}, TE-NAS score (TE-Score) \cite{chen2021neural} and NASWOT \cite{mellorNeuralArchitectureSearch2021a}. For each proxy, we replace Zen-Score by that proxy in Algorithm \ref{alg:zen-nas} and then run Algorithm \ref{alg:zen-nas} for $T=96,000$ iterations to ensure convergence. Since synflow is the smaller the better, we use its negative value in Algorithm \ref{alg:zen-nas}.  Following convention, we search for best network on CIFAR-10/CIFAR-100 within model size $N\leq1\,\mathrm{M}$. The convergence curves are plotted in Supplementary \ref{sec:additional-figures}. In these figures, all six scores improves monotonically along  iterations. 

After the above NAS step, we train the network of the best score for each proxy under the same training setting. To provide a random baseline, we randomly generate networks. The width of the layer varies in range $[4,512]$, and the depth of each stage varies in range $[1,10]$. If the network size is larger than $1\,\mathrm{M}$, we shrink its width by factor $0.75$ each time until it satisfies the budget. 32 random networks are generated and trained in total.

The top-1 accuracy is reported in Table \ref{tab:top1-acc-for-five-proxy}. Zen-Score significantly outperforms the other five proxies on both CIFAR-10 and CIFAR-100. TE-Score and NASWOT are the runner-up proxies with similar performance, followed by synflow. It is not surprise to see that naive proxies, such as FLOPs and gradient-norm, perform poorly, even worse than random search.

To compare the computational efficiency of Zen-Score and TE-score, we compute two scores for ResNet-18 and ResNet-50 at 224x224 resolution. The expected time cost is averaged over 100 trials. We find that averaging  Zen/TE-Score over $N=16$ random images is sufficient to reduce the statistical error below $5\%$. The results are reported in Table \ref{tab:time-cost-zen-vs-TE}. The computation of Zen-Score is $20\sim28$ times faster than TE-Score. 

We tried our best to benchmark NASWOT for ResNet-18/50 using the official code. However, the official code always outputs Inf for ResNet-18/50 at resolution 224. Despite of the Inf issue, Zen-Score is 3.3x times faster than NASWOT on ResNet-18 and 1.6x times faster on ResNet-50.

\subsection{Zen-NAS on ImageNet}
\label{sec:Zen-NAS-on-ImageNet}

We use Zen-NAS to search efficient network (ZenNet) on ImageNet.  We consider the following popular networks as baselines: (a) manually-designed networks, including ResNet \citep{heDeepResidualLearning2016}, DenseNet \citep{huangDenselyConnectedConvolutional2017}, ResNeSt \citep{zhangResNeStSplitAttentionNetworks2020}, MobileNet-V2 \citep{sandlerMobileNetV2InvertedResiduals2018}  (b) NAS-designed networks for fast inference on GPU, including OFANet-9ms/11ms \citep{caiOnceforAllTrainOne2020}, DFNet \citep{liPartialOrderPruning2019}, RegNet \citep{radosavovicDesigningNetworkDesign2020}; (c) NAS-designed networks optimized for FLOPs, including OFANet-389M/482M/595M \citep{caiOnceforAllTrainOne2020}, DNANet \citep{liBlockwiselySupervisedNeural2020}, EfficientNet \citep{tanEfficientNetRethinkingModel2019}, Mnasnet \citep{tan_mnasnet:_2019}.

Among these networks, EfficientNet is a popular baseline in NAS-related works. EfficientNet-B0/B1 are suitable for mobile device for their small FLOPs and model size. EfficientNet-B3$\sim$B7 are large models that are best to be deployed on a high-end GPU. Although EfficientNet is optimized for FLOPs, its inference speed on GPU is within top-tier ones. Many previous works compare to EfficientNet by inference speed on GPU \citep{zhangResNeStSplitAttentionNetworks2020,caiOnceforAllTrainOne2020,radosavovicDesigningNetworkDesign2020}.

\paragraph{Searching Low Latency Networks}
Following previous works \citep{caiOnceforAllTrainOne2020,liPartialOrderPruning2019,radosavovicDesigningNetworkDesign2020}, we use Zen-NAS to optimize network inference speed on NVIDIA V100 GPU. We use Search Space I in this experiment. The inference speed is tested at batch size 64, half precision (float16). We search for networks of inference latency within 0.1/0.2/0.3/0.5/0.8/1.2 milliseconds (ms) per image. For testing inference latency, we set batch-size=64 and do mini-batch inference 30 times. The averaged inference latency is recorded. The top-1 accuracy on ImageNet v.s. inference latency is plotted in Figure~\ref{fig:fig_latency_on_V100_pytorch_FP16}. Clearly, ZenNets outperform baseline models in both accuracy and inference speed by a large margin. The largest model ZenNet-1.2ms achieves $83.6\%$ top-1 accuracy which is between EfficientNet-B5 and B6. It is about $4.9$x faster than EfficientNet at the same accuracy level.

\paragraph{Searching Lightweight Networks}
Following previous works \cite{caiOnceforAllTrainOne2020,tanEfficientNetRethinkingModel2019}, we use Zen-NAS to search lightweight networks with small FLOPs. We use Search Space II in this experiment. We search for networks of computational cost within 400/600/900\,M FLOPs.  Similar to OFANet and EfficientNet, we add SE-blocks after convolutional layers. The top-1 accuracy v.s. FLOPs is plotted in Figure \ref{fig:zennet-flops-vs-top1-acc}. Again, ZenNets outperform most models by a large margin. ZenNet-900M-SE achieves $80.8\%$ top-1 accuracy which is comparable to EfficientNet-B3 with $43\%$ fewer FLOPs. The runner-up is OFANet whose efficiency is similar to ZenNet. 




\subsection{Searching Cost of Zen-NAS  v.s. SOTA}
\label{sec:Searching-Cost-of-Zen-NAS}

The major time cost of Zen-NAS is the computation of Zen-Score. The network latency is predicted by an in-house latency predictor whose time cost is nearly zero. According to  Table~\ref{tab:time-cost-zen-vs-TE}, the computation of Zen-Score for ResNet-50 only takes $0.15$ second. This means that scoring 96,000 networks similar to ResNet-50 only takes $4$ GPU hours, or $0.17$ GPU day.

We compare Zen-NAS searching cost to SOTA NAS methods in Table \ref{tab:nas-cost}. Since every NAS method uses different settings, it is difficult to make a fair comparison that everyone agrees with. Nevertheless, we only concern about the best model reported in each paper and the corresponding searching cost. This gives us a rough impression of the efficiency of these methods and their practical ability of designing high-performance models.

From Table \ref{tab:nas-cost}, for conventional NAS methods, it takes hundreds to thousands GPU days to find a good structure of accuracy better than $78.0\%$. Many one-shot methods are very fast. For most one-shot methods, the best accuracy is below  $80\%$. In comparison, Zen-NAS achieves $83.6\%$ top-1 accuracy within 0.5 GPU day. Among methods achieving above $80.0\%$ top-1 accuracy in Table~\ref{tab:nas-cost}, the searching speed of Zen-NAS is nearly 100 times faster than OFANet and 7800 times faster than EfficientNet. 
TE-NAS uses less GPU day than Zen-NAS in Table \ref{tab:nas-cost}. This does not conflict with Table \ref{tab:time-cost-zen-vs-TE} because the total number of networks evaluated by the two methods are different.

\section{Conclusion}

We proposed Zen-NAS, a zero-shot neural architecture search framework for designing high performance deep image recognition networks. Without optimizing network parameters, Zen-NAS ranks networks via network expressivity which can be numerically measured by Zen-Score. The searching speed of Zen-NAS is dramatically faster than previous SOTA methods. The ZenNets automatically designed by Zen-NAS are significantly more efficient in terms of inference latency, FLOPs and model size, in multiple recognition tasks. We wish the elegance of Zen-NAS will inspire more theoretical researches towards a deeper understanding of efficient network design.

{\small
\bibliographystyle{plain}
\bibliography{refs}

\begin{thebibliography}{10}

\bibitem{abdelfattahZeroCostProxiesLightweight2021}
Mohamed~S. Abdelfattah, Abhinav Mehrotra, {\L}ukasz Dudziak, and Nicholas~D.
  Lane.
\newblock Zero-{{Cost Proxies}} for {{Lightweight NAS}}.
\newblock In {\em {{ICLR}}}, 2021.

\bibitem{aguilarKnowledgeDistillationInternal2020}
Gustavo Aguilar, Yuan Ling, Yu~Zhang, Benjamin Yao, Xing Fan, and Chenlei Guo.
\newblock Knowledge {{Distillation}} from {{Internal Representations}}.
\newblock In {\em {{AAAI}}}, 2020.

\bibitem{bakerDesigningNeuralNetwork2017}
Bowen Baker, Otkrist Gupta, Nikhil Naik, and Ramesh Raskar.
\newblock Designing {{Neural Network Architectures}} using {{Reinforcement
  Learning}}.
\newblock In {\em {{ICLR}}}, 2017.

\bibitem{bartlettSpectrallynormalizedMarginBounds2017}
Peter~L. Bartlett, Dylan~J. Foster, and Matus~J. Telgarsky.
\newblock Spectrally-normalized margin bounds for neural networks.
\newblock In {\em NIPS}, 2017.

\bibitem{caiOnceforAllTrainOne2020}
Han Cai, Chuang Gan, Tianzhe Wang, Zhekai Zhang, and Song Han.
\newblock Once-for-{{All}}: {{Train One Network}} and {{Specialize}} it for
  {{Efficient Deployment}} on {{Diverse Hardware Platforms}}.
\newblock In {\em {{ICLR}}}, 2020.

\bibitem{cai_proxylessnas:_2019}
Han Cai, Ligeng Zhu, and Song Han.
\newblock {{ProxylessNAS}}: {{Direct Neural Architecture Search}} on {{Target
  Task}} and {{Hardware}}.
\newblock In {\em {{ICLR}}}, 2019.

\bibitem{chen2021neural}
Wuyang Chen, Xinyu Gong, and Zhangyang Wang.
\newblock Neural architecture search on imagenet in four gpu hours: A
  theoretically inspired perspective.
\newblock In {\em {{ICLR}}}, 2021.

\bibitem{chenProgressiveDARTSBridging2019}
Xin Chen, Lingxi Xie, Jun Wu, and Qi~Tian.
\newblock Progressive {{DARTS}}: {{Bridging}} the {{Optimization Gap}} for
  {{NAS}} in the {{Wild}}.
\newblock In {\em {{ICCV}}}, 2019.

\bibitem{cohenInductiveBiasDeep2017}
Nadav Cohen and Amnon Shashua.
\newblock Inductive {{Bias}} of {{Deep Convolutional Networks}} through
  {{Pooling Geometry}}.
\newblock In {\em {{ICLR}}}, 2017.

\bibitem{cubukAutoAugmentLearningAugmentation2018}
Ekin~D. Cubuk, Barret Zoph, Dandelion Mane, Vijay Vasudevan, and Quoc~V. Le.
\newblock {{AutoAugment}}: {{Learning Augmentation Policies}} from {{Data}}.
\newblock In {\em {{CVPR}}}, 2019.

\bibitem{danielyDeeperUnderstandingNeural2016}
Amit Daniely, Roy Frostig, and Yoram Singer.
\newblock Toward {{Deeper Understanding}} of {{Neural Networks}}: {{The Power}}
  of {{Initialization}} and a {{Dual View}} on {{Expressivity}}.
\newblock In {\em {{NIPS}}}, 2016.

\bibitem{dengImageNetLargescaleHierarchical2009}
Jia Deng, Wei Dong, Richard Socher, Li-Jia Li, Kai Li, and Li~Fei-Fei.
\newblock {{ImageNet}}: {{A}} large-scale hierarchical image database.
\newblock In {\em {{CVPR}}}, 2009.

\bibitem{guoSinglePathOneShot2020}
Zichao Guo, Xiangyu Zhang, Haoyuan Mu, Wen Heng, Zechun Liu, Yichen Wei, and
  Jian Sun.
\newblock Single {{Path One}}-{{Shot Neural Architecture Search}} with
  {{Uniform Sampling}}.
\newblock In {\em {{ECCV}}}, 2020.

\bibitem{haninComplexityLinearRegions2019}
Boris Hanin and David Rolnick.
\newblock Complexity of {{Linear Regions}} in {{Deep Networks}}.
\newblock In {\em {{ICML}}}, 2019.

\bibitem{heDeepResidualLearning2016}
K.~He, X.~Zhang, S.~Ren, and J.~Sun.
\newblock Deep {{Residual Learning}} for {{Image Recognition}}.
\newblock In {\em {{CVPR}}}, 2016.

\bibitem{huSqueezeandExcitationNetworks2018}
Jie Hu, Li~Shen, Samuel Albanie, Gang Sun, and Enhua Wu.
\newblock Squeeze-and-{{Excitation Networks}}.
\newblock In {\em {{CVPR}}}, 2018.

\bibitem{huangDenselyConnectedConvolutional2017}
Gao Huang, Zhuang Liu, Laurens van~der Maaten, and Kilian~Q. Weinberger.
\newblock Densely {{Connected Convolutional Networks}}.
\newblock In {\em {{CVPR}}}, 2017.

\bibitem{kakadeComplexityLinearPrediction2008}
Sham~M. Kakade, Karthik Sridharan, and Ambuj Tewari.
\newblock On the {{Complexity}} of {{Linear Prediction}}: {{Risk Bounds}},
  {{Margin Bounds}}, and {{Regularization}}.
\newblock In {\em NIPS}, 2008.

\bibitem{koltchinskii2011oracle}
Vladimir Koltchinskii.
\newblock {\em Oracle Inequalities in Empirical Risk Minimization and Sparse
  Recovery Problems}, volume 2033.
\newblock {Springer Science \& Business Media}, 2011.

\bibitem{krizhevskyLearningMultipleLayers2009}
Alex Krizhevsky.
\newblock Learning multiple layers of features from tiny images, 2009.

\bibitem{liBlockwiselySupervisedNeural2020}
Changlin Li, Jiefeng Peng, Liuchun Yuan, Guangrun Wang, Xiaodan Liang, Liang
  Lin, and Xiaojun Chang.
\newblock Blockwisely {{Supervised Neural Architecture Search}} with
  {{Knowledge Distillation}}.
\newblock In {\em {{CVPR}}}, 2020.

\bibitem{liPartialOrderPruning2019}
Xin Li, Yiming Zhou, Zheng Pan, and Jiashi Feng.
\newblock Partial {{Order Pruning}}: For {{Best Speed}}/{{Accuracy Trade}}-off
  in {{Neural Architecture Search}}.
\newblock In {\em {{CVPR}}}, 2019.

\bibitem{liangWhyDeepNeural2016a}
Shiyu Liang and R.~Srikant.
\newblock Why {{Deep Neural Networks}} for {{Function Approximation}}?
\newblock In {\em {{ICLR}}}, 2016.

\bibitem{liuAreLabelsNecessary2020a}
Chenxi Liu, Piotr Doll{\'a}r, Kaiming He, Ross Girshick, Alan Yuille, and
  Saining Xie.
\newblock Are {{Labels Necessary}} for {{Neural Architecture Search}}?
\newblock In {\em {{ECCV}}}, 2020.

\bibitem{liuProgressiveNeuralArchitecture2018}
Chenxi Liu, Barret Zoph, Maxim Neumann, Jonathon Shlens, Wei Hua, Li-Jia Li,
  Li~Fei-Fei, Li~Fei-Fei, Alan~L. Yuille, Jonathan Huang, and Kevin Murphy.
\newblock Progressive {{Neural Architecture Search}}.
\newblock In {\em {{ECCV}}}, 2018.

\bibitem{liuDARTSDifferentiableArchitecture2019}
Hanxiao Liu, Karen Simonyan, and Yiming Yang.
\newblock {{DARTS}}: {{Differentiable Architecture Search}}.
\newblock In {\em {{ICLR}}}, 2019.

\bibitem{loshchilovSGDRStochasticGradient2017}
Ilya Loshchilov and Frank Hutter.
\newblock {{SGDR}}: {{Stochastic Gradient Descent}} with {{Warm Restarts}}.
\newblock In {\em {{ICLR}}}, 2017.

\bibitem{luExpressivePowerNeural2017}
Zhou Lu, Hongming Pu, Feicheng Wang, Zhiqiang Hu, and Liwei Wang.
\newblock The {{Expressive Power}} of {{Neural Networks}}: {{A View}} from the
  {{Width}}.
\newblock In {\em {{NIPS}}}, 2017.

\bibitem{luoSemiSupervisedNeuralArchitecture2020}
Renqian Luo, Xu~Tan, Rui Wang, Tao Qin, Enhong Chen, and Tie-Yan Liu.
\newblock Semi-{{Supervised Neural Architecture Search}}.
\newblock In {\em {{NIPS}}}, 2020.

\bibitem{luoNeuralArchitectureOptimization2018}
Renqian Luo, Fei Tian, Tao Qin, Enhong Chen, and Tie-Yan Liu.
\newblock Neural {{Architecture Optimization}}.
\newblock In {\em {NIPS}}, 2018.

\bibitem{maithraraghuExpressivePowerDeep2017}
{Maithra Raghu}, Ben Poole, Jon Kleinberg, Surya Ganguli, and Jascha
  {Sohl-Dickstein}.
\newblock On the {{Expressive Power}} of {{Deep Neural Networks}}.
\newblock In {\em {ICML}}, 2017.

\bibitem{mellorNeuralArchitectureSearch2021}
Joseph Mellor, Jack Turner, Amos Storkey, and Elliot~J. Crowley.
\newblock Neural {{Architecture Search}} without {{Training}}.
\newblock {\em arXiv:2006.04647 [cs, stat]}, 2021.

\bibitem{mellorNeuralArchitectureSearch2021a}
Joseph Mellor, Jack Turner, Amos Storkey, and Elliot~J. Crowley.
\newblock Neural {{Architecture Search}} without {{Training}}.
\newblock In {\em {{ICML}}}, 2021.

\bibitem{montufarNumberLinearRegions2014}
Guido Mont{\'u}far, Razvan Pascanu, Kyunghyun Cho, and Yoshua Bengio.
\newblock On the {{Number}} of {{Linear Regions}} of {{Deep Neural Networks}}.
\newblock In {\em {{NIPS}}}, 2014.

\bibitem{neyshaburRoleOverparametrizationGeneralization2018}
Behnam Neyshabur, Zhiyuan Li, Srinadh Bhojanapalli, Yann LeCun, and Nathan
  Srebro.
\newblock The role of over-parametrization in generalization of neural
  networks.
\newblock In {\em ICML}, 2018.

\bibitem{nguyenWideDeepNetworks2021}
Thao Nguyen, Maithra Raghu, and Simon Kornblith.
\newblock Do {{Wide}} and {{Deep Networks Learn}} the {{Same Things}}?
  {{Uncovering How Neural Network Representations Vary}} with {{Width}} and
  {{Depth}}.
\newblock In {\em {{ICLR}}}, 2021.

\bibitem{phamMetaPseudoLabels2021}
Hieu Pham, Zihang Dai, Qizhe Xie, Minh-Thang Luong, and Quoc~V. Le.
\newblock Meta {{Pseudo Labels}}.
\newblock {\em arXiv:2003.10580 [cs, stat]}, 2021.

\bibitem{phamEfficientNeuralArchitecture2018a}
Hieu Pham, Melody Guan, Barret Zoph, Quoc Le, and Jeff Dean.
\newblock Efficient {{Neural Architecture Search}} via {{Parameters Sharing}}.
\newblock In {\em {ICML}}, 2018.

\bibitem{pooleExponentialExpressivityDeep2016}
Ben Poole, Subhaneil Lahiri, Maithra Raghu, Jascha {Sohl-Dickstein}, and Surya
  Ganguli.
\newblock Exponential expressivity in deep neural networks through transient
  chaos.
\newblock In {\em {{NIPS}}}, 2016.

\bibitem{radosavovicDesigningNetworkDesign2020}
Ilija Radosavovic, Raj~Prateek Kosaraju, Ross Girshick, Kaiming He, and Piotr
  Dollár.
\newblock Designing {{Network Design Spaces}}.
\newblock In {\em {{CVPR}}}, 2020.

\bibitem{realRegularizedEvolutionImage2019}
Esteban Real, Alok Aggarwal, Yanping Huang, and Quoc~V. Le.
\newblock Regularized {{Evolution}} for {{Image Classifier Architecture
  Search}}.
\newblock In {\em {{AAAI}}}, 2019.

\bibitem{realLargeScaleEvolutionImage2017}
Esteban Real, Sherry Moore, Andrew Selle, Saurabh Saxena, Yutaka~Leon Suematsu,
  Quoc~V. Le, and Alex Kurakin.
\newblock Large-{{Scale Evolution}} of {{Image Classifiers}}.
\newblock In {\em {{ICML}}}, 2017.

\bibitem{ren_comprehensive_2020}
Pengzhen Ren, Yun Xiao, Xiaojun Chang, Po-Yao Huang, Zhihui Li, Xiaojiang Chen,
  and Xin Wang.
\newblock A {{Comprehensive Survey}} of {{Neural Architecture Search}}:
  {{Challenges}} and {{Solutions}}, 2020.

\bibitem{rolnickPowerDeeperNetworks2018}
David Rolnick and Max Tegmark.
\newblock The power of deeper networks for expressing natural functions.
\newblock In {\em ICLR}, 2018.

\bibitem{sandlerMobileNetV2InvertedResiduals2018}
Mark Sandler, Andrew Howard, Menglong Zhu, Andrey Zhmoginov, and Liang-Chieh
  Chen.
\newblock {{MobileNetV2}}: {{Inverted Residuals}} and {{Linear Bottlenecks}}.
\newblock In {\em {{CVPR}}}, 2018.

\bibitem{sciutoEvaluatingSearchPhase2019}
Christian Sciuto, Kaicheng Yu, Martin Jaggi, Claudiu Musat, and Mathieu
  Salzmann.
\newblock Evaluating the {{Search Phase}} of {{Neural Architecture Search}}.
\newblock {\em arXiv:1902.08142 [cs, stat]}, 2019.

\bibitem{serraBoundingCountingLinear2018}
Thiago Serra, Christian Tjandraatmadja, and Srikumar Ramalingam.
\newblock Bounding and {{Counting Linear Regions}} of {{Deep Neural Networks}}.
\newblock In {\em {{ICML}}}, 2018.

\bibitem{szegedyRethinkingInceptionArchitecture2016}
Christian Szegedy, Vincent Vanhoucke, Sergey Ioffe, Jonathon Shlens, and
  Zbigniew Wojna.
\newblock Rethinking the {{Inception Architecture}} for {{Computer Vision}}.
\newblock In {\em {{CVPR}}}, 2016.

\bibitem{tan_mnasnet:_2019}
Mingxing Tan, Bo~Chen, Ruoming Pang, Vijay Vasudevan, and Quoc~V. Le.
\newblock {{MnasNet}}: {{Platform}}-{{Aware Neural Architecture Search}} for
  {{Mobile}}.
\newblock In {\em {{CVPR}}}, 2019.

\bibitem{tanEfficientNetRethinkingModel2019}
Mingxing Tan and Quoc Le.
\newblock {{EfficientNet}}: {{Rethinking Model Scaling}} for {{Convolutional
  Neural Networks}}.
\newblock In {\em {{ICML}}}, 2019.

\bibitem{tanakaPruningNeuralNetworks2020}
Hidenori Tanaka, Daniel Kunin, Daniel L.~K. Yamins, and Surya Ganguli.
\newblock Pruning neural networks without any data by iteratively conserving
  synaptic flow.
\newblock In {\em {{NIPS}}}, 2020.

\bibitem{touvronTrainingDataefficientImage2021}
Hugo Touvron, Matthieu Cord, Matthijs Douze, Francisco Massa, Alexandre
  Sablayrolles, and Herv{\'e} J{\'e}gou.
\newblock Training data-efficient image transformers \& distillation through
  attention.
\newblock {\em arXiv:2012.12877 [cs]}, 2021.

\bibitem{vershynin_2018}
Roman Vershynin.
\newblock {\em High-{{Dimensional Probability}}: {{An Introduction}} with
  {{Applications}} in {{Data Science}}}.
\newblock Cambridge {{Series}} in {{Statistical}} and {{Probabilistic
  Mathematics}}. {Cambridge University Press}, 2018.

\bibitem{wanFBNetV2DifferentiableNeural2020}
Alvin Wan, Xiaoliang Dai, Peizhao Zhang, Zijian He, Yuandong Tian, Saining Xie,
  Bichen Wu, Matthew Yu, Tao Xu, Kan Chen, Peter Vajda, and Joseph~E. Gonzalez.
\newblock {{FBNetV2}}: {{Differentiable Neural Architecture Search}} for
  {{Spatial}} and {{Channel Dimensions}}.
\newblock In {\em {{CVPR}}}, 2020.

\bibitem{wangPickingWinningTickets2019}
Chaoqi Wang, Guodong Zhang, and Roger Grosse.
\newblock Picking {{Winning Tickets Before Training}} by {{Preserving Gradient
  Flow}}.
\newblock In {\em ICLR}, 2019.

\bibitem{wenNeuralPredictorNeural2020}
Wei Wen, Hanxiao Liu, Hai Li, Yiran Chen, Gabriel Bender, and Pieter-Jan
  Kindermans.
\newblock Neural {{Predictor}} for {{Neural Architecture Search}}.
\newblock In {\em {{ECCV}}}, 2020.

\bibitem{xieGeneticCNN2017}
Lingxi Xie and Alan Yuille.
\newblock Genetic {{CNN}}.
\newblock In {\em {{ICCV}}}, 2017.

\bibitem{xieExploringRandomlyWired2019}
Saining Xie, Alexander Kirillov, Ross Girshick, and Kaiming He.
\newblock Exploring {{Randomly Wired Neural Networks}} for {{Image
  Recognition}}.
\newblock {\em arXiv:1904.01569 [cs]}, 2019.

\bibitem{xieSNASStochasticNeural2018}
Sirui Xie, Hehui Zheng, Chunxiao Liu, and Liang Lin.
\newblock {{SNAS}}: Stochastic neural architecture search.
\newblock In {\em {{ICLR}}}, 2018.

\bibitem{xiongNumberLinearRegions2020}
H.~Xiong, L.~Huang, M.~Yu, L.~Liu, F.~Zhu, and L.~Shao.
\newblock On the {{Number}} of {{Linear Regions}} of {{Convolutional Neural
  Networks}}.
\newblock In {\em {{ICML}}}, 2020.

\bibitem{xuPCDARTSPartialChannel2019}
Yuhui Xu, Lingxi Xie, Xiaopeng Zhang, Xin Chen, Guo-Jun Qi, Qi~Tian, and
  Hongkai Xiong.
\newblock {{PC}}-{{DARTS}}: {{Partial Channel Connections}} for
  {{Memory}}-{{Efficient Architecture Search}}.
\newblock In {\em {{ICLR}}}, 2019.

\bibitem{yangCARSContinuousEvolution2020}
Zhaohui Yang, Yunhe Wang, Xinghao Chen, Boxin Shi, Chao Xu, Chunjing Xu,
  Qi~Tian, and Chang Xu.
\newblock {{CARS}}: {{Continuous Evolution}} for {{Efficient Neural
  Architecture Search}}.
\newblock In {\em {{CVPR}}}, 2020.

\bibitem{yingNASBench101ReproducibleNeural2019}
Chris Ying, Aaron Klein, Esteban Real, Eric Christiansen, Kevin Murphy, and
  Frank Hutter.
\newblock {{NAS}}-{{Bench}}-101: {{Towards Reproducible Neural Architecture
  Search}}.
\newblock In {\em {ICML}}, 2019.

\bibitem{zhangResNeStSplitAttentionNetworks2020}
Hang Zhang, Chongruo Wu, Zhongyue Zhang, Yi~Zhu, Zhi Zhang, Haibin Lin, Yue
  Sun, Tong He, Jonas Mueller, R.~Manmatha, Mu~Li, and Alexander Smola.
\newblock {{ResNeSt}}: {{Split}}-{{Attention Networks}}, 2020.

\bibitem{zhangMixupEmpiricalRisk2018}
Hongyi Zhang, Moustapha Cisse, Yann~N. Dauphin, and David Lopez-Paz.
\newblock Mixup: {{Beyond Empirical Risk Minimization}}.
\newblock In {\em {ICLR}}, 2018.

\bibitem{zhangOvercomingMultiModelForgetting2020}
Miao Zhang, Huiqi Li, Shirui Pan, Xiaojun Chang, and Steven Su.
\newblock Overcoming {{Multi}}-{{Model Forgetting}} in {{One}}-{{Shot NAS With
  Diversity Maximization}}.
\newblock In {\em {{CVPR}}}, 2020.

\bibitem{zhangEmpiricalStudiesProperties2019}
Xiao Zhang and Dongrui Wu.
\newblock Empirical {{Studies}} on the {{Properties}} of {{Linear Regions}} in
  {{Deep Neural Networks}}.
\newblock In {\em {{ICLR}}}, 2019.

\bibitem{zhongRandomErasingData2020}
Zhun Zhong, Liang Zheng, Guoliang Kang, Shaozi Li, and Yi~Yang.
\newblock Random {{Erasing Data Augmentation}}.
\newblock In {\em {{AAAI}}}, 2020.

\bibitem{zhouEcoNASFindingProxies2020}
Dongzhan Zhou, Xinchi Zhou, Wenwei Zhang, Chen~Change Loy, Shuai Yi, Xuesen
  Zhang, and Wanli Ouyang.
\newblock {{EcoNAS}}: {{Finding Proxies}} for {{Economical Neural Architecture
  Search}}.
\newblock In {\em {CVPR}}, 2020.

\bibitem{zophLearningTransferableArchitectures2018}
Barret Zoph, Vijay Vasudevan, Jonathon Shlens, and Quoc~V. Le.
\newblock Learning {{Transferable Architectures}} for {{Scalable Image
  Recognition}}.
\newblock In {\em {{CVPR}}}, 2018.

\end{thebibliography}
}

\clearpage \newpage
\appendix

\renewcommand{\topfraction}{.99}
\renewcommand{\floatpagefraction}{.99}%
\setlength{\dblfloatsep}{1ex}

\section{Datasets and Experiment Settings}
\label{sec:Datasets-and-Experiment-Settings}

\noindent \textbf{Dataset} CIFAR-10 has 50 thousand training images and 10 thousand testing images in 10 classes with resolution 32x32. CIFAR-100 has the same number of training/testing images but in 100 classes. ImageNet-1k has over 1.2 million training images and 50 thousand validation images in 1000 classes. We use the official training/validation split in our experiments.

\noindent \textbf{Augmentation} We use the following augmentations as in \citep{phamEfficientNeuralArchitecture2018a}: mix-up \citep{zhangMixupEmpiricalRisk2018}, label-smoothing \citep{szegedyRethinkingInceptionArchitecture2016}, random erasing \citep{zhongRandomErasingData2020}, random crop/resize/flip/lighting and AutoAugment \citep{cubukAutoAugmentLearningAugmentation2018}.

\noindent \textbf{Optimizer} For all experiments, we use SGD optimizer with momentum 0.9; weight decay 5e-4 for CIFAR-10/100, 4e-5 for ImageNet; initial learning rate $0.1$ with batch size 256; cosine learning rate decay \citep{loshchilovSGDRStochasticGradient2017}. We train models up to 1440 epochs in CIFAR-10/100, 480 epochs in ImageNet. Following previous works \citep{aguilarKnowledgeDistillationInternal2020,liBlockwiselySupervisedNeural2020,caiOnceforAllTrainOne2020}, we use EfficientNet-B3 as teacher networks when training ZenNets.

\section{Implementation}

Our code is implemented in PyTorch. The synflow implementation is available from \url{https://github.com/mohsaied/zero-cost-nas/blob/main/foresight/pruners/measures/synflow.py}. The official TE-NAS score implementation is available from \url{https://github.com/VITA-Group/TENAS/blob/main/lib/procedures}. The official NASWOT implementation is available from \url{https://github.com/BayesWatch/nas-without-training}.
Our searching and training code are released on \url{https://github.com/idstcv/ZenNAS}. 

\section{Additional Figures}
\label{sec:additional-figures}

\begin{figure}[!h]
 \centering
 \includegraphics[width=\linewidth]{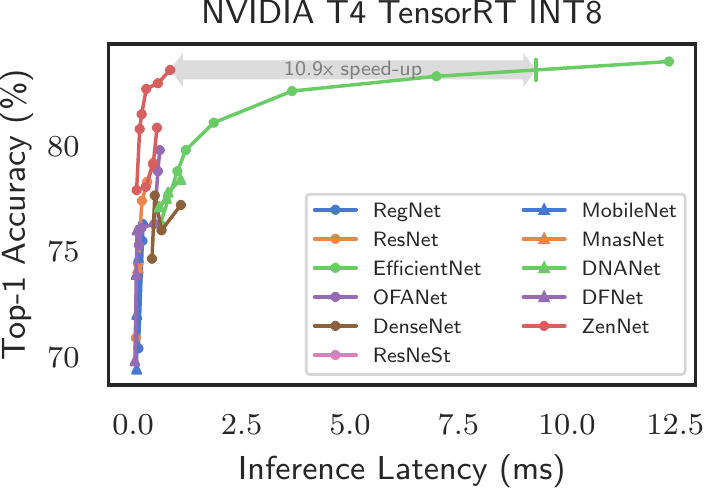}
 \caption{ZenNets top-1 accuracy on ImageNet-1k v.s. inference latency (milliseconds per image) on NVIDIA T4, TensorRT INT8, batch size $64$. ZenNet-0.8ms$\sim$1.2ms and ZenNet-400M-SE$\sim$900M-SE are plotted as two separated curves.} \label{fig:fig_plot_latency_on_T4_TRT_INT8}
\end{figure}

\begin{figure}[!h]
 \centering
 \includegraphics[width=\linewidth]{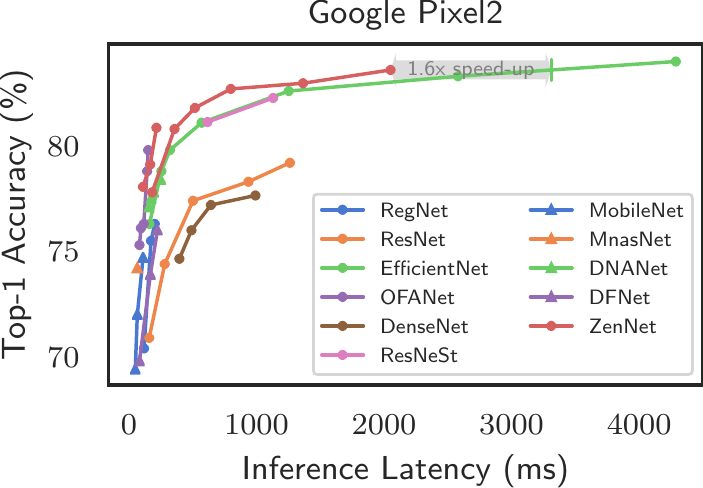}
 \caption{ZenNets top-1 accuracy on ImageNet-1k v.s. inference latency (milliseconds per image) on Google Pixel2, single image. ZenNet-0.8ms$\sim$1.2ms and ZenNet-400M-SE$\sim$900M-SE are plotted as two separated curves.} \label{fig:fig_plot_latency_on_Pixel2_MNN_INT8}
\end{figure}

We test the performance of ZenNets on devices other than NVIDIA V100 GPU. The two hardware platforms are considered. NVIDIA T4 is an industrial level GPU optimized for INT8 inference. All networks are exported to TensorRT engine at precision INT8 to benchmark their inference speed on T4. Google Pixel2 is a modern cell phone with moderate powerful mobile GPU. In Figure~\ref{fig:fig_plot_latency_on_T4_TRT_INT8} and Figure~\ref{fig:fig_plot_latency_on_Pixel2_MNN_INT8}, we report the inference speed of ZenNets on T4 and Pixel2 as well as several SOTA models. The best ZenNet-1.2ms is 10.9x times faster than EfficientNet on NVIDIA T4, 1.6x times faster on Pixel2. 

The evolutionary processes of optimizing zero-shot proxies are plotted in Figure \ref{fig:fig_cifar_ablation_EA_curve_zen}, \ref{fig:fig_cifar_ablation_EA_curve_flops}, \ref{fig:fig_cifar_ablation_EA_curve_grad_norm}, \ref{fig:fig_cifar_ablation_EA_curve_syncflow}, \ref{fig:fig_cifar_ablation_EA_curve_NASWOT}.

 \begin{figure}[!h]
   \begin{center}
     \includegraphics[width=0.9\linewidth]{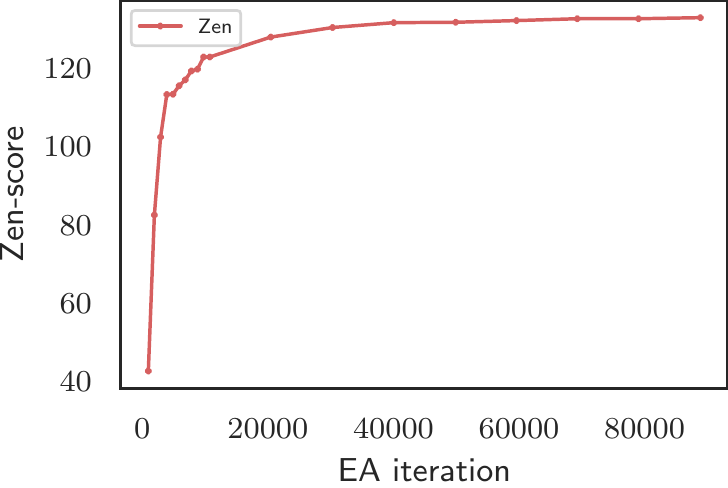}
   \end{center}
 \caption{NAS process for maximizing Zen-Score. x-axis: number of evolutionary iterations. y-axis: Largest Zen-Score in the current population.}
 \label{fig:fig_cifar_ablation_EA_curve_zen}
\end{figure}

\begin{figure}[!h]
   \begin{center}
     \includegraphics[width=0.9\linewidth]{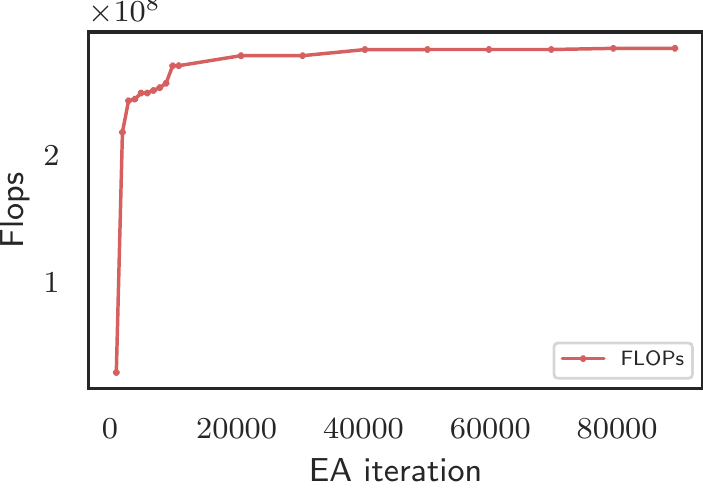}
   \end{center}
 \caption{NAS process for maximizing FLOPs. x-axis: number of evolutionary iterations. y-axis: Largest FLOPs in the current population.}
 \label{fig:fig_cifar_ablation_EA_curve_flops}
\end{figure}

\begin{figure}[!h]
   \begin{center}
     \includegraphics[width=0.9\linewidth]{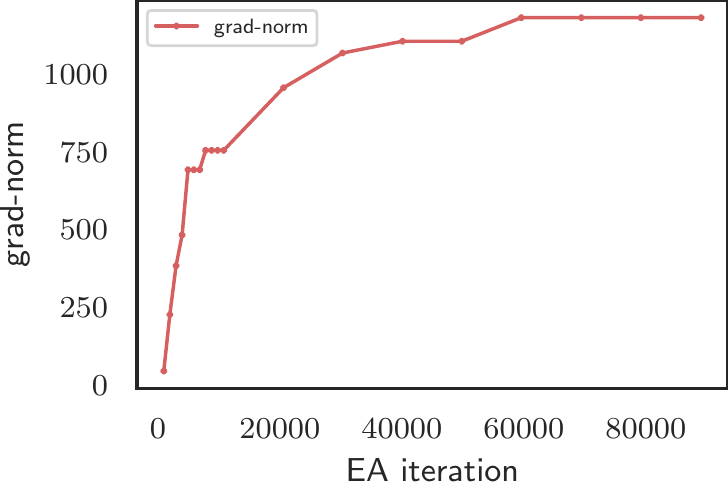}
   \end{center}
 \caption{NAS process for maximizing grad-norm. x-axis: number of evolutionary iterations. y-axis: Largest grad-norm in the current population.}
 \label{fig:fig_cifar_ablation_EA_curve_grad_norm}
\end{figure}

\begin{figure}[!h]
   \begin{center}
     \includegraphics[width=0.9\linewidth]{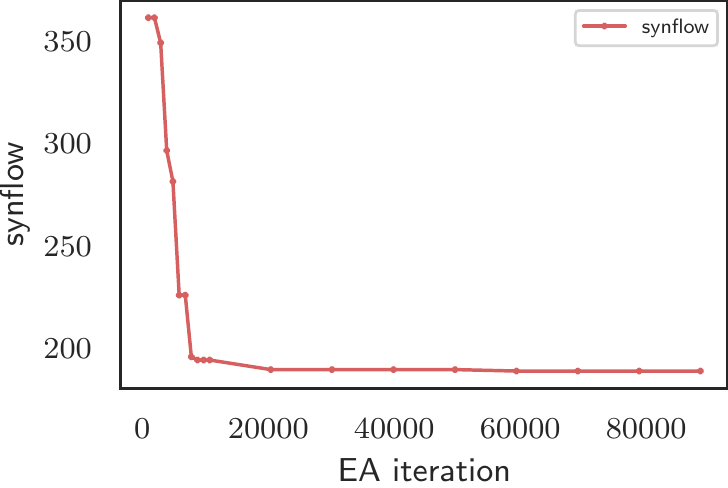}
   \end{center}
 \caption{NAS process for maximizing synflow. x-axis: number of evolutionary iterations. y-axis: Smallest synflow in the current population.}
 \label{fig:fig_cifar_ablation_EA_curve_syncflow}
\end{figure}

\begin{figure}[!h]
 \begin{center}
   \includegraphics[width=0.9\linewidth]{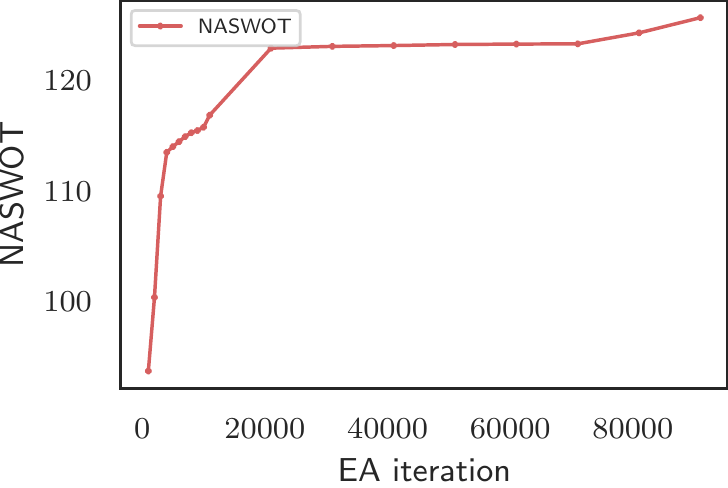}
 \end{center}
\caption{NAS process for maximizing NASWOT. x-axis: number of evolutionary iterations. y-axis: Largest NASWOT score in the current population.}
\label{fig:fig_cifar_ablation_EA_curve_NASWOT}
\end{figure}

\begin{figure}[!h]
   \begin{center}
     \includegraphics[width=0.9\linewidth]{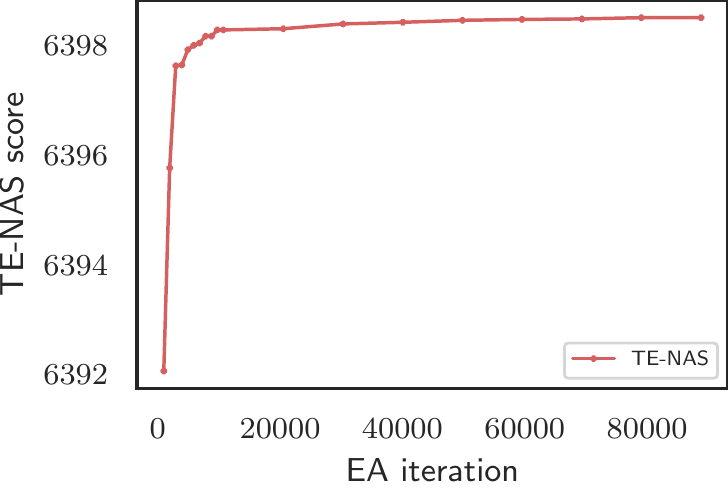}
   \end{center}
 \caption{NAS process for maximizing TE-NAS score. x-axis: number of evolutionary iterations. y-axis: Largest TE-NAS score in the current population. The NTK score in TE-NAS is the smaller the better. Therefore we use $R_N - \textrm{NTK}$ as TE-score in EA. This is slightly different from \cite{chen2021neural} where the rank of NTK is used as score.}
 \label{fig:fig_cifar_ablation_EA_curve_tenas}
\end{figure}

\section{Zen-NAS on CIFAR}
\label{sec:Zen-NAS-on-CIFAR}

Following previous works, we use Zen-NAS to optimize model size on CIFAR-10 and CIFAR-100 datasets. We use Search Space I in this experiment. We constrain the number of network parameters within \{1.0\,M, 2.0\,M\}. The resultant networks are labeled as ZenNet-1.0M/2.0M. Table~\ref{tab:zen-net-for-cifar}   summarized our results. We compare several popular NAS-designed models for CIFAR-10/CIFAR-100 in Figure~\ref{fig:fig_model_perf_CIFAR-10_CIFAR-100}, including AmoebaNet \citep{realRegularizedEvolutionImage2019}, DARTS \citep{liuDARTSDifferentiableArchitecture2019}, P-DARTS \citep{chenProgressiveDARTSBridging2019}, SNAS \citep{xieSNASStochasticNeural2018}, NASNet-A \citep{zophLearningTransferableArchitectures2018}, ENAS\citep{phamEfficientNeuralArchitecture2018a}, PNAS \citep{liuProgressiveNeuralArchitecture2018}, ProxylessNAS \citep{cai_proxylessnas:_2019}. ZenNets outperform baseline methods by $30\% \sim 50\%$ parameter reduction while achieving the same accuracies.

\begin{figure}[!h]
  \begin{center}
    \includegraphics[width=0.9\linewidth]{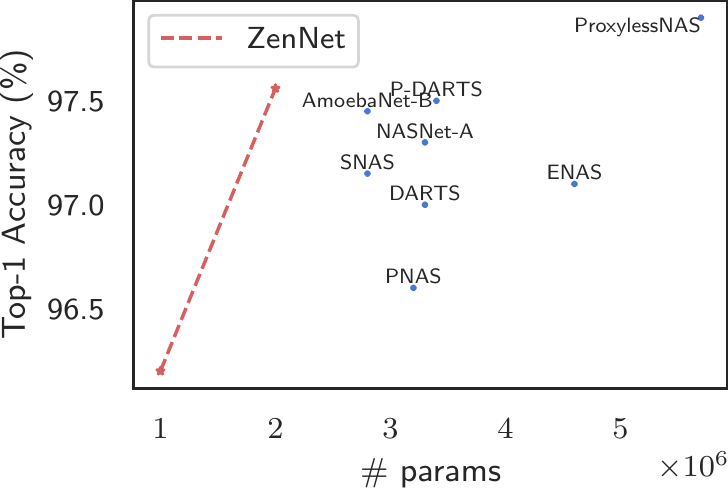} \\
    (a) CIFAR-10 \\      
    \includegraphics[width=0.9\linewidth]{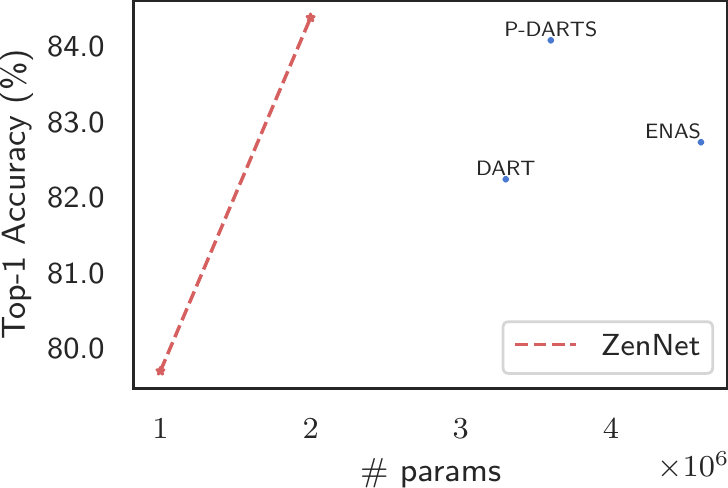} \\
    (b) CIFAR-100 \\
  \end{center}
\caption{ZenNet accuracy v.s. model size (\# params) on CIFAR-10 and CIFAR-100.} \label{fig:fig_model_perf_CIFAR-10_CIFAR-100}
\end{figure}

\begin{table}[!h]
\begin{center}
  \begin{tabular}{@{}lllll@{}}
    \toprule    
    model       & \# params    & FLOPs         & C10     & C100           \\ \midrule
    ZenNet-1.0M & 1.0\,M & 162\,M & 96.5\%    & 80.1\%               \\
    ZenNet-2.0M & 2.0\,M & 487\,M & 97.5\%      & 84.4\%            \\ \bottomrule
    \end{tabular}
\end{center}  
\caption{ZenNet-1.0M/2.0M on CIFAR-10 (C10) and CIFAR-100 (C100).}
\label{tab:zen-net-for-cifar}
\end{table}

\section{Zen-Scores and Accuracies of ResNets under Fair Training Setting}
\label{sec:zen-score-accuracy-of-resnets-under-fair-training}

\begin{table} [!h]
  \begin{center}
    \begin{tabular}{lccc}
      \toprule 
      Model & FLOPs & \# Params & Zen-Score\tabularnewline
      \midrule
      \midrule 
      ResNet-18 & 1.82G & 11.7M & 59.53\tabularnewline
      \midrule 
      ResNet-34 & 3.67G & 21.8M & 112.32\tabularnewline
      \midrule 
      ResNet-50 & 4.12G & 25.5M & 140.3\tabularnewline
      \midrule 
      ResNet-101 & 7.85G & 44.5M & 287.87\tabularnewline
      \midrule 
      ResNet-152 & 11.9G & 60.2M & 433.57\tabularnewline
      \bottomrule
    \end{tabular}    
  \end{center}  
  \caption{Zen-Scores of ResNets.}
  \label{tab:zen-score-of-resnets}
\end{table}

\begin{table} [!h]
  \begin{center}
    \begin{tabular}{lcc}
      \toprule 
      Model & Top-1 \citep{heDeepResidualLearning2016} & Top-1 (ours)\tabularnewline
      \midrule
      \midrule 
      ResNet-18 & 70.9\% & 72.1\%\tabularnewline
      \midrule 
      ResNet-34 & 74.4\% & 76.3\%\tabularnewline
      \midrule 
      ResNet-50 & 77.4\% & 79.0\%\tabularnewline
      \midrule 
      ResNet-101 & 78.3\% & 81.0\%\tabularnewline
      \midrule 
      ResNet-152 & 79.2\% & 82.3\%\tabularnewline
      \bottomrule
      \end{tabular}
  \end{center}  
  \caption{Top-1 accuracies of ResNets. Reported by \citep{heDeepResidualLearning2016} and using enhanced training methods we used in this paper.}
  \label{tab:top-1-resnets-he-vs-ours}
\end{table}

\begin{figure}[!h]
  \begin{center}
    \includegraphics{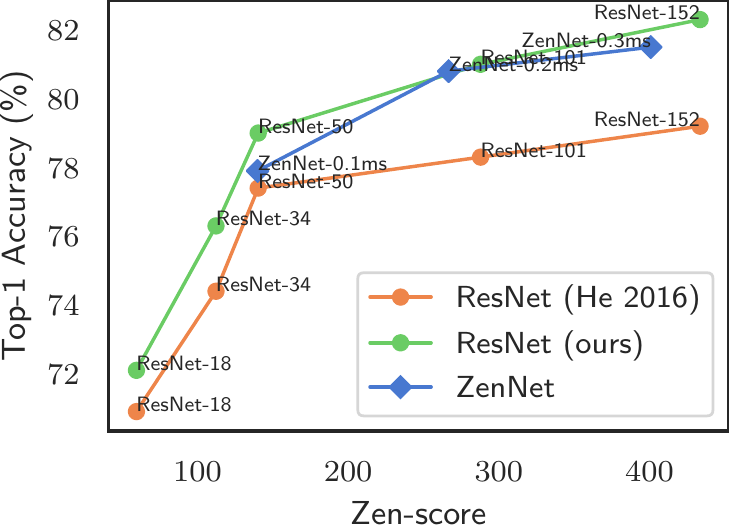}    
  \end{center}
  \caption{ResNet/ZenNet Zen-Score v.s. top-1 accuracy on ImageNet.}
  \label{fig:resnet-zen-score-vs-acc}
\end{figure}

ResNets are widely used in computer vision. It is interesting to understand the ResNets via Zen-Score analysis. We report the Zen-Scores of ResNets in Table~\ref{tab:zen-score-of-resnets}. In Figure~\ref{fig:resnet-zen-score-vs-acc}, we plot the Zen-Score against top-1 accuracy of ResNet and ZenNet on ImageNet. From the figure, it is clearly that even for the same model, the training method matters a lot. There is considerable performance gain of ResNets after using our enhanced training methods. The Zen-Scores positively correlate to the top-1 accuracies for both ResNet and ZenNets.

Next we show that the Zen-Scores is well-aligned with top-1 accuracies across different models. We consider two baselines in Table~\ref{tab:top-1-resnets-he-vs-ours}. The 2nd column reports the top-1 accuracies obtained in the ResNet original paper \citep{heDeepResidualLearning2016}. We found that these models are under-trained. We use enhanced training methods to train ResNets in the same way as we trained ZenNets. The corresponding top-1 accuracies are reported in the 3rd column.


\section{Effectiveness of Zen-Score}
\label{sec:Effectiveness-of-Zen-Score}

We show that Zen-Score effectively indicates the model accuracy during the evolutionary search. In the searching process of ZenNet-1.0M, we uniformly sample 16 structures from the evolutionary population. These structures have different number of channels and layers. Then the sampled structures are trained on CIFAR-10/CIFAR-100. The top-1 accuracy v.s. Zen-Score are plotted in Figure~\ref{fig:zen-score-vs-acc}. The Zen-Scores effectively indicates the network accuracies, especially in high-precision regimes.

\begin{figure}[!]
 \begin{minipage}{0.48\linewidth}
   \begin{center}
     \includegraphics[width=\linewidth]{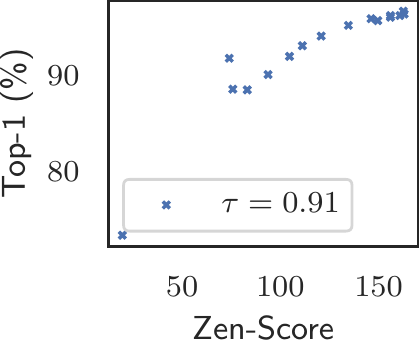}\\
     (a) CIFAR-10
   \end{center}
 \end{minipage}
 \hfil  
 \begin{minipage}{0.48\linewidth}
   \begin{center}
     \includegraphics[width=\linewidth]{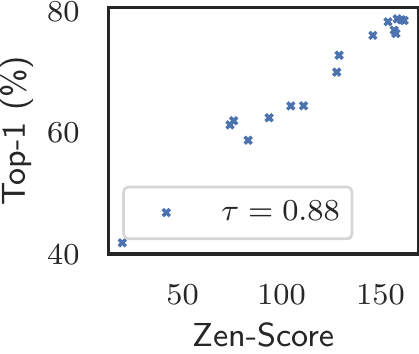}\\
     (b) CIFAR-100
   \end{center}
 \end{minipage}\\
 \caption{Zen-Score v.s. top-1 accuracy, 16 randomly sampled structures generated from ResNet-50, with Kendall's $\tau$-score between accuracy and Zen-Score.}
 \label{fig:zen-score-vs-acc}
\end{figure}

\section{FLOPs/Params/Latency of ZenNets in Table \ref{tab:top1-acc-for-five-proxy}}

\begin{center}
 \begin{tabular}{@{}lllll@{}}
    \toprule
    proxy     & params & FLOPs & latency &  \\ \midrule
    Zen-Score & 1.0M   & 170M  & 0.15ms  &  \\
    FLOPs     & 1.0M   & 285M  & 0.07ms  &  \\
    grad      & 0.2M   & 41M  & 0.14ms  &  \\
    synflow   & 1.0M   & 104M  & 0.11ms  &  \\
    TE-Score  & 1.0M   & 118M  & 0.08ms  &  \\
    NASWOT    & 1.0M   & 304M  & 0.25ms  &  \\
    Random    & 1.0M   & 110 M & 0.09ms  &  \\ \bottomrule
 \end{tabular}      
\end{center}

Latency is measured on NVIDIA V100 FP16 batch size 64. `grad' cannot find a model near params$\approx$1M.




\clearpage \newpage
\onecolumn

\section{Proof of Theorem \ref{thm:zen-score-equal-phi-score}}

\label{sec:Proof-of-Theorem-1}

We introduce a few more notations for our proof. Suppose the network
has $L$ convolutional layers. The $t$-th layer has $m_{t-1}$ input
channels and $m_{t}$ output channels. The convolutional kernel is
$\boldsymbol{\theta}_{t}\in\mathbb{R}^{m_{t}\times m_{-1t}\times k\times k}$.
The image resolution is $H\times W$. The mini-batch size is $B$.
The output feature map of the $t$-th layer is $\boldsymbol{x}_{t}$.
We use $\boldsymbol{x}_{t}^{(i,b,h,w)}$ to denote the pixel of $\boldsymbol{x}_{t}$
in the $i$-th channel, $b$-th image at cooridinate $(h,w)$. $\mathcal{N}(\mu,\sigma)$
denotes Gaussian distribution with mean $\mu$ and variance $\sigma^{2}$.
For random variables $z,a$ and a constant $b$, the notation $z=a\pm b$
means $|z-a|\leq b$. To avoid notation clutter, we use $C_{\log}^{1/\delta}(\cdot)$ to denote some logarithmic polynomial in $1/\delta$ and some other variables. Since the order of these variables in $C_{\log}^{1/\delta}(\cdot)$ is logarithmic, they do not alternate the polynomial order of our bounds.

The input image $\boldsymbol{x}_{0}$ are sampled from $\mathcal{N}(0,1)$.
In a vanilla network without BN layer, the feature map $\bar{\boldsymbol{x}}_{t}$
is generated by the following forward inference process:
\begin{align*}
\bar{\boldsymbol{x}}_{0}= & \boldsymbol{x}_{0}\\
\bar{\boldsymbol{x}}_{t}= & \left[\boldsymbol{\theta}_{t}*\bar{\boldsymbol{x}}_{t-1}\right]_{+}
\end{align*}
where $*$ is the convolutional operator, $[z]_{+}=\max(z,0)$. 

In Zen-Score computation, BN layer is inserted after every convolutional
operator. The forward inference now becomes:
\begin{align}
\boldsymbol{g}_{t}= & \boldsymbol{\theta}_{t}*\boldsymbol{x}_{t-1}\\{}
[\boldsymbol{\sigma}_{t}^{(i)}]^{2}= & \frac{1}{BHW}\sum_{b,h,w}[\boldsymbol{g}_{t}^{(i,b,h,w)}]^{2}\\
\bar{\sigma}_{t}^{2}= & \frac{1}{m_{t}}\sum_{i=1}^{m_{t}}[\boldsymbol{\sigma}_{t}^{(i)}]^{2}\\
\boldsymbol{x}_{t}^{(i)}= & \left[\frac{\boldsymbol{g}_{t}^{(i)}}{\boldsymbol{\sigma}_{t}^{(i)}}\right]_{+}=\frac{1}{\boldsymbol{\sigma}_{t}^{(i)}}[\boldsymbol{g}_{t}^{(i)}]_{+}\ .\label{eq:modified-bn-layer-definition}
\end{align}
Please note that in Eq. (\ref{eq:modified-bn-layer-definition}),
we use a modified BN layer instead of the standard BN, where we do
not subtract mean value in the normalization step. This will greatly
simply the proof. If the reader is concerned about this, it is straightforward
to replace all BN layers with our modified BN layers so that the computational
process exactly follows our proof. In practice, we did not observe
noticable difference by switching between two BN layers because the
mean value of mini-batch is very close to zero.

To show that the Zen-Score computed on BN-enabled network $f(\boldsymbol{x}_{0})=\boldsymbol{x}_{L}$
approximates the $\Phi$-score computed on BN-free network $\bar{f}(\boldsymbol{x}_{0})=\bar{\boldsymbol{x}}_{L}$,
we only need to prove 
\begin{equation}
(\prod_{t=1}^{L}\bar{\sigma}_{t})^{2}\mathbb{E}_{\theta}\|\boldsymbol{x}_{L}\|^{2}\approx\mathbb{E}_{\theta}\|\bar{\boldsymbol{x}}_{L}\|^{2}\ .\label{eq:final-prove-target}
\end{equation}
In deed, when Eq. (\ref{eq:final-prove-target}) holds true, by taking
gradient w.r.t. $\boldsymbol{x}$ on both side, the proof is then
completed. To prove Eq. (\ref{eq:final-prove-target}), we need the
following theorems and lemmas.

\subsection{Useful Theorems and Lemmas}

The first theorem is Bernstein's inequality. It can be found in many
statistical textbooks, such as \cite[Theorem 2.8.1]{vershynin_2018}.
\begin{thm}[Bernstein's inequality]
\label{thm:Bernstein-inequality} Let $x_{1},x_{2},\cdots,x_{N}$
be independent bounded random variables of mean zero, variance $\sigma$.
$|x_{i}|\leq K$ for all $i\in\{1,2,\cdots,N\}$. $\boldsymbol{a}=[a_{1},a_{2},\cdots,a_{N}]$
is a fixed $N$-dimensional vector. Then $\forall t\geq0$, 
\[
\mathbb{P}(\left|\sum_{i=1}^{N}a_{i}x_{i}\right|\geq t)\leq2\exp\left[-c\min\left(\frac{t^{2}}{\sigma^{2}\|\boldsymbol{a}\|_{2}^{2}},\frac{t}{K\|\boldsymbol{a}\|_{\infty}}\right)\right]\ .
\]

\end{thm}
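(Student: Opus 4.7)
The plan is to exploit the positive homogeneity of ReLU and the linearity of convolution to reduce the claim to a concentration estimate comparing the per-channel BN deviations $\boldsymbol{\sigma}_t^{(i)}$ to their channel average $\bar{\sigma}_t$. I would introduce the rescaled BN-free feature map $\tilde{\boldsymbol{x}}_t := \bar{\boldsymbol{x}}_t / \prod_{s=1}^{t}\bar{\sigma}_s$. Since $[\alpha z]_+ = \alpha[z]_+$ for $\alpha \ge 0$ and convolution is linear, a one-line induction gives $\tilde{\boldsymbol{x}}_t = [\boldsymbol{\theta}_t * \tilde{\boldsymbol{x}}_{t-1}]_+/\bar{\sigma}_t$, which is identical in form to the BN recursion $\boldsymbol{x}_t^{(i)} = [\boldsymbol{\theta}_t*\boldsymbol{x}_{t-1}]^{(i)}_+/\boldsymbol{\sigma}_t^{(i)}$ except that $\tilde{\boldsymbol{x}}$ uses the scalar average $\bar{\sigma}_t$ while $\boldsymbol{x}$ uses the per-channel $\boldsymbol{\sigma}_t^{(i)}$. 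Because $\mathbb{E}_\theta\|\tilde{\boldsymbol{x}}_L\|^2 = \mathbb{E}_\theta\|\bar{\boldsymbol{x}}_L\|^2/\prod_t\bar{\sigma}_t^2$, the ratio in \eqref{eq:zen-vs-phi-ratio-lower-upper-bound} collapses to $\mathbb{E}_\theta\|\boldsymbol{x}_L\|^2/\mathbb{E}_\theta\|\tilde{\boldsymbol{x}}_L\|^2$, and the two networks coincide exactly whenever $\boldsymbol{\sigma}_t^{(i)} \equiv \bar{\sigma}_t$.

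The main probabilistic ingredient will then be the uniform bound $\max_{t,i}|[\boldsymbol{\sigma}_t^{(i)}]^2/\bar{\sigma}_t^2 - 1| \le \epsilon$ with $\epsilon = \mathcal{O}(K_0/\sqrt{BHW})$. First I would condition on the high-probability event $\{|\boldsymbol{\theta}_t^{(i,j,h',w')}| \le K_0\}$ for every weight coordinate, which holds with probability at least $1-\delta/2$ once $K_0 = \mathcal{O}[\sqrt{\log(1/\delta)}]$ absorbs a union bound over all weight coordinates via standard Gaussian tails. Conditional on the weights of layers $1,\dots,t-1$ (so $\boldsymbol{x}_{t-1}$ is frozen), each per-channel sample second moment $[\boldsymbol{\sigma}_t^{(i)}]^2 = (BHW)^{-1}\sum_{b,h,w}[\boldsymbol{g}_t^{(i,b,h,w)}]^2$ is a quadratic form $\boldsymbol{\theta}_t^{(i)\,\top} M_{t-1}\boldsymbol{\theta}_t^{(i)}$ in the truncated Gaussian weights, where $M_{t-1}$ is the patch auto-correlation matrix of $\boldsymbol{x}_{t-1}$ and $\mathbb{E}_{\theta_t}[\boldsymbol{\sigma}_t^{(i)}]^2 = \operatorname{tr}(M_{t-1})$. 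Expanding the quadratic form and applying Bernstein's inequality (Theorem~2) to the resulting bounded double sum, then taking a union bound over all channels and all $L$ layers, yields the uniform control with the claimed $\epsilon$ and total failure probability at most $\delta$.

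The last step is an induction on layer index $t$ that propagates the concentration into a bound on the ratio $\mathbb{E}_\theta\|\boldsymbol{x}_L\|^2/\mathbb{E}_\theta\|\tilde{\boldsymbol{x}}_L\|^2$. Under the concentration event, each layer's per-channel output is perturbed by exactly the multiplicative factor $\boldsymbol{\sigma}_t^{(i)}/\bar{\sigma}_t \in [(1-\epsilon)^{1/2},(1+\epsilon)^{1/2}]$; earlier-layer perturbations also propagate forward through the Lipschitz map $\boldsymbol{y}\mapsto[\boldsymbol{\theta}_t*\boldsymbol{y}]_+/\bar{\sigma}_t$. Combining these perturbations at first order in $\epsilon$ gives a total error linear in $L$: whenever $L\epsilon$ is small — which is precisely the hypothesis $BHW\ge \mathcal{O}[(LK_0)^2]$ — the compounded ratio is controlled within $[(1-L\epsilon)^{2},(1+L\epsilon)^{2}]$. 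Taking expectations over $\boldsymbol{\theta}$ on the concentration event, and using that the complementary event contributes at most $\mathcal{O}(\delta)$ to the ratio via the Gaussian-tail truncation, then completes the proof.

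The main obstacle will be the Bernstein step, since the $BHW$ summands of $[\boldsymbol{\sigma}_t^{(i)}]^2$ are \emph{not} independent: the pre-activations at different spatial positions $(h,w)$ share the same weight vector $\boldsymbol{\theta}_t^{(i,\cdot)}$, with true independence only across the $B$ mini-batch indices. The cleanest remedy is the quadratic-form reduction above combined with entry-wise Bernstein on the truncated weight-pair products, where the bound $|\theta| \le K_0$ supplies the required sub-exponential control; an alternative is to restrict the variance computation to the batch axis at the cost of an extra $HW$ factor in the variance constants. A secondary but delicate point is that $\boldsymbol{x}_{t-1}$ is itself random through the earlier weights, so the conditioning-and-union-bounding scheme must be carried out sequentially through the $L$ layers in such a way that the concentration events compose without blowing up the total failure probability.
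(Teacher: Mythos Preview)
Your proposal does not address the stated theorem at all. The statement you were asked to prove is Bernstein's inequality (Theorem~\ref{thm:Bernstein-inequality}), a classical concentration inequality for sums of independent bounded random variables. The paper does not prove this result; it simply cites it as Theorem~2.8.1 in Vershynin's textbook and uses it as a black-box tool. A valid response here would either be a standard moment-generating-function argument (bounding $\mathbb{E}e^{\lambda a_i x_i}$ using $|x_i|\le K$ and $\mathrm{Var}(x_i)=\sigma^2$, then optimizing over $\lambda$) or simply a note that the result is textbook and requires no new proof.

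What you have written is instead a proof sketch for Theorem~\ref{thm:zen-score-equal-phi-score} --- the main Zen-Score/$\Phi$-score approximation result --- as is clear from your reference to \eqref{eq:zen-vs-phi-ratio-lower-upper-bound}, the rescaled feature maps $\tilde{\boldsymbol{x}}_t$, and the role of the BN deviations $\boldsymbol{\sigma}_t^{(i)}$ versus $\bar{\sigma}_t$. For that theorem your outline is broadly in the same spirit as the paper's argument (layerwise induction, concentration of per-channel variances via Bernstein, multiplicative error accumulating linearly in $L$), with the cosmetic difference that you introduce an auxiliary ``averaged-BN'' network $\tilde{\boldsymbol{x}}_t$ while the paper tracks scalar factors $z_t$ directly. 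But none of this is relevant to the statement at hand: your proposal \emph{invokes} Bernstein's inequality rather than establishing it, so as a proof of Theorem~\ref{thm:Bernstein-inequality} it is vacuous.
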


A direct corollary gives the upper bound of sum of random variables.
\begin{cor}
\label{cor:bernstein-upper-bound} Under the same setting of Theorem
\ref{thm:Bernstein-inequality}, with probability at least $1-\delta$,
\[
\left|\sum_{i=1}^{N}a_{i}x_{i}\right|\leq C_{\log}^{1/\delta}(\cdot)\sigma\|\boldsymbol{a}\|_{2}\ .
\]
\end{cor}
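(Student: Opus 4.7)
The plan is to directly invert the Bernstein tail bound in Theorem~\ref{thm:Bernstein-inequality} to obtain a high-probability upper bound on $|\sum_{i}a_{i}x_{i}|$. First I would set the right-hand side of the tail inequality equal to the target failure probability $\delta$ and solve for the corresponding threshold $t$: requiring
\[
2\exp\!\left[-c\min\!\left(\frac{t^{2}}{\sigma^{2}\|\boldsymbol{a}\|_{2}^{2}},\;\frac{t}{K\|\boldsymbol{a}\|_{\infty}}\right)\right]\leq\delta
\]
is equivalent to requiring that the argument of the minimum be at least $\log(2/\delta)/c$.

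Because $\min(A,B)\geq s$ is the conjunction $A\geq s$ and $B\geq s$, the smallest threshold $t$ satisfying the tail bound at level $\delta$ is
\[
t^{\star}(\delta)=\max\!\left(\sigma\|\boldsymbol{a}\|_{2}\sqrt{\log(2/\delta)/c},\;(K\|\boldsymbol{a}\|_{\infty}/c)\log(2/\delta)\right).
\]
Each branch can be rewritten as $\sigma\|\boldsymbol{a}\|_{2}$ multiplied by a factor polynomial in $\log(1/\delta)$: the sub-Gaussian branch contributes $\sqrt{\log(2/\delta)/c}$ directly, and the sub-exponential branch, after applying $\|\boldsymbol{a}\|_{\infty}\leq\|\boldsymbol{a}\|_{2}$, contributes $(K/(c\sigma))\log(2/\delta)$. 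Replacing the maximum by a sum (which only costs a factor of two) and folding all $\delta$-independent constants together with the $\log$ factors into the symbol $C_{\log}^{1/\delta}(\cdot)$ yields the stated bound.

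There is essentially no technical obstacle here: the argument is the textbook inversion of a Bernstein inequality. The only item that requires a little care is the bookkeeping in the sub-exponential branch, where one must verify that the linear appearance of $K$ and the replacement $\|\boldsymbol{a}\|_{\infty}\mapsto\|\boldsymbol{a}\|_{2}$ remain consistent with the convention that $C_{\log}^{1/\delta}(\cdot)$ absorbs only quantities that do not affect the polynomial order of the downstream bounds; this is legitimate because in the intended application $K$, $\sigma$ and $c$ are $\mathcal{O}(1)$ constants, while the only parameter driven to zero is $\delta$.
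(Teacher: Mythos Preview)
Your proposal is correct and follows the same approach as the paper: set the right-hand side of the Bernstein tail bound equal to $\delta$ and solve for the threshold $t$. Your treatment is in fact a bit more careful than the paper's own write-up, which (through a sign slip) records a $\min$ where your $\max$ is the correct expression; your subsequent use of $\|\boldsymbol{a}\|_{\infty}\leq\|\boldsymbol{a}\|_{2}$ and the absorption of $K/(c\sigma)$ into $C_{\log}^{1/\delta}(\cdot)$ is exactly what is needed to arrive at the stated bound.
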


\begin{proof}
Let 
\begin{align*}
\delta & \triangleq2\exp\left[-c\min\left(\frac{t^{2}}{\sigma^{2}\|\boldsymbol{a}\|_{2}^{2}},\frac{t}{K\|\boldsymbol{a}\|_{\infty}}\right)\right]\\
& =\max\left\{ 2\exp\left[-c\frac{t^{2}}{\sigma^{2}\|\boldsymbol{a}\|_{2}^{2}}\right],2\exp\left[-c\frac{t}{K\|\boldsymbol{a}\|_{\infty}}\right]\right\} \ .
\end{align*}
That is,
\begin{align*}
& \delta\geq2\exp\left[-c\frac{t^{2}}{\sigma^{2}\|\boldsymbol{a}\|_{2}^{2}}\right]\\
\Leftrightarrow & t\leq\sqrt{\frac{1}{c}\log(2/\delta)}\sigma\|\boldsymbol{a}\|_{2}=C_{\log}^{1/\delta}(\cdot)\sigma\|\boldsymbol{a}\|_{2}\ ,
\end{align*}
and
\begin{align*}
& \delta\geq2\exp\left[-c\frac{t}{K\|\boldsymbol{a}\|_{\infty}}\right]\\
\Leftrightarrow & t\leq\frac{1}{c}\log(2/\delta)K\|\boldsymbol{a}\|_{\infty}=C_{\log}^{1/\delta}(\cdot)K\|\boldsymbol{a}\|_{\infty}\ .
\end{align*}
Therefore, with probability at least $1-\delta$,
\begin{align*}
\left|\sum_{i=1}^{N}a_{i}x_{i}\right|\leq & \min\{C_{\log}^{1/\delta}(\cdot)\sigma\|\boldsymbol{a}\|_{2},C_{\log}^{1/\delta}(\cdot)K\|\boldsymbol{a}\|_{\infty}\}\\
\leq & C_{\log}^{1/\delta}(\cdot)\min\{\sigma\|\boldsymbol{a}\|_{2},K\|\boldsymbol{a}\|_{\infty}\}\ .
\end{align*}

That is,
\[
\left|\sum_{i=1}^{N}a_{i}x_{i}\right|\leq C_{\log}^{1/\delta}(\cdot)\sigma\|\boldsymbol{a}\|_{2}\ .
\]
\end{proof}
When the random variables are sampled from Gaussian distribution,
it is more convenient to use the following tighter bound.
\begin{thm}
\label{thm:sum-gaussian-tail} Let $x_{1},x_{2},\cdots,x_{N}$ be
sampled from $\mathcal{N}(0,\sigma)$, $\boldsymbol{a}\in\mathbb{R}^{N}$
be a fixed a vector. Then $\forall t\geq0$, 
\[
\mathbb{P}(\left|\sum_{i=1}^{N}a_{i}x_{i}\right|>t)\leq\exp\left[-\frac{t^{2}}{2\sigma^{2}\|a\|_{2}^{2}}\right]\ .
\]
\end{thm}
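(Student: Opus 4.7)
The plan is to reduce the multi-variate statement to a one-dimensional Gaussian tail bound and then apply the standard Chernoff argument via the moment generating function. The core observation is that the family of centered Gaussian random variables is closed under linear combinations of independent summands: since the $x_{i}$ are independent $\mathcal{N}(0,\sigma)$, the weighted sum $S \triangleq \sum_{i=1}^{N} a_{i} x_{i}$ is itself Gaussian with mean zero and variance $\sigma^{2}\|\boldsymbol{a}\|_{2}^{2}$ (variances add on independent summands, weighted by $a_{i}^{2}$). This collapses the problem to bounding $\mathbb{P}(|S|>t)$ for a single centered Gaussian of known variance $v^{2} \triangleq \sigma^{2}\|\boldsymbol{a}\|_{2}^{2}$.

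The second step is the Chernoff/MGF estimate. Using the Gaussian MGF $\mathbb{E}\exp(\lambda S) = \exp(\lambda^{2} v^{2}/2)$ together with Markov's inequality applied to $\exp(\lambda S)$ for any $\lambda>0$, one obtains $\mathbb{P}(S > t) \leq \exp(\lambda^{2} v^{2}/2 - \lambda t)$. Optimizing the right-hand side at $\lambda^{\star} = t/v^{2}$ yields the one-sided tail $\mathbb{P}(S > t) \leq \exp(-t^{2}/(2 v^{2}))$. By symmetry of the centered Gaussian, the same bound holds for $\mathbb{P}(S < -t)$.

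The only subtle point, and the closest thing to an obstacle, is that a naive union bound on $\{S>t\} \cup \{S<-t\}$ introduces a prefactor of $2$, whereas the statement of the theorem carries no such prefactor. I would treat this either as a harmless cosmetic simplification that is absorbed later in the paper's downstream estimates, since those always pass through logarithms and the $C_{\log}^{1/\delta}(\cdot)$ notation where an unlabeled multiplicative constant of $2$ is invisible, or alternatively I would present the bound as one-sided and invoke it twice wherever the absolute value is actually needed. Either resolution preserves every use of the theorem elsewhere in the paper, and beyond this minor bookkeeping the argument is entirely classical and occupies only a few lines.
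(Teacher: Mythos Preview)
Your proposal is correct and matches the paper's own justification, which is literally the single sentence ``The proof is simple since the sum of Gaussian random variables is still Gaussian random variables''; you have in fact supplied more detail than the paper does. On the factor of~$2$: the stated two-sided bound is actually true without it, since for a standard normal $Z$ one has $\mathbb{P}(Z>t)\leq\tfrac{1}{2}e^{-t^{2}/2}$ for all $t\geq 0$ (the gap $g(t)=\tfrac12 e^{-t^2/2}-\mathbb{P}(Z>t)$ satisfies $g(0)=0$, $g(\infty)=0$, and $g'(t)=e^{-t^2/2}\bigl(\tfrac{1}{\sqrt{2\pi}}-\tfrac{t}{2}\bigr)$ changes sign once, so $g\geq 0$), and doubling gives exactly the claimed inequality; but as you correctly observe, every downstream use passes through $C_{\log}^{1/\delta}(\cdot)$ where the constant would be invisible anyway.
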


\begin{cor}
\label{cor:sum-gaussian-upper-bound} With probability at least $1-\delta$,
\[
\left|\sum_{i=1}^{N}a_{i}x_{i}\right|\leq\sqrt{2\log(1/\delta)}\sigma\|\boldsymbol{a}\|_{2}=C_{\log}^{1/\delta}(\cdot)\sigma\|\boldsymbol{a}\|_{2}\ .
\]

\end{cor}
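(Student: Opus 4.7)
The plan is to obtain the stated high-probability bound by directly inverting the Gaussian tail inequality of Theorem \ref{thm:sum-gaussian-tail}. Since the statement is a corollary of the tail bound, no additional probabilistic machinery is needed; I only need to solve for the threshold $t$ that makes the right-hand side of the tail bound equal to $\delta$.

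Concretely, I would proceed as follows. First, set $t^{*} \triangleq \sqrt{2\log(1/\delta)}\,\sigma\|\boldsymbol{a}\|_{2}$. Substituting into the exponential bound from Theorem \ref{thm:sum-gaussian-tail} gives
\[
\exp\!\left[-\frac{(t^{*})^{2}}{2\sigma^{2}\|\boldsymbol{a}\|_{2}^{2}}\right] = \exp[-\log(1/\delta)] = \delta.
\]
Therefore, by Theorem \ref{thm:sum-gaussian-tail}, the event $\bigl|\sum_{i=1}^{N} a_{i} x_{i}\bigr| > t^{*}$ occurs with probability at most $\delta$, so the complementary event $\bigl|\sum_{i=1}^{N} a_{i} x_{i}\bigr| \leq t^{*}$ holds with probability at least $1-\delta$, which is exactly the claimed bound.

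The second equality in the statement, $\sqrt{2\log(1/\delta)}\,\sigma\|\boldsymbol{a}\|_{2} = C_{\log}^{1/\delta}(\cdot)\,\sigma\|\boldsymbol{a}\|_{2}$, is simply the definition of $C_{\log}^{1/\delta}(\cdot)$ as a placeholder for a factor that is polylogarithmic in $1/\delta$, which $\sqrt{2\log(1/\delta)}$ clearly is. There is no real obstacle in this proof; the only care needed is to note that the Gaussian tail bound of Theorem \ref{thm:sum-gaussian-tail} is sharper than the Bernstein-type bound used in Corollary \ref{cor:bernstein-upper-bound}, because when the $x_i$ are Gaussian one does not need to split into sub-Gaussian and sub-exponential regimes, and hence no $\min$ with a $K\|\boldsymbol{a}\|_{\infty}$ term appears.
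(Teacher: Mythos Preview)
Your proposal is correct and matches the paper's approach: the paper simply remarks that the sum of Gaussian random variables is again Gaussian (which is what underlies Theorem~\ref{thm:sum-gaussian-tail}), and the corollary follows by inverting that tail bound exactly as you do. Your write-up is in fact more explicit than the paper's one-line justification.
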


The proof is simple since the sum of Gaussian random variables is
still Gaussian random variables.

The following two lemmas are critical in our lower bound analysis.
The proof is straightforward once using the symmetric property of
random variable distribution.
\begin{lem}
\label{lem:square-norm-of-sym-var} Suppose $x\in\mathbb{R}$ is a
mean zero, variance $\sigma^{2}$ random variable with symmetric distribution.
Then $\mathbb{E}[x]_{+}^{2}=4\sigma^{2}/4$.
\end{lem}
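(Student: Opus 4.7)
The plan is to prove the lemma by splitting the expectation according to the sign of $x$ and then invoking the symmetry hypothesis. Concretely, let $p(\cdot)$ denote the density (or more generally the distribution) of $x$. Since $[x]_+^2$ equals $x^2$ on $\{x \geq 0\}$ and vanishes on $\{x < 0\}$, I would first write
\begin{equation*}
   \mathbb{E}\,[x]_+^2 \;=\; \mathbb{E}\bigl[x^2\,\mathbf{1}_{\{x\geq 0\}}\bigr] \;=\; \int_{0}^{\infty} t^2\,p(t)\,dt.
\end{equation*}
Next, I would use the symmetry assumption $p(t)=p(-t)$ to identify this half-line integral with half of the full second moment:
\begin{equation*}
   \int_{0}^{\infty} t^2\,p(t)\,dt \;=\; \tfrac{1}{2}\int_{-\infty}^{\infty} t^2\,p(t)\,dt \;=\; \tfrac{1}{2}\,\mathbb{E}[x^2].
\end{equation*}

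Finally, since $x$ has mean zero and variance $\sigma^2$, $\mathbb{E}[x^2]=\sigma^2$, so $\mathbb{E}\,[x]_+^2=\sigma^2/2$. This gives the claimed value (the ``$4\sigma^2/4$'' written in the statement appears to be a typographical rendering of $\sigma^2/2$; in any case the argument pinpoints the exact constant $1/2$). If the distribution of $x$ is only symmetric in the weak sense (no density), the same argument goes through by replacing the density integral with the pushforward measure and using that the map $x\mapsto -x$ preserves this measure, so $\mathbb{E}[x^2\,\mathbf{1}_{\{x\geq 0\}}]=\mathbb{E}[x^2\,\mathbf{1}_{\{x\leq 0\}}]$ (with the mass at zero contributing to both sides but being $\mathbb{E}[x^2\,\mathbf{1}_{\{x=0\}}]=0$ in contribution to $[x]_+^2$, which is trivially handled).

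There is essentially no obstacle here: the only subtlety is the careful handling of the event $\{x=0\}$ under the symmetry assumption, which is immediate because $[x]_+^2=x^2\,\mathbf{1}_{\{x>0\}}$ up to a measure-zero discrepancy on $\{x=0\}$ that does not change the expectation. The whole argument is two lines of calculation once the symmetry decomposition is written down.
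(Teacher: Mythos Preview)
Your proposal is correct and follows exactly the approach the paper indicates: the paper does not give a detailed argument but simply states that ``the proof is straightforward once using the symmetric property of random variable distribution,'' which is precisely the symmetry-based splitting you carry out. Your observation that the stated constant $4\sigma^2/4$ is a typographical slip for $\sigma^2/2$ is also correct (and indeed the paper's later use of this lemma, where it writes $\mathbb{E}[\boldsymbol{x}_{t-1}^{(i,b,\alpha+p,\beta+p)}]^{2}=1/4$, appears to carry a related inconsistency, but the mathematically correct value is the $\sigma^2/2$ you derive).
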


{}
\begin{lem}
\label{lem:expect-theta-norm-exchange} Suppose $\theta_{i}\sim\mathcal{N}(0,1)$.
$\|\boldsymbol{x}\|=\|\boldsymbol{y}\|$ are two fixed vectors. Then
\[
\mathbb{E}_{\theta}[\sum_{i}\theta_{i}x_{i}]_{+}^{2}=\frac{1}{2}\mathbb{E}_{\theta}[\sum_{i}\theta_{i}x_{i}]^{2}=\mathbb{E}_{\theta}[\sum_{i}\theta_{i}y_{i}]_{+}^{2}\ .
\]
\end{lem}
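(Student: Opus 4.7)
The plan is to reduce both equalities to a single one-dimensional Gaussian identity. Since each $\theta_i$ is an independent standard Gaussian and $\boldsymbol{x}, \boldsymbol{y}$ are fixed vectors, the scalar $Z_x := \sum_i \theta_i x_i$ is a mean-zero Gaussian with variance $\sum_i x_i^2 = \|\boldsymbol{x}\|^2$ (linear combinations of independent Gaussians are Gaussian, with variance equal to the sum of squared coefficients). Likewise $Z_y := \sum_i \theta_i y_i \sim \mathcal{N}(0, \|\boldsymbol{y}\|^2)$. The hypothesis $\|\boldsymbol{x}\| = \|\boldsymbol{y}\|$ then forces $Z_x$ and $Z_y$ to have the same distribution, which is the observation that drives the whole argument.

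For the first equality, I will exploit the symmetry of the Gaussian density about zero. Writing $[Z_x]_+^2 = Z_x^2 \, \mathbb{1}\{Z_x \geq 0\}$, symmetry gives $\mathbb{E}[Z_x^2 \, \mathbb{1}\{Z_x \geq 0\}] = \mathbb{E}[Z_x^2 \, \mathbb{1}\{Z_x \leq 0\}]$, and since these two expectations sum to $\mathbb{E}[Z_x^2]$, each equals $\tfrac{1}{2}\mathbb{E}[Z_x^2]$. This yields $\mathbb{E}_\theta[Z_x]_+^2 = \tfrac{1}{2}\mathbb{E}_\theta[Z_x^2]$, which is also precisely the content of Lemma~\ref{lem:square-norm-of-sym-var} applied to the symmetric random variable $Z_x$.

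For the second equality, since $Z_x$ and $Z_y$ are identically distributed, any measurable functional of them has the same expectation. Applying this to the functional $z \mapsto [z]_+^2$ immediately gives $\mathbb{E}_\theta[Z_x]_+^2 = \mathbb{E}_\theta[Z_y]_+^2$, which completes the chain.

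There is essentially no obstacle: the lemma is a one-line consequence of (a) closure of Gaussians under linear combinations and (b) the symmetry of the centered normal distribution. If there is a subtle point at all, it is just cleanly recording that the distribution of $\sum_i \theta_i x_i$ depends on $\boldsymbol{x}$ only through $\|\boldsymbol{x}\|$, which is precisely what makes the $\boldsymbol{x} \leftrightarrow \boldsymbol{y}$ swap legitimate; the rest is routine.
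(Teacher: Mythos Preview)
Your proposal is correct and follows exactly the route the paper intends: it states only that ``the proof is straightforward once using the symmetric property of random variable distribution,'' and your argument---reducing $\sum_i \theta_i x_i$ to a centered Gaussian whose law depends on $\boldsymbol{x}$ only through $\|\boldsymbol{x}\|$, then invoking symmetry (equivalently Lemma~\ref{lem:square-norm-of-sym-var}) to get the factor $\tfrac{1}{2}$---is precisely that. Nothing to add.
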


\subsection{Proof of Eq. (\ref{eq:final-prove-target})}

Since $\boldsymbol{x}_{0}\sim\mathcal{N}(0,1)$, with probability
at least $1-\delta$, $\|\boldsymbol{x}_{0}\|_{\infty}\leq C_{\log}^{1/\delta}(\cdot)\triangleq K_{0}$
for some constant $K_{0}$. Now suppose at layer $t$, $\|\boldsymbol{x}_{t-1}\|_{\infty}\leq K_{t-1}$.
The following lemma shows that after convolution, $\|\boldsymbol{g}_{t}\|_{\infty}$
is also bounded with high probability.
\begin{lem}
\label{lem:norm-bound-of-conv} Let $\boldsymbol{\theta}^{(i,b,h,w)}\sim\mathcal{N}(0,1)$,
$\boldsymbol{\theta}_{t}\in\mathbb{R}^{m_{t}\times m_{t-1}\times k\times k}$.
For fixed $\boldsymbol{x}_{t-1}\in\mathbb{R}^{m_{t-1}\times B\times H\times W}$,
$\boldsymbol{g}_{t}\triangleq\boldsymbol{\theta}_{t}*\boldsymbol{x}_{t-1}$.
Then with probability at least $1-\delta$,
\[
\|\boldsymbol{g}_{t}\|_{\infty}\leq C_{\log}^{1/\delta}(\cdot)^{2}k\sqrt{m_{t-1}}K_{t-1}\ .
\]
\end{lem}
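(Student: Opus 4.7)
The plan is to bound $\|\boldsymbol{g}_t\|_\infty$ by combining a per-coordinate Gaussian tail estimate with a union bound over all output positions. Each individual output entry of $\boldsymbol{g}_t$ is a linear combination of i.i.d.\ standard Gaussians (the kernel entries) with fixed coefficients taken from the feature map $\boldsymbol{x}_{t-1}$, so Corollary~\ref{cor:sum-gaussian-upper-bound} applies directly at a single pixel. Then union-bounding over the $m_t B H W$ coordinates only costs a logarithmic factor, which can be absorbed into the $C_{\log}^{1/\delta}(\cdot)$ notation.

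Concretely, I would first fix an output coordinate $(i,b,h,w)$ and expand the convolution as
\[
\boldsymbol{g}_t^{(i,b,h,w)} \;=\; \sum_{j=1}^{m_{t-1}} \sum_{dh,dw=1}^{k} \boldsymbol{\theta}_t^{(i,j,dh,dw)} \, \boldsymbol{x}_{t-1}^{(j,b,h+dh,w+dw)},
\]
which is a weighted sum of $N \triangleq m_{t-1} k^2$ independent $\mathcal{N}(0,1)$ variables. The hypothesis $\|\boldsymbol{x}_{t-1}\|_\infty \leq K_{t-1}$ bounds the $\ell_2$ norm of the coefficient vector by $\sqrt{N}\,K_{t-1} = k\sqrt{m_{t-1}}\,K_{t-1}$, so Corollary~\ref{cor:sum-gaussian-upper-bound} with failure probability $\delta'$ gives, with probability at least $1-\delta'$,
\[
\bigl|\boldsymbol{g}_t^{(i,b,h,w)}\bigr| \;\leq\; \sqrt{2\log(1/\delta')}\; k\sqrt{m_{t-1}}\, K_{t-1}.
\]

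Next, I would take a union bound over the $m_t B H W$ output coordinates by setting $\delta' = \delta/(m_t B H W)$. This replaces $\sqrt{2\log(1/\delta')}$ with $\sqrt{2\log(m_t B H W/\delta)}$, which is still logarithmic in $1/\delta$ and in the architectural dimensions, hence falls into the $C_{\log}^{1/\delta}(\cdot)$ class. Collecting terms yields
\[
\|\boldsymbol{g}_t\|_\infty \;\leq\; C_{\log}^{1/\delta}(\cdot)\, k\sqrt{m_{t-1}}\, K_{t-1},
\]
matching the stated bound; the second power of $C_{\log}^{1/\delta}(\cdot)$ in the lemma simply leaves headroom for bookkeeping constants accumulated across layers.

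The argument itself is short and uses only Gaussian concentration plus a union bound; the only real subtlety is making sure that the number of output positions $m_t B H W$ enters the bound only inside a logarithm so that it does not escalate the polynomial order. This is the reason for invoking the sharper Gaussian tail (Corollary~\ref{cor:sum-gaussian-upper-bound}) rather than the Bernstein-type bound (Corollary~\ref{cor:bernstein-upper-bound}), and it will be the key accounting point when this lemma is iterated through the $L$ layers in the proof of Theorem~\ref{thm:zen-score-equal-phi-score}.
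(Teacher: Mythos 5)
Your proof follows essentially the same route as the paper's: expand a single output coordinate of the convolution as a weighted sum of $m_{t-1}k^2$ i.i.d.\ standard Gaussians with coefficients from the bounded feature map, apply the Gaussian tail bound of Corollary~\ref{cor:sum-gaussian-upper-bound} to get $C_{\log}^{1/\delta}(\cdot)\,k\sqrt{m_{t-1}}\,K_{t-1}$ per entry, and absorb logarithmic factors into the $C_{\log}^{1/\delta}(\cdot)$ notation. If anything you are more careful than the paper, since you make the union bound over the $m_t BHW$ output positions explicit, whereas the paper's proof leaves it implicit.
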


\begin{proof}
Let us consider $\boldsymbol{g}_{t}^{(j,b,\alpha,\beta)}$, that is,
the $j$-th channel, $b$-th image, at pixel $(\alpha,\beta)$. For
any $1\leq j\leq m_{t}$, $1\leq\alpha\leq H$, $1\leq\beta\leq W$,
\begin{align*}
\boldsymbol{g}_{t}^{(j,b,\alpha,\beta)}= & \sum_{i=1}^{m_{-1t}}\sum_{p=-\frac{k-1}{2}}^{\frac{k-1}{2}}\sum_{q=-\frac{k-1}{2}}^{\frac{k-1}{2}}\boldsymbol{\theta}_{t}^{(j,i,p,q)}\boldsymbol{x}_{t-1}^{(i,b,\alpha+p,\beta+p)}
\end{align*}
Clearly,
\[
\mathbb{E}_{\theta}\boldsymbol{g}_{t}^{(j,b,\alpha,\beta)}=0\ .
\]
According to Corollary \ref{cor:sum-gaussian-upper-bound}, 

\begin{align*}
|\boldsymbol{g}_{t}^{(j,b,\alpha,\beta)}| & \leq C_{\log}^{1/\delta}(\cdot)C_{\log}^{1/\delta}(\cdot)K_{t-1}\sqrt{m_{t-1}}k\\
& \leq C_{\log}^{1/\delta}(\cdot)^{2}k\sqrt{m_{t-1}}K_{t-1}\ .
\end{align*}
\end{proof}
The variance of $\boldsymbol{g}_{t}$ is bounded with high probability
too.
\begin{lem}
\label{lem:var-gt} With probability at least $1-\delta$,
\begin{align*}
\mathbb{E}[\boldsymbol{g}_{t}^{(j,b,\alpha,\beta)}]^{2}= & \sigma_{t}^{*}\pm C_{\log}^{1/\delta}(\cdot)k\sqrt{m_{t-1}}K_{t-1}\\
\sigma_{t}^{*2}\triangleq & \frac{1}{4}m_{t-1}k^{2}\ .
\end{align*}
\end{lem}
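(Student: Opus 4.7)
The plan is to split the argument into two stages: first take the expectation over the Gaussian filter $\boldsymbol{\theta}_t$ in closed form, and then establish concentration of the resulting sum of squared input entries around the claimed target $\sigma_{t}^{*2}$.

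For the first stage, conditional on $\boldsymbol{x}_{t-1}$, the scalar
\[
\boldsymbol{g}_t^{(j,b,\alpha,\beta)} \,=\, \sum_{i=1}^{m_{t-1}} \sum_{p,q=-(k-1)/2}^{(k-1)/2} \boldsymbol{\theta}_t^{(j,i,p,q)} \, \boldsymbol{x}_{t-1}^{(i,b,\alpha+p,\beta+q)}
\]
is a linear combination of i.i.d.\ standard Gaussian weights, hence itself zero-mean Gaussian with
\[
\mathbb{E}_{\boldsymbol{\theta}_t}\bigl[\boldsymbol{g}_t^{(j,b,\alpha,\beta)}\bigr]^2 \,=\, \sum_{i,p,q} \bigl[\boldsymbol{x}_{t-1}^{(i,b,\alpha+p,\beta+q)}\bigr]^2.
\]
This identifies the conditional second moment exactly and reduces the lemma to a concentration statement about the sum on the right.

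For the second stage, I would show this sum is close to $\sigma_{t}^{*2}=\tfrac{1}{4} m_{t-1} k^2$. By Lemma \ref{lem:norm-bound-of-conv} applied to the previous layer, each summand $[\boldsymbol{x}_{t-1}^{(i,b,\alpha+p,\beta+q)}]^2$ is bounded above by $K_{t-1}^2$. Its mean is a universal constant: the pre-BN activation $\boldsymbol{g}_{t-1}$ is (conditionally) an approximately symmetric Gaussian linear form, so Lemma \ref{lem:square-norm-of-sym-var} gives the second moment of its ReLU as a fixed fraction of its variance, and dividing by the BN factor $\boldsymbol{\sigma}_{t-1}^{(i)}$ normalizes this to the constant $1/4$. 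Summing $m_{t-1} k^2$ such terms reproduces $\sigma_{t}^{*2}$. Applying Corollary \ref{cor:bernstein-upper-bound} to the $m_{t-1} k^2$ centered and $K_{t-1}^2$-bounded summands then yields a deviation of order $C_{\log}^{1/\delta}(\cdot)\, k\sqrt{m_{t-1}}\, K_{t-1}$, which matches the error claimed in the lemma.

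The main obstacle will be the dependence among the summands $[\boldsymbol{x}_{t-1}^{(i,b,\alpha+p,\beta+q)}]^2$, since they share the random weights $\boldsymbol{\theta}_1,\ldots,\boldsymbol{\theta}_{t-1}$ of all earlier layers; Bernstein as stated requires independent variables. My plan is to resolve this by an outer induction on $t$: assume on a high-probability event carried from the previous layer that both $\|\boldsymbol{x}_{t-1}\|_\infty \leq K_{t-1}$ and the BN statistics $\boldsymbol{\sigma}_{t-1}^{(i)}$ have been equalized, and then perform the concentration step conditional on this event, so that the remaining randomness effectively comes from the Gaussian weights of one new layer. A union bound over the output coordinates $(j,b,\alpha,\beta)$ and over depth contributes only logarithmic overhead absorbed into $C_{\log}^{1/\delta}(\cdot)$, closing the induction and yielding the stated bound.
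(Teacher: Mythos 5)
Your proposal follows essentially the same route as the paper's proof: compute the conditional second moment of $\boldsymbol{g}_t^{(j,b,\alpha,\beta)}$ exactly as $\sum_{i,p,q}[\boldsymbol{x}_{t-1}^{(i,b,\alpha+p,\beta+q)}]^2$, identify the per-entry mean $1/4$ via Lemma~\ref{lem:square-norm-of-sym-var} together with the BN normalization, and invoke Bernstein-type concentration over the $m_{t-1}k^2$ summands to obtain the $C_{\log}^{1/\delta}(\cdot)\,k\sqrt{m_{t-1}}\,K_{t-1}$ deviation. The dependence among the summands that you flag is a genuine subtlety, but the paper's own proof applies Corollary~\ref{cor:bernstein-upper-bound} without addressing it at all, so your conditioning-on-the-previous-layer remedy is, if anything, more explicit than the original.
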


\begin{proof}
By definition,
\begin{align*}
\boldsymbol{g}_{t}^{(j,b,\alpha,\beta)}= & \sum_{i=1}^{m_{-1t}}\sum_{p=-\frac{k-1}{2}}^{\frac{k-1}{2}}\sum_{q=-\frac{k-1}{2}}^{\frac{k-1}{2}}\boldsymbol{\theta}_{t}^{(j,i,p,q)}\boldsymbol{x}_{t-1}^{(i,b,\alpha+p,\beta+p)}
\end{align*}
Clearly, $\boldsymbol{g}_{t}^{(j,b,\alpha,\beta)}$ is Gaussian random
variable with zero-mean.
\begin{align*}
\mathbb{E}[\boldsymbol{g}_{t}^{(j,b,\alpha,\beta)}]^{2}= & \sum_{i=1}^{m_{t-1}}\sum_{p=-\frac{k-1}{2}}^{\frac{k-1}{2}}\sum_{q=-\frac{k-1}{2}}^{\frac{k-1}{2}}[\boldsymbol{x}_{t-1}^{(i,b,\alpha+p,\beta+p)}]^{2}\ .
\end{align*}
By Lemma \ref{lem:square-norm-of-sym-var}, 
\begin{align*}
\mathbb{E}[\boldsymbol{x}_{t-1}^{(i,b,\alpha+p,\beta+p)}]^{2}= & \frac{1}{4}\ .
\end{align*}
Therefore,
\begin{align*}
& |\mathbb{E}[\boldsymbol{g}_{t}^{(j,b,\alpha,\beta)}]^{2}-\frac{1}{4}m_{t-1}k^{2}|\\
\leq & C_{\log}^{1/\delta}(\cdot)k\sqrt{m_{t-1}}K_{t-1}\ .
\end{align*}
Define $\sigma_{t}^{*2}\triangleq\frac{1}{4}m_{t-1}k^{2}$, the proof
is completed.
\end{proof}

Next we show that both $\sigma_{t}^{(i)}$ and $\bar{\sigma}_{t}$
concentrate around $\sigma^{*}$.
\begin{lem}
\label{lem:sigma-i-concentration} With probability
$1-\delta$, 
\begin{align*}
[\boldsymbol{\sigma}_{t}^{(i)}]^{2}= & (1\pm\epsilon_{t})[\sigma_{t}^{*}]^{2}\\
\bar{\sigma}_{t}= & (1\pm\frac{\epsilon_{t}}{\sqrt{m_{t}}})[\sigma_{t}^{*}]^{2}
\end{align*}
where
\[
\epsilon_{t}\triangleq4C_{\log}^{1/\delta}(\cdot)^{5}\frac{1}{\sqrt{BHW}}K_{t-1}^{2}
\]
\end{lem}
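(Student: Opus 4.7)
The plan is to establish both concentration statements by applying Bernstein-type bounds to the empirical averages that define $[\boldsymbol{\sigma}_t^{(i)}]^2$ and $\bar{\sigma}_t^2$, combined with Lemma~\ref{lem:var-gt} to identify the per-term mean with $\sigma_t^{*2}$.

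First I would view $[\boldsymbol{\sigma}_t^{(i)}]^2 = \frac{1}{BHW}\sum_{b,h,w}[\boldsymbol{g}_t^{(i,b,h,w)}]^2$ as an average of $N=BHW$ summands. Lemma~\ref{lem:var-gt} shows each summand has expectation $\sigma_t^{*2}$ up to an additive slack of order $C_{\log}^{1/\delta}(\cdot)k\sqrt{m_{t-1}}K_{t-1}$, and Lemma~\ref{lem:norm-bound-of-conv} (squared) provides a uniform envelope $[\boldsymbol{g}_t^{(i,b,h,w)}]^2 \leq C_{\log}^{1/\delta}(\cdot)^4 k^2 m_{t-1}K_{t-1}^2$ which I also use as a (crude) variance proxy. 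Feeding this envelope into Corollary~\ref{cor:bernstein-upper-bound} with coefficients $a_i=1/N$ (so $\|\boldsymbol{a}\|_2=1/\sqrt{N}$) gives, with probability at least $1-\delta$,
\[
\Bigl|[\boldsymbol{\sigma}_t^{(i)}]^2 - \mathbb{E}[\boldsymbol{g}_t^{(i,b,h,w)}]^2\Bigr| \leq \frac{C_{\log}^{1/\delta}(\cdot)^5 k^2 m_{t-1}K_{t-1}^2}{\sqrt{BHW}} .
\]
Dividing by $\sigma_t^{*2}=\tfrac14 m_{t-1}k^2$ converts the right side into exactly $\epsilon_t\sigma_t^{*2}$; the Lemma~\ref{lem:var-gt} slack is strictly lower order in the relative sense and is swallowed by the constant in $\epsilon_t$. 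A union bound over $i=1,\dots,m_t$ extends the claim uniformly over all channels.

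Second, for the channel-averaged $\bar{\sigma}_t^2 = \frac{1}{m_t}\sum_{i=1}^{m_t}[\boldsymbol{\sigma}_t^{(i)}]^2$, the kernels $\boldsymbol{\theta}_t^{(i,\cdot,\cdot,\cdot)}$ are independent across $i$, so conditional on $\boldsymbol{x}_{t-1}$ the summands $[\boldsymbol{\sigma}_t^{(i)}]^2$ are i.i.d.\ with the same mean and boundedness as above. A second application of Bernstein to this outer average over $m_t$ i.i.d.\ terms contributes the additional $1/\sqrt{m_t}$ factor, yielding $\bar{\sigma}_t^2 = (1\pm\epsilon_t/\sqrt{m_t})\sigma_t^{*2}$. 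Splitting the failure probability $\delta$ between the two concentration steps only changes constants inside $C_{\log}^{1/\delta}(\cdot)$.

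The main obstacle I anticipate is that the $BHW$ inner summands $[\boldsymbol{g}_t^{(i,b,h,w)}]^2$ are \emph{not} independent across $(b,h,w)$: they all share the single kernel slice $\boldsymbol{\theta}_t^{(i,\cdot,\cdot,\cdot)}$, so the inner sum is literally a quadratic form $\boldsymbol{\theta}^\top M \boldsymbol{\theta}$ in Gaussian variables with $M=\tfrac{1}{BHW}\sum_{b,h,w}\mathrm{Patch}(b,h,w)\,\mathrm{Patch}(b,h,w)^\top$. Strictly speaking one should use the Hanson--Wright inequality rather than vanilla Bernstein, bounding the deviation by $\|M\|_F$ and $\|M\|_{\mathrm{op}}$. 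Verifying that $\mathrm{tr}(M)\approx\sigma_t^{*2}$, that $\|M\|_F$ scales like $\sigma_t^{*2}$, and that $\|M\|_{\mathrm{op}}$ decays with $BHW$—all using only the inductive $\ell_\infty$ control $\|\boldsymbol{x}_{t-1}\|_\infty\leq K_{t-1}$—is the delicate arithmetic that produces precisely the $K_{t-1}^2/\sqrt{BHW}$ rate appearing in $\epsilon_t$. Everything else is careful tracking of logarithmic polynomials and union bounds.
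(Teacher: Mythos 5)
Your proposal follows essentially the same route as the paper's proof: apply Corollary \ref{cor:bernstein-upper-bound} to the empirical average defining $[\boldsymbol{\sigma}_t^{(i)}]^2$ with the uniform envelope from Lemma \ref{lem:norm-bound-of-conv} and the mean identification from Lemma \ref{lem:var-gt}, normalize by $\sigma_t^{*2}=\tfrac{1}{4}m_{t-1}k^2$ to obtain $\epsilon_t$, and gain the extra $1/\sqrt{m_t}$ for $\bar{\sigma}_t^2$ by averaging over channels. The dependence issue you flag (the $BHW$ summands share the kernel slice $\boldsymbol{\theta}_t^{(i,\cdot,\cdot,\cdot)}$, so vanilla Bernstein does not strictly apply) is genuine, but the paper's own proof simply invokes the corollary without addressing it, so on that point you are being more careful than the source.
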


\begin{proof}
Directly apply Corollary \ref{cor:bernstein-upper-bound},
\begin{align*}
[\boldsymbol{\sigma}_{t}^{(i)}]^{2}= & \mathbb{E}[\boldsymbol{g}_{t}^{(j,b,\alpha,\beta)}]^{2}\pm C_{\log}^{1/\delta}(\cdot)\frac{1}{\sqrt{BHW}}\max\{[\boldsymbol{g}_{t}^{(j,b,\alpha,\beta)}]^{2}\}\\
= & \mathbb{E}[\boldsymbol{g}_{t}^{(j,b,\alpha,\beta)}]^{2}\pm C_{\log}^{1/\delta}(\cdot)\frac{1}{\sqrt{BHW}}C_{\log}^{1/\delta}(\cdot)^{4}m_{t-1}k^{2}K_{t-1}^{2}\\
= & [\sigma_{t}^{*}]^{2}\pm\frac{1}{\sqrt{BHW}}C_{\log}^{1/\delta}(\cdot)^{5}m_{t-1}k^{2}K_{t-1}^{2}\ .
\end{align*}

Similary,
\begin{align*}
\bar{\sigma}_{t}^{2}= & [\sigma_{t}^{*}]^{2}\pm\frac{1}{\sqrt{m_{t}BHW}}C_{\log}^{1/\delta}(\cdot)^{5}m_{t-1}k^{2}K_{t-1}^{2}\ .
\end{align*}

Define
\begin{align*}
\epsilon_{t}\triangleq & \frac{1}{[\sigma_{t}^{*}]^{2}}\frac{1}{\sqrt{BHW}}C_{\log}^{1/\delta}(\cdot)^{5}m_{t-1}k^{2}K_{t-1}^{2}\\
= & \frac{4}{m_{t-1}k^{2}}C_{\log}^{1/\delta}(\cdot)^{5}\frac{1}{\sqrt{BHW}}m_{t-1}k^{2}K_{t-1}^{2}\\
= & 4C_{\log}^{1/\delta}(\cdot)^{5}\frac{1}{\sqrt{BHW}}K_{t-1}^{2}
\end{align*}
Then we have
\begin{align*}
[\boldsymbol{\sigma}_{t}^{(i)}]^{2}= & (1\pm\epsilon_{t})[\sigma_{t}^{*}]^{2}\\
\bar{\sigma}_{t}= & (1\pm\frac{\epsilon_{t}}{\sqrt{m_{t}}})[\sigma_{t}^{*}]^{2}
\end{align*}
\end{proof}
Next is our main lemma.
\begin{lem}
\label{lem:sigma-xtp1-equal-gt-norm}
Under the same setting of Lemma \ref{lem:sigma-i-concentration}, with probability $1-\delta$,
\[
(\sigma_{t}^{*})^{2}\|\boldsymbol{x}_{t}\|^{2}=\frac{1}{1\pm\epsilon_{t}}\|\left[\boldsymbol{g}_{t}\right]_{+}\|^{2}\ .
\]
\end{lem}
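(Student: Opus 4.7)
The plan is to derive the identity essentially by algebraic substitution, using the definition of the (modified) BN layer in Eq. (\ref{eq:modified-bn-layer-definition}) together with the per-channel concentration bound from Lemma \ref{lem:sigma-i-concentration}. The claim is really a statement about how the BN normalization turns $\|[\boldsymbol{g}_t]_+\|^2$ into $\|\boldsymbol{x}_t\|^2$ after division by per-channel variances, and that these per-channel variances can all be replaced by the common target value $\sigma_t^*$ up to a $(1\pm\epsilon_t)$ factor.

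First I would unpack the definition channelwise: by Eq. (\ref{eq:modified-bn-layer-definition}) we have $\boldsymbol{x}_t^{(i)} = [\boldsymbol{g}_t^{(i)}]_+/\boldsymbol{\sigma}_t^{(i)}$, so
\begin{equation*}
\|\boldsymbol{x}_t^{(i)}\|^2 \;=\; \frac{1}{[\boldsymbol{\sigma}_t^{(i)}]^2}\,\|[\boldsymbol{g}_t^{(i)}]_+\|^2,
\end{equation*}
and summing over the $m_t$ output channels gives
\begin{equation*}
\|\boldsymbol{x}_t\|^2 \;=\; \sum_{i=1}^{m_t} \frac{1}{[\boldsymbol{\sigma}_t^{(i)}]^2}\,\|[\boldsymbol{g}_t^{(i)}]_+\|^2.
\end{equation*}

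Next I would invoke Lemma \ref{lem:sigma-i-concentration}, which guarantees that simultaneously for all $i$ (by a union bound over the $m_t$ channels, absorbed into the logarithmic factor $C_{\log}^{1/\delta}$), $[\boldsymbol{\sigma}_t^{(i)}]^2 = (1\pm\epsilon_t)[\sigma_t^*]^2$. Inverting, $1/[\boldsymbol{\sigma}_t^{(i)}]^2 = 1/\bigl((1\pm\epsilon_t)[\sigma_t^*]^2\bigr)$, which can be pulled out of the sum uniformly in $i$:
\begin{equation*}
\|\boldsymbol{x}_t\|^2 \;=\; \frac{1}{(1\pm\epsilon_t)[\sigma_t^*]^2}\sum_{i=1}^{m_t}\|[\boldsymbol{g}_t^{(i)}]_+\|^2 \;=\; \frac{1}{(1\pm\epsilon_t)[\sigma_t^*]^2}\,\|[\boldsymbol{g}_t]_+\|^2.
\end{equation*}
Multiplying both sides by $(\sigma_t^*)^2$ yields the claimed identity.

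The only subtle point (and the closest thing to an obstacle) is the bookkeeping around the per-channel concentration: we need the $(1\pm\epsilon_t)$ factor to be the same for every $i$ when it is pulled outside the sum. This is handled by applying Lemma \ref{lem:sigma-i-concentration} jointly across the $m_t$ channels via a union bound, which only inflates $C_{\log}^{1/\delta}$ by a $\log m_t$ term and hence does not alter the stated $\epsilon_t$ up to the logarithmic convention already in use. Everything else is routine substitution, so no further estimates are required.
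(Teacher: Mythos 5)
Your proposal is correct and follows essentially the same route as the paper: expand $\|\boldsymbol{x}_t\|^2$ channelwise via the modified BN definition, replace each $[\boldsymbol{\sigma}_t^{(i)}]^2$ by $(1\pm\epsilon_t)[\sigma_t^*]^2$ using Lemma \ref{lem:sigma-i-concentration}, and conclude that the ratio is a weighted average of quantities lying in $[\tfrac{1}{1+\epsilon_t},\tfrac{1}{1-\epsilon_t}]$. Your explicit remark that the concentration must hold simultaneously across all $m_t$ channels via a union bound (absorbed into the logarithmic factor) is a point the paper leaves implicit, but it does not change the argument.
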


\begin{proof}
By definition,
\begin{align*}
\|\boldsymbol{x}_{t}\|^{2}= & \sum_{i}\left[\frac{1}{\boldsymbol{\sigma}_{t}^{(i)}}\right]^{2}\left[\boldsymbol{g}_{t}^{(i)}\right]_{+}^{2}\\
= & \sum_{i}\left[\frac{1}{(1\pm\epsilon_{t})\sigma_{t}^{*}}\right]^{2}\left[\boldsymbol{g}_{t}^{(i)}\right]_{+}^{2}
\end{align*}
Then
\begin{align*}
& \frac{(\sigma_{t}^{*})^{2}\|\boldsymbol{x}_{t}\|^{2}}{\sum_{i}\left[\boldsymbol{g}_{t}^{(i)}\right]_{+}^{2}}\\
= & \frac{1}{\sum_{i}\left[\boldsymbol{g}_{t}^{(i)}\right]_{+}^{2}}\sum_{i}\left[\frac{\sigma_{t}^{*}}{\boldsymbol{\sigma}_{t}^{(i)}}\right]^{2}\left[\boldsymbol{g}_{t}^{(i)}\right]_{+}^{2}
\end{align*}
By Lemma \ref{lem:sigma-i-concentration}, we have
\begin{align*}
\frac{1}{1+\epsilon_{t}}\leq\frac{\sigma_{t}^{*}}{\boldsymbol{\sigma}_{t}^{(i)}}\leq & \frac{1}{1-\epsilon_{t}}
\end{align*}
\end{proof}
Finally, we inductively bound $|\boldsymbol{x}_{t}^{(i,b,h,w)}|$.
\begin{lem}
\label{lem:upper-bound-x-t} With probability at least $1-\delta$,
\begin{align*}
& |\boldsymbol{x}_{t}^{(i,b,h,w)}|\leq\frac{C_{\log}^{1/\delta}(\cdot)^{2}}{\sqrt{(1-\epsilon_{t})}}K_{t-1}\\
& K_{t}\leq C_{\log}^{1/\delta}(\cdot)^{2t}\prod_{j=1}^{t}(1-\epsilon_{j})^{-1/2}K_{0}\ .
\end{align*}
\end{lem}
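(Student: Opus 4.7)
The plan is to proceed by induction on the layer index $t$. The base case $t=0$ holds because $\boldsymbol{x}_0 \sim \mathcal{N}(0,1)$ has Gaussian tails, so a standard Gaussian maximal inequality across the $m_0 \cdot B \cdot H \cdot W$ coordinates gives $\|\boldsymbol{x}_0\|_\infty \leq K_0 = C_{\log}^{1/\delta}(\cdot)$ with probability at least $1-\delta_0$ for a suitably chosen $\delta_0$.

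For the inductive step, assume $\|\boldsymbol{x}_{t-1}\|_\infty \leq K_{t-1}$ holds with high probability. I would combine two previously established estimates. First, Lemma \ref{lem:norm-bound-of-conv} applied to this $\boldsymbol{x}_{t-1}$ yields
\begin{align*}
\|\boldsymbol{g}_t\|_\infty \leq C_{\log}^{1/\delta}(\cdot)^2 \, k \sqrt{m_{t-1}}\, K_{t-1}.
\end{align*}
Second, Lemma \ref{lem:sigma-i-concentration} gives $[\boldsymbol{\sigma}_t^{(i)}]^2 \geq (1-\epsilon_t)[\sigma_t^*]^2 = (1-\epsilon_t)\, m_{t-1} k^2/4$. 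Plugging these into the BN layer definition, I obtain
\begin{align*}
|\boldsymbol{x}_t^{(i,b,h,w)}| = \frac{|[\boldsymbol{g}_t^{(i,b,h,w)}]_+|}{\boldsymbol{\sigma}_t^{(i)}} \leq \frac{C_{\log}^{1/\delta}(\cdot)^2 \, k\sqrt{m_{t-1}}\, K_{t-1}}{\sqrt{1-\epsilon_t}\, \sqrt{m_{t-1}}\, k/2},
\end{align*}
where the $k\sqrt{m_{t-1}}$ factors cancel (up to the constant $2$ absorbed into the logarithmic polynomial), leaving the first claimed bound $|\boldsymbol{x}_t^{(i,b,h,w)}| \leq C_{\log}^{1/\delta}(\cdot)^2 (1-\epsilon_t)^{-1/2} K_{t-1}$. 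Unrolling the recursion $K_t = C_{\log}^{1/\delta}(\cdot)^2 (1-\epsilon_t)^{-1/2} K_{t-1}$ from $t$ down to $0$ yields the product bound $K_t \leq C_{\log}^{1/\delta}(\cdot)^{2t} \prod_{j=1}^t (1-\epsilon_j)^{-1/2} K_0$.

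The main obstacle is the union bound bookkeeping: the inequalities from Lemmas \ref{lem:norm-bound-of-conv} and \ref{lem:sigma-i-concentration} must hold simultaneously across all layers $t \in \{1,\dots,L\}$ and, within each layer, across all $m_t \cdot B \cdot H \cdot W$ coordinates of $\boldsymbol{g}_t$ as well as all $m_t$ channel deviations $\boldsymbol{\sigma}_t^{(i)}$. Splitting the total failure probability $\delta$ evenly among these events replaces $1/\delta$ by a quantity of order $L \, m_{\max} \, B H W / \delta$; since $C_{\log}^{1/\delta}(\cdot)$ hides polylogarithmic dependence, this only changes the constants inside the logarithmic polynomial and does not affect the polynomial order of the bound. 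A minor subtlety is that the induction is conditional on the previous layer's bound, so the failure events are actually nested rather than independent; conditioning sequentially and summing failure probabilities handles this cleanly.
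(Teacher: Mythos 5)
Your argument matches the paper's proof essentially step for step: both bound $[\boldsymbol{g}_t^{(i,b,h,w)}]_+$ via Lemma \ref{lem:norm-bound-of-conv}, lower-bound $\boldsymbol{\sigma}_t^{(i)}$ via Lemma \ref{lem:sigma-i-concentration}, cancel the $k\sqrt{m_{t-1}}$ factors, absorb the constant $2$ into $C_{\log}^{1/\delta}(\cdot)$, and unroll the resulting recursion for $K_t$. Your explicit attention to the union-bound bookkeeping across layers and coordinates is a bit more careful than the paper, which leaves that implicit, but it does not constitute a different approach.
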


\begin{proof}
By definition,
\[
\boldsymbol{x}_{t}^{(i,b,h,w)}=\frac{1}{\boldsymbol{\sigma}_{t}^{(i)}}[\boldsymbol{g}_{t}^{(i,b,h,w)}]_{+}
\]

From Lemma \ref{lem:norm-bound-of-conv},
\[
[\boldsymbol{g}_{t}^{(i,b,h,w)}]_{+}\leq C_{\log}^{1/\delta}(\cdot)^{2}K_{t-1}\sqrt{m_{t-1}}k
\]

From Lemma \ref{lem:sigma-i-concentration}, 
\begin{align*}
[\boldsymbol{\sigma}_{t}^{(i)}]^{2} & =(1\pm\epsilon_{t})[\sigma_{t}^{*}]^{2}\\
& =\frac{1}{4}(1\pm\epsilon_{t})m_{t-1}k^{2}
\end{align*}

Then
\begin{align*}
|\boldsymbol{x}_{t}^{(i,b,h,w)}|\leq & \frac{C_{\log}^{1/\delta}(\cdot)^{2}K_{t-1}\sqrt{m_{t-1}}k}{\sqrt{\frac{1}{4}(1\pm\epsilon_{t})m_{t-1}k^{2}}}\\
\leq & \frac{C_{\log}^{1/\delta}(\cdot)^{2}K_{t-1}\sqrt{m_{t-1}}k}{\sqrt{\frac{1}{4}(1-\epsilon_{t})m_{t-1}k^{2}}}\\
\leq & 2\frac{C_{\log}^{1/\delta}(\cdot)^{2}K_{t-1}}{\sqrt{(1-\epsilon_{t})}}\\
\rightarrow & \frac{C_{\log}^{1/\delta}(\cdot)^{2}K_{t-1}}{\sqrt{(1-\epsilon_{t})}}\quad\textrm{absorb \ensuremath{2} into \ensuremath{C_{\log}^{1/\delta}(\cdot)}}
\end{align*}
Therefore, 
\begin{align*}
& K_{t}\triangleq\frac{C_{\log}^{1/\delta}(\cdot)^{2}K_{t-1}}{\sqrt{(1-\epsilon_{t})}}\\
\Rightarrow & K_{t}=C_{\log}^{1/\delta}(\cdot)^{2t}\prod_{j=1}^{t}(1-\epsilon_{j})^{-1/2}K_{0}
\end{align*}
\end{proof}
Combining all above together, we are now ready to prove Eq. (\ref{eq:final-prove-target}).

Denote $z_{0}=1$. It is trivial to see that $z_{0}\|\boldsymbol{x}_{0}\|^{2}=z_{0}\|\bar{\boldsymbol{x}}_{t}\|^{2}$.
By induction, suppose at layer $t$, we already have $z_{t-1}\|\boldsymbol{x}_{t-1}\|^{2}=\|\bar{\boldsymbol{x}}_{t-1}\|^{2}$.
Using Lemma \ref{lem:expect-theta-norm-exchange},
\begin{align*}
\mathbb{E}_{\theta}\|\bar{\boldsymbol{x}}_{t}\|^{2}= & \mathbb{E}_{\theta}\|\left[\boldsymbol{\theta}_{t}*\bar{\boldsymbol{x}}_{t-1}\right]_{+}\|^{2}\\
= & \mathbb{E}_{\theta}\|\left[\boldsymbol{\theta}_{t}*z_{t-1}\boldsymbol{x}_{t-1}\right]_{+}\|^{2}\\
= & z_{t-1}\mathbb{E}_{\theta}\|\left[\boldsymbol{\theta}_{t}*\boldsymbol{x}_{t-1}\right]_{+}\|^{2}\\
= & z_{t-1}\mathbb{E}_{\theta}\|\left[\boldsymbol{g}_{t}\right]_{+}\|^{2}
\end{align*}

On the other hand, from Lemma \ref{lem:sigma-xtp1-equal-gt-norm},
\begin{align*}
\bar{\sigma}_{t}^{2}z_{t-1}\|\boldsymbol{x}_{t}\|^{2}= & z_{t-1}\frac{\bar{\sigma}_{t}^{2}}{(\sigma_{t}^{*})^{2}}(\sigma_{t}^{*})^{2}\|\boldsymbol{x}_{t}\|^{2}\\
= & z_{t-1}(1\pm\frac{\epsilon_{t}}{\sqrt{m_{t}}})(\sigma_{t}^{*})^{2}\|\boldsymbol{x}_{t}\|^{2}\quad\textrm{Lemma [lem:sigma-i-concentration]}\\
= & z_{t-1}(1\pm\frac{\epsilon_{t}}{\sqrt{m_{t}}})\frac{1}{1\pm\epsilon_{t}}\|\left[\boldsymbol{g}_{t}\right]_{+}\|^{2}\ .
\end{align*}
Therefore,
\[
\mathbb{E}_{\theta}\{\bar{\sigma}_{t}^{2}z_{t-1}\|\boldsymbol{x}_{t}\|^{2}\}=(1\pm\frac{\epsilon_{t}}{\sqrt{m_{t}}})\frac{1}{1\pm\epsilon_{t}}\mathbb{E}_{\theta}\|\bar{\boldsymbol{x}}_{t}\|^{2}
\]
By taking 
\begin{align*}
z_{t} & \triangleq\bar{\sigma}_{t}^{2}z_{t-1}/[(1\pm\frac{\epsilon_{t}}{\sqrt{m_{t}}})\frac{1}{1\pm\epsilon_{t}}]\ ,
\end{align*}
we complete the induction of $z_{t}\|\boldsymbol{x}_{t}\|^{2}=\|\bar{\boldsymbol{x}}_{t}\|^{2}$
for all $t$.

Chaining $t=\{1,2,\cdots,L\}$, we get
\begin{align*}
\mathbb{E}_{\theta}\{(\prod_{t=1}^{L}\bar{\sigma}_{t}^{2})\|\boldsymbol{x}_{L}\|^{2}\}=\prod_{t=1}^{L} & \left[(1\pm\frac{\epsilon_{t}}{\sqrt{m_{t}}})\frac{1}{1\pm\epsilon_{t}}\right]\mathbb{E}_{\theta}\|\bar{\boldsymbol{x}}_{L}\|^{2}\ ,
\end{align*}
where
\begin{align*}
\epsilon_{t}\triangleq & 4C_{\log}^{1/\delta}(\cdot)^{5}\frac{1}{\sqrt{BHW}}K_{t-1}^{2}\\
K_{t}\triangleq & C_{\log}^{1/\delta}(\cdot)^{2t}\prod_{j=1}^{t}(1-\epsilon_{j})^{-1/2}K_{0}\ .
\end{align*}

Finally, integrate everything together, we have proved that, with
probability at least $1-\delta$, 
\begin{align*}
(\prod_{t=1}^{L}\bar{\sigma}_{t}^{2})\mathbb{E}_{\theta}\{\|\boldsymbol{x}_{L}\|^{2}\} & =\prod_{t=1}^{L}\left[(1\pm\frac{\epsilon_{t}}{\sqrt{m_{t}}})\frac{1}{1\pm\epsilon_{t}}\right]\mathbb{E}_{\theta}\|\bar{\boldsymbol{x}}_{L}\|^{2}\ .
\end{align*}
That is,
\begin{align*}
& \prod_{t=1}^{L}\left[(1-\frac{\epsilon_{t}}{\sqrt{m_{t}}})\frac{1}{1+\epsilon}\right]\leq\\
& \qquad\frac{(\prod_{t=1}^{L}\bar{\sigma}_{t}^{2})\mathbb{E}_{\theta}\{\|\boldsymbol{x}_{L}\|^{2}\}}{\mathbb{E}_{\theta}\|\bar{\boldsymbol{x}}_{L}\|^{2}}\\
& \qquad\qquad\leq\prod_{t=1}^{L}\left[(1+\frac{\epsilon_{t}}{\sqrt{m_{t}}})\frac{1}{1-\epsilon_{t}}\right]\ .
\end{align*}

To further simply the above results, we consider the asymptotic case
where $BHW$ is large enough. Then $\epsilon_{t}$ will be a small
number. By first order approximation of binomial expansion, $(1+\epsilon)^{L}\approx1+L\epsilon+\mathcal{O}(\epsilon^{2})$.
To see that $\epsilon_{t}$ is bounded by a small constant, we denote
$\gamma_{t}\triangleq\max_{j\in[1,t]}\epsilon_{j}$. Then 
\begin{align*}
K_{t} \leq & \mathcal{O}[(1+\frac{t-1}{2}\gamma_{t-1})K_{0}]
\end{align*}
\begin{equation}
\gamma_{t} \leq \mathcal{O}\{\frac{K_{0}}{\sqrt{BHW}}[(1+\frac{(t-1)}{2}\gamma_{t-1}]\}\ .\label{eq:recursive-inequality-gamma-t}
\end{equation}
By the recursive equation Eq. (\ref{eq:recursive-inequality-gamma-t}),
when $\gamma_{t-1} \leq \frac{2}{L-1}$, 
\begin{align*}
\gamma_{t} & \leq \mathcal{O}\{\frac{K_{0}}{\sqrt{BHW}}[(1+\frac{(t-1)}{2}\gamma_{t-1}]\}\\
& \leq\mathcal{O}\{\frac{2K_{0}}{\sqrt{BHW}}\}\ .
\end{align*}
Therefore, by taking $\frac{2K_{0}}{\sqrt{BHW}}\leq\frac{2}{L}$,
that is $BHW\geq\mathcal{O}\{L^{2}K_{0}^{2}\}$, we have
\[
\epsilon=\max\epsilon_{t}\leq\mathcal{O}\{\frac{2K_{0}}{\sqrt{BHW}}\}
\]
to be a small number. 

When $\epsilon$ is a small number, 
\begin{align*}
\frac{(\prod_{t=1}^{L}\bar{\sigma}_{t}^{2})\mathbb{E}_{\theta}\{\|\boldsymbol{x}_{L}\|^{2}\}}{\mathbb{E}_{\theta}\|\bar{\boldsymbol{x}}_{L}\|^{2}}\leq & \prod_{t=1}^{L}\left[(1+\frac{\epsilon_{t}}{\sqrt{m_{t}}})\frac{1}{1-\epsilon_{t}}\right]\\
\leq & (1+\epsilon)^{L}(1-\epsilon_{t})^{-L}\\
\approx & (1+L\epsilon)^{2}\ .
\end{align*}
Similarly,
\[
\frac{(\prod_{t=1}^{L}\bar{\sigma}_{t}^{2})\mathbb{E}_{\theta}\{\|\boldsymbol{x}_{L}\|^{2}\}}{\mathbb{E}_{\theta}\|\bar{\boldsymbol{x}}_{L}\|^{2}}\geq(1-L\epsilon)^{2}\ .
\]

\clearpage \newpage

\section{One Big Table of Networks on ImageNet}
\label{sec:one-big-table}

\begin{longtable}[c]{llllllll}
  \toprule 
\multirow{2}{*}{model} & \multirow{2}{*}{resolution} & \multirow{2}{*}{\# params} & \multirow{2}{*}{FLOPs} & \multirow{2}{*}{Top-1 Acc} & \multicolumn{3}{c}{latency(ms)}\tabularnewline
\cmidrule{6-8} \cmidrule{7-8} \cmidrule{8-8} 
 &  &  &  &  & V100 & T4 & Pixel2\tabularnewline
\midrule 
RegNetY-200MF & 224 & 3.2\,M & 200\,M & 70.4\% & 0.22 & 0.12 & 118.17\tabularnewline
\midrule 
RegNetY-400MF & 224 & 4.3\,M & 400\,M & 74.1\% & 0.44 & 0.17 & 181.09\tabularnewline
\midrule 
RegNetY-600MF & 224 & 6.1\,M & 600\,M & 75.5\% & 0.25 & 0.21 & 173.19\tabularnewline
\midrule 
RegNetY-800MF & 224 & 6.3\,M & 800\,M & 76.3\% & 0.31 & 0.22 & 202.66\tabularnewline
\midrule 
ResNet-18 & 224 & 11.7\,M & 1.8\,G & 70.9\% & 0.13 & 0.06 & 158.70\tabularnewline
\midrule 
ResNet-34 & 224 & 21.8\,M & 3.6\,G & 74.4\% & 0.22 & 0.11 & 280.44\tabularnewline
\midrule 
ResNet-50 & 224 & 25.6\,M & 4.1\,G & 77.4\% & 0.40 & 0.20 & 502.43\tabularnewline
\midrule 
ResNet-101 & 224 & 44.5\,M & 7.8\,G & 78.3\% & 0.66 & 0.32 & 937.11\tabularnewline
\midrule 
ResNet-152 & 224 & 60.2\,M & 11.5\,G & 79.2\% & 0.94 & 0.46 & 1261.97\tabularnewline
\midrule 
EfficientNet-B0 & 224 & 5.3\,M & 390\,M & 76.3\% & 0.35 & 0.62 & 160.72\tabularnewline
\midrule 
EfficientNet-B1 & 240 & 7.8\,M & 700\,M & 78.8\% & 0.55 & 1.02 & 254.26\tabularnewline
\midrule 
EfficientNet-B2 & 260 & 9.2\,M & 1.0\,G & 79.8\% & 0.64 & 1.21 & 321.45\tabularnewline
\midrule 
EfficientNet-B3 & 300 & 12.0\,M & 1.8\,G & 81.1\% & 1.12 & 1.86 & 569.30\tabularnewline
\midrule 
EfficientNet-B4 & 380 & 19.0\,M & 4.2\,G & 82.6\% & 2.33 & 3.66 & 1252.79\tabularnewline
\midrule 
EfficientNet-B5 & 456 & 30.0\,M & 9.9\,G & 83.3\% & 4.49 & 6.99 & 2580.25\tabularnewline
\midrule 
EfficientNet-B6 & 528 & 43.0\,M & 19.0\,G & 84.0\% & 7.64 & 12.36 & 4287.81\tabularnewline
\midrule 
EfficientNet-B7 & 600 & 66.0\,M & 37.0\,G & 84.4\% & 13.73 & $\dagger$ & 8615.92\tabularnewline
\midrule 
MobileNetV2-0.25 & 224 & 1.5\,M & 44\,M & 51.8\% & 0.08 & 0.04 & 16.71\tabularnewline
\midrule 
MobileNetV2-0.5 & 224 & 2.0\,M & 108\,M & 64.4\% & 0.10 & 0.05 & 26.99\tabularnewline
\midrule 
MobileNetV2-0.75 & 224 & 2.6\,M & 226\,M & 69.4\% & 0.15 & 0.08 & 49.78\tabularnewline
\midrule 
MobileNetV2-1.0 & 224 & 3.5\,M & 320\,M & 72.0\% & 0.17 & 0.08 & 65.59\tabularnewline
\midrule 
MobileNetV2-1.4 & 224 & 6.1\,M & 610\,M & 74.7\% & 0.24 & 0.12 & 110.70\tabularnewline
\midrule 
MnasNet-1.0 & 224 & 4.4\,M & 330\,M & 74.2\% & 0.17 & 0.11 & 65.50\tabularnewline
\midrule 
DNANet-a & 224 & 4.2\,M & 348\,M & 77.1\% & 0.29 & 0.60 & 157.94\tabularnewline
\midrule 
DNANet-b & 224 & 4.9\,M & 406\,M & 77.5\% & 0.37 & 0.77 & 173.66\tabularnewline
\midrule 
DNANet-c & 224 & 5.3\,M & 466\,M & 77.8\% & 0.37 & 0.81 & 194.27\tabularnewline
\midrule 
DNANet-d & 224 & 6.4\,M & 611\,M & 78.4\% & 0.54 & 1.10 & 248.08\tabularnewline
\midrule 
DFNet-1 & 224 & 8.5\,M & 746\,M & 69.8\% & 0.07 & 0.04 & 82.87\tabularnewline
\midrule 
DFNet-2 & 224 & 18.0\,M & 1.8\,G & 73.9\% & 0.12 & 0.07 & 168.04\tabularnewline
\midrule 
DFNet-2a & 224 & 18.1\,M & 2.0\,G & 76.0\% & 0.19 & 0.09 & 223.20\tabularnewline
\midrule 
OFANet-9ms & 118 & 5.2\,M & 313\,M & 75.3\% & 0.14 & 0.13 & 82.69\tabularnewline
\midrule 
OFANet-11ms & 192 & 6.2\,M & 352\,M & 76.1\% & 0.17 & 0.19 & 94.17\tabularnewline
\midrule 
OFANet-389M(+) & 224 & 8.4\,M & 389\,M & 79.1\% & 0.26 & 0.49 & 116.34\tabularnewline
\midrule 
OFANet-482M(+) & 224 & 9.1\,M & 482\,M & 79.6\% & 0.33 & 0.57 & 142.76\tabularnewline
\midrule 
OFANet-595M(+) & 236 & 9.1\,M & 595\,M & 80.0\% & 0.41 & 0.61 & 150.83\tabularnewline
\midrule 
OFANet-389M{*} & 224 & 8.4\,M & 389\,M & 76.3\% & 0.26 & 0.49 & 116.34\tabularnewline
\midrule 
OFANet-482M{*} & 224 & 9.1\,M & 482\,M & 78.8\% & 0.33 & 0.57 & 142.76\tabularnewline
\midrule 
OFANet-595M{*} & 236 & 9.1\,M & 595\,M & 79.8\% & 0.41 & 0.61 & 150.83\tabularnewline
\midrule 
DenseNet-121 & 224 & 8.0\,M & 2.9\,G & 74.7\% & 0.53 & 0.43 & 395.51\tabularnewline
\midrule 
DenseNet-161 & 224 & 28.7\,M & 7.8\,G & 77.7\% & 1.06 & 0.50 & 991.61\tabularnewline
\midrule 
DenseNet-169 & 224 & 14.1\,M & 3.4\,G & 76.0\% & 0.69 & 0.65 & 490.24\tabularnewline
\midrule 
DenseNet-201 & 224 & 20.0\,M & 4.3\,G & 77.2\% & 0.89 & 1.10 & 642.98\tabularnewline
\midrule 
ResNeSt-50 & 224 & 27.5\,M & 5.4\,G & 81.1\% & 0.76 & $\ddagger$ & 615.77\tabularnewline
\midrule 
ResNeSt-101 & 224 & 48.3\,M & 10.2\,G & 82.3\% & 1.40 & $\ddagger$ & 1130.59\tabularnewline
\midrule 
ZenNet-0.1ms & 224 & 30.1\,M & 1.7\,G & 77.8\% & 0.10 & 0.08 & 181.7 \tabularnewline
\midrule 
ZenNet-0.2ms & 224 & 49.7\,M & 3.4\,G & 80.8\% & 0.20 & 0.16 & 357.4 \tabularnewline
\midrule 
ZenNet-0.3ms & 224 & 85.4\,M & 4.9\,G & 81.5\% & 0.30 & 0.26 & 517.0 \tabularnewline
\midrule 
ZenNet-0.5ms & 224 & 118\,M & 8.3\,G & 82.7\% & 0.50 & 0.41 & 798.7 \tabularnewline
\midrule 
ZenNet-0.8ms & 224 & 183\,M & 13.9\,G & 83.0\% & 0.80 & 0.57 & 1365.0 \tabularnewline
\midrule 
ZenNet-1.2ms & 224 & 180\,M & 22.0\,G & 83.6\% & 1.20 & 0.85 & 2051.1 \tabularnewline
\midrule 
ZenNet-400M-SE & 224 & 5.7\,M & 410\,M & 78.0\% & 0.248 & 0.39 & 87.9 \tabularnewline
\midrule 
ZenNet-600M-SE & 224 & 7.1\,M & 611\,M & 79.1\% & 0.358 & 0.52 & 128.6 \tabularnewline
\midrule 
ZenNet-900M-SE & 224 & 13.3\,M & 926\,M & 80.8\% & 0.55 & 0.55 & 215.68\tabularnewline
\bottomrule \\
  \caption{One big table of all networks referred in this work. \\
  $+$: OFANet trained using supernet parameters as initialization. \\
  $*$: OFANet trained from scratch. We adopt this setting for fair comparison.\\
  $\dagger$: fail to run due to out of memory. \\
  $\ddagger$: official model implementation not supported by TensorRT.}
  \label{tab:one-big-table}
\end{longtable}

\clearpage \newpage

\section{Detail Structure of ZenNets}

We list detail structure in Table \ref{tab:struct-ZenNet-0.1ms}, \ref{tab:struct-ZenNet-0.2ms}, \ref{tab:struct-ZenNet-0.3ms}, \ref{tab:struct-ZenNet-0.5ms}, \ref{tab:struct-ZenNet-0.8ms}, \ref{tab:struct-ZenNet-1.2ms}, \ref{tab:struct-ZenNet-400M-SE}, \ref{tab:struct-ZenNet-600M-SE}, \ref{tab:struct-ZenNet-900M-SE}, \ref{tab:struct-ZenNet-1M}, \ref{tab:struct-ZenNet-2M}.

The 'block' column is the block type. 'Conv' is the standard convolution layer followed by BN and RELU. 'Res' is the residual block used in ResNet-18. 'Btn' is the residual bottleneck block used in ResNet-50. 'MB' is the MobileBlock used in MobileNet and EfficientNet. To be consistent with 'Btn' block, each 'MB' block is stacked by two MobileBlocks. That is, the kxk full convolutional layer in 'Btn' block is replaced by depth-wise convolution in 'MB' block. 'kernel' is the kernel size of kxk convolution layer in each block. 'in', 'out' and 'bottleneck' are numbers of input channels, output channels and bottleneck channels respectively. 'stride' is the stride of current block. '\# layers' is the number of duplication of current block type.

\begin{table}[!h]
 \begin{center}
   \begin{tabular}{lllllll}
     \toprule 
     block & kernel & in & out & stride & bottleneck & \# layers\tabularnewline
     \midrule
     \midrule
     Conv & 3 & 3 & 24 & 2 & - & 1\tabularnewline
     \midrule
     Res & 3 & 24 & 32 & 2 & 64 & 1\tabularnewline
     \midrule
     Res & 5 & 32 & 64 & 2 & 32 & 1\tabularnewline
     \midrule
     Res & 5 & 64 & 168 & 2 & 96 & 1\tabularnewline
     \midrule
     Btn & 5 & 168 & 320 & 1 & 120 & 1\tabularnewline
     \midrule
     Btn & 5 & 320 & 640 & 2 & 304 & 3\tabularnewline
     \midrule
     Btn & 5 & 640 & 512 & 1 & 384 & 1\tabularnewline
     \midrule
     Conv & 1 & 512 & 2384 & 1 & - & 1\tabularnewline
     \bottomrule
     \end{tabular}
 \end{center}
 \caption{ZenNet-0.1ms}
 \label{tab:struct-ZenNet-0.1ms}
\end{table}

\begin{table}[!h]
 \begin{center}
   \begin{tabular}{lllllll}
     \toprule 
     block & kernel & in & out & stride & bottleneck & \# layers\tabularnewline
     \midrule
     \midrule
     Conv & 3 & 3 & 24 & 2 & - & 1\tabularnewline
     \midrule
     Btn & 5 & 24 & 32 & 2 & 32 & 1\tabularnewline
     \midrule
     Btn & 7 & 32 & 104 & 2 & 64 & 1\tabularnewline
     \midrule
     Btn & 5 & 104 & 512 & 2 & 160 & 1\tabularnewline
     \midrule
     Btn & 5 & 512 & 344 & 1 & 192 & 1\tabularnewline
     \midrule
     Btn & 5 & 344 & 688 & 2 & 320 & 4\tabularnewline
     \midrule
     Btn & 5 & 688 & 680 & 1 & 304 & 3\tabularnewline
     \midrule
     Conv & 1 & 680 & 2552 & 1 & - & 1\tabularnewline
     \bottomrule
     \end{tabular}
 \end{center}
 \caption{ZenNet-0.2ms}
 \label{tab:struct-ZenNet-0.2ms}
\end{table}

\begin{table}[!h]
 \begin{center}
   \begin{tabular}{lllllll}
     \toprule 
     block & kernel & in & out & stride & bottleneck & \# layers\tabularnewline
     \midrule
     \midrule
Conv & 3 & 3 & 24 & 2 & - & 1\tabularnewline
\midrule
Btn & 5 & 24 & 64 & 2 & 32 & 1\tabularnewline
\midrule
Btn & 3 & 64 & 128 & 2 & 128 & 1\tabularnewline
\midrule
Btn & 7 & 128 & 432 & 2 & 128 & 1\tabularnewline
\midrule
Btn & 5 & 432 & 272 & 1 & 160 & 1\tabularnewline
\midrule
Btn & 5 & 272 & 848 & 2 & 384 & 4\tabularnewline
\midrule
Btn & 5 & 848 & 848 & 1 & 320 & 3\tabularnewline
\midrule
Btn & 5 & 848 & 456 & 1 & 320 & 3\tabularnewline
\midrule
Conv & 1 & 456 & 6704 & 1 & - & 1\tabularnewline
     \bottomrule
     \end{tabular}
 \end{center}
 \caption{ZenNet-0.3ms}
 \label{tab:struct-ZenNet-0.3ms}
\end{table}

\begin{table}[!h]
 \begin{center}
   \begin{tabular}{lllllll}
     \toprule 
     block & kernel & in & out & stride & bottleneck & \# layers\tabularnewline
     \midrule
     \midrule
Conv & 3 & 3 & 8 & 2 & - & 1\tabularnewline
\midrule
Btn & 7 & 8 & 64 & 2 & 32 & 1\tabularnewline
\midrule
Btn & 3 & 64 & 152 & 2 & 128 & 1\tabularnewline
\midrule
Btn & 5 & 152 & 640 & 2 & 192 & 4\tabularnewline
\midrule
Btn & 5 & 640 & 640 & 1 & 192 & 2\tabularnewline
\midrule
Btn & 5 & 640 & 1536 & 2 & 384 & 4\tabularnewline
\midrule
Btn & 5 & 1536 & 816 & 1 & 384 & 3\tabularnewline
\midrule
Btn & 5 & 816 & 816 & 1 & 384 & 3\tabularnewline
\midrule
Conv & 1 & 816 & 5304 & 1 & - & 1\tabularnewline
     \bottomrule
     \end{tabular}
 \end{center}
 \caption{ZenNet-0.5ms}
 \label{tab:struct-ZenNet-0.5ms}
\end{table}

\begin{table}[!h]
 \begin{center}
   \begin{tabular}{lllllll}
     \toprule 
     block & kernel & in & out & stride & bottleneck & \# layers\tabularnewline
     \midrule
     \midrule
     Conv & 3 & 3 & 16 & 2 & - & 1\tabularnewline
     \midrule
     Btn & 5 & 16 & 64 & 2 & 64 & 1\tabularnewline
     \midrule
     Btn & 3 & 64 & 240 & 2 & 128 & 2\tabularnewline
     \midrule
     Btn & 7 & 240 & 640 & 2 & 160 & 3\tabularnewline
     \midrule
     Btn & 7 & 640 & 768 & 1 & 192 & 4\tabularnewline
     \midrule
     Btn & 5 & 768 & 1536 & 2 & 384 & 5\tabularnewline
     \midrule
     Btn & 5 & 1536 & 1536 & 1 & 384 & 3\tabularnewline
     \midrule
     Btn & 5 & 1536 & 2304 & 1 & 384 & 5\tabularnewline
     \midrule
     Conv & 1 & 2304 & 4912 & 1 & - & 1\tabularnewline
     \bottomrule
     \end{tabular}
 \end{center}
 \caption{ZenNet-0.8ms}
 \label{tab:struct-ZenNet-0.8ms}
\end{table}

\begin{table}[!h]
 \begin{center}
   \begin{tabular}{lllllll}
     \toprule 
     block & kernel & in & out & stride & bottleneck & \# layers\tabularnewline
     \midrule
     \midrule
     Conv & 3 & 3 & 32 & 2 & - & 1\tabularnewline
     \midrule
     Btn & 5 & 32 & 80 & 2 & 32 & 1\tabularnewline
     \midrule
     Btn & 7 & 80 & 432 & 2 & 128 & 5\tabularnewline
     \midrule
     Btn & 7 & 432 & 640 & 2 & 192 & 3\tabularnewline
     \midrule
     Btn & 7 & 640 & 1008 & 1 & 160 & 5\tabularnewline
     \midrule
     Btn & 7 & 1008 & 976 & 1 & 160 & 4\tabularnewline
     \midrule
     Btn & 5 & 976 & 2304 & 2 & 384 & 5\tabularnewline
     \midrule
     Btn & 5 & 2304 & 2496 & 1 & 384 & 5\tabularnewline
     \midrule
     Conv & 1 & 2496 & 3072 & 1 & - & 1\tabularnewline
     \bottomrule
     \end{tabular}
 \end{center}
 \caption{ZenNet-1.2ms}
 \label{tab:struct-ZenNet-1.2ms}
\end{table}

\begin{table}[!h]
 \begin{center}
   \begin{tabular}{llllllll}
     \toprule 
     block & kernel & in & out & stride & bottleneck & expansion & \# layers\tabularnewline
     \midrule
     \midrule
Conv & 3 & 3 & 16 & 2 & - & - & 1 \tabularnewline
\midrule
MB & 7 & 16 & 40 & 2 & 40 & 1 & 1 \tabularnewline
\midrule
MB & 7 & 40 & 64 & 2 & 64 & 1 & 1 \tabularnewline
\midrule
MB & 7 & 64 & 96 & 2 & 96 & 4 & 5 \tabularnewline
\midrule
MB & 7 & 96 & 224 & 2 & 224 & 2 & 5 \tabularnewline
\midrule
Conv & 1 & 224 & 2048 & 1 & - & - & 1 \tabularnewline
     \bottomrule
     \end{tabular}        
 \end{center}
 \caption{ZenNet-400M-SE}
 \label{tab:struct-ZenNet-400M-SE}
\end{table}

\begin{table}[!h]
 \begin{center}
   \begin{tabular}{llllllll}
     \toprule 
     block & kernel & in & out & stride & bottleneck & expansion & \# layers\tabularnewline
     \midrule
     \midrule
Conv & 3 & 3 & 24 & 2 & - & - & 1 \tabularnewline
\midrule
MB & 7 & 24 & 48 & 2 & 48 & 1 & 1 \tabularnewline
\midrule
MB & 7 & 48 & 72 & 2 & 72 & 2 & 1 \tabularnewline
\midrule
MB & 7 & 72 & 96 & 2 & 88 & 6 & 5 \tabularnewline
\midrule
MB & 7 & 96 & 192 & 2 & 168 & 4 & 5 \tabularnewline
\midrule
Conv & 1 & 192 & 2048 & 1 & - & - & 1 \tabularnewline
     \bottomrule
     \end{tabular}        
 \end{center}
 \caption{ZenNet-600M-SE}
 \label{tab:struct-ZenNet-600M-SE}
\end{table}

\begin{table}[!h]
 \begin{center}
   \begin{tabular}{llllllll}
     \toprule 
     block & kernel & in & out & stride & bottleneck & expansion & \# layers\tabularnewline
     \midrule
     \midrule
Conv & 3 & 3 & 16 & 2 & - & - & 1 \tabularnewline
\midrule
MB & 7 & 16 & 48 & 2 & 72 & 1 & 1 \tabularnewline
\midrule
MB & 7 & 48 & 72 & 2 & 64 & 2 & 3 \tabularnewline
\midrule
MB & 7 & 72 & 152 & 2 & 144 & 2 & 3 \tabularnewline
\midrule
MB & 7 & 152 & 360 & 2 & 352 & 2 & 4 \tabularnewline
\midrule
MB & 7 & 360 & 288 & 1 & 264 & 4 & 3 \tabularnewline
\midrule
Conv & 1 & 288 & 2048 & 1 & - & - & 1 \tabularnewline
     \bottomrule
     \end{tabular}        
 \end{center}
 \caption{ZenNet-900M-SE}
 \label{tab:struct-ZenNet-900M-SE}
\end{table}

\begin{table}[!h]
 \begin{center}
   \begin{tabular}{lllllll}
     \toprule 
     block & kernel & in & out & stride & bottleneck & \# layers\tabularnewline
     \midrule
     \midrule 
     Conv & 3 & 3 & 88 & 1 & - & 1\tabularnewline
     \midrule 
     Btn & 7 & 88 & 120 & 1 & 16 & 1\tabularnewline
     \midrule 
     Btn & 7 & 120 & 192 & 2 & 16 & 3\tabularnewline
     \midrule 
     Btn & 5 & 192 & 224 & 1 & 24 & 4\tabularnewline
     \midrule 
     Btn & 5 & 224 & 96 & 2 & 24 & 2\tabularnewline
     \midrule 
     Btn & 3 & 96 & 168 & 2 & 40 & 3\tabularnewline
     \midrule 
     Btn & 3 & 168 & 112 & 1 & 48 & 3\tabularnewline
     \midrule 
     Conv & 1 & 112 & 512 & 1 & - & 1\tabularnewline
     \bottomrule
     \end{tabular}
 \end{center}
 \caption{ZenNet-1.0M for CIFAR-10/CIFAR-100}
 \label{tab:struct-ZenNet-1M}
\end{table}

\begin{table}[!h]
 \begin{center}
   \begin{tabular}{lllllll}
     \toprule 
     block & kernel & in & out & stride & bottleneck & \# layers\tabularnewline
     \midrule
     \midrule 
     Conv & 3 & 3 & 32 & 1 & - & 1\tabularnewline
     \midrule 
     Btn & 5 & 32 & 120 & 1 & 40 & 1\tabularnewline
     \midrule 
     Btn & 5 & 120 & 176 & 2 & 32 & 3\tabularnewline
     \midrule 
     Btn & 7 & 176 & 272 & 1 & 24 & 3\tabularnewline
     \midrule 
     Btn & 3 & 272 & 176 & 1 & 56 & 3\tabularnewline
     \midrule 
     Btn & 3 & 176 & 176 & 1 & 64 & 4\tabularnewline
     \midrule 
     Btn & 5 & 176 & 216 & 2 & 40 & 2\tabularnewline
     \midrule 
     Btn & 3 & 216 & 72 & 2 & 56 & 2\tabularnewline
     \midrule 
     Conv & 1 & 72 & 512 & 1 & - & 1\tabularnewline
     \bottomrule
     \end{tabular}        
 \end{center}
 \caption{ZenNet-2.0M for CIFAR-10/CIFAR-100}
 \label{tab:struct-ZenNet-2M}
\end{table}

\end{document}